\def\eqref#1{equation~\ref{#1}}
\def\1{\bm{1}}
\DeclareMathAlphabet{\mathsfit}{\encodingdefault}{\sfdefault}{m}{sl}
\SetMathAlphabet{\mathsfit}{bold}{\encodingdefault}{\sfdefault}{bx}{n}
\DeclareMathOperator*{\argmin}{arg\,min}
\definecolor{tabblue}{RGB}{31,119,180}
\definecolor{mplgray}{RGB}{128,128,128}
\theoremstyle{plain}
\newtheorem{theorem}{Theorem}
\newtheorem{proposition}[theorem]{Proposition}
\newtheorem{copyproposition}[theorem]{Proposition}
\newtheorem*{proposition*}{Proposition}
\theoremstyle{definition}
\theoremstyle{remark}
\title{Rational Tuning of LLM Cascades via Probabilistic Modeling}
\author{Michael J. Zellinger and Matt Thomson}
\date{}
\begin{document}

\maketitle

\begin{abstract}
Understanding the reliability of large language models (LLMs) has recently garnered significant attention. Given LLMs' propensity to hallucinate, as well as their high sensitivity to prompt design, it is already challenging to predict the performance of an individual LLM. However, the problem becomes more complex for compound LLM systems such as cascades, where in addition to each model's standalone performance, we must understand how the error rates of different models interact. In this paper, we present a probabilistic model for the joint performance distribution of a sequence of LLMs, which enables a framework for rationally tuning the confidence thresholds of a LLM cascade using continuous optimization. Compared to selecting confidence thresholds using Bayesian optimization, our parametric Markov-copula model yields more favorable error-cost trade-offs, improving the area under the error-cost curve by 4.3\% on average for cascades with $k\geq 3$ models. In the low-sample regime with $n \leq 30$ training examples, the performance improvement widens to 10.2\%, suggesting that our framework's inductive assumptions about the interactions between the error rates of different LLMs enhance sample efficiency. Overall, our Markov-copula model provides a rational basis for tuning LLM cascade performance and points to the potential of probabilistic methods in analyzing systems of LLMs.
\end{abstract}

\section{Introduction}

As LLMs become workhorses of the modern computing stack, systems of LLMs have received significant attention (\citealp{zaharia2024}; \citealp{chen2024sys}). These approaches make it possible to adapt computational spending to the performance requirements at the query or task level (\citealp{kag2023}; \citealp{chen2023}), yielding significant gains in operational efficiency. These gains are achievable even when accessing LLMs entirely via black-box API calls, by switching between models of different capabilities.

However, moving from single LLMs to LLM systems introduces significant additional complexity. To find the system's optimal operating point, it is important to understand not just the performance of individual models but also the interactions between their error rates. For example, in a simple two-model LLM cascade in which a small model delegates difficult queries to a large model, the large model's error rate increases conditional on receiving a query, since the small model's confidence gating induces an adverse selection (\citealp{zellinger2024}).

In this paper, we present a parametric probabilistic model for the joint distribution of the calibrated confidences of a sequence of $k$ LLMs, providing a rational basis for understanding the performance of LLM cascades. We focus on cascades whose constituent models are ordered by size, from smallest to largest. Our probabilistic model is based on a Markov factorization, leveraging the insight that LLMs similar in size are more predictive of each other's confidence. After using logistic regression to calibrate each LLM's confidence, we account for the pairwise interactions between subsequent LLMs' error rates using bivariate copulas, providing a data-efficient model of cascade performance that performs well with as few as $n \leq 30$ training examples across six benchmarks.

Our Markov-copula model makes it possible to tune the confidence thresholds of an LLM cascade using continuous optimization.

Compared to selecting these thresholds via Bayesian optimization, our Rational Tuning framework yields increasingly better error-cost trade-offs as cascade length grows. For cascades with $k \geq 3$ models, our method improves the area under the error-cost curve by 4.3\% on average. Compared to high-resolution grid search, the improvement is 2.0\%. At the same time, our algorithm significantly improves runtime scaling compared to grid search. For example, we reduce scaling with respect to the cascade length $k$ from exponential to low-order polynomial, making it much faster to tune longer cascades consisting of $k \geq 5$ models.

Relative to the prior literature on LLM cascades, our main contributions are as follows:
\begin{itemize}
    \item We propose a generative probabilistic model for the joint distribution of the calibrated confidences of a sequence of LLMs, based on a Markov factorization, copula modeling, and mixed discrete-continuous marginal distributions. We demonstrate that our model fits the empirical data well: on the test sets, we report average Cramér-von Mises statistics of $\sqrt{n}\text{CvM} = 0.006$ for the copula models and $\sqrt{\text{CvM}} = 4\%$ for the mixed discrete-continuous marginal distributions.

    \item Building on our Markov-copula model, we develop an algorithm for tuning the confidence thresholds of an LLM cascade using continuous optimization, based on an analytic probabilistic model. We demonstrate that as cascade length grows, our method increasingly outperforms the error-cost trade-offs obtained with Bayesian optimization and high-resolution grid search baselines. In addition, relative to grid search our method significantly improves the computational complexity of finding optimal confidence thresholds, turning the dependencies on cascade length and the desired resolution of the error-cost curve from intractable and high-order polynomial into low-order polynomial and linear, respectively.
\end{itemize}
In addition, we present comprehensive evidence that simple hyperparameter-free feature transforms significantly improve the performance of calibrating LLM confidence with logistic regression (\citealp{zellinger2024}), demonstrating a 28.2\% average reduction in expected calibration error across 10 LLMs and 6 benchmarks.

\section{Background and Related Work}

\noindent \textbf{Language Models}: given a predefined token vocabulary $\mathcal{V}$, a large language model (LLM) $M$ defines an autoregressive probability distribution $t \sim p( \cdot | t_1, ..., t_n)$ for the next token $t \in \mathcal{V}$ given a sequence of tokens $(t_1, ..., t_n) \in \mathcal{V}^{n}$. In this work, we focus on the overall input-output behavior of the model $M$. We let $x$ stand for the entire query consisting of tokens $(t_1, ..., t_n)$ and write $M(x)$ for the sequence of tokens $(t_{n+1}, t_{n+2}, ...)$ obtained when repeatedly sampling $t_{j+1} \sim P(\cdot | t_1, ..., t_{j})$ for $j \geq n$ until encountering a stop token $t_\varnothing \in \mathcal{V}$.

\vspace{4mm}

\noindent \textbf{Language Model Cascades}: a length-$k$ LLM cascade $C = M_1 \rightarrow ... \rightarrow M_k$ routes an incoming query $x$ sequentially from model $M_i$ to $M_{i+1}$ based on confidence measures $\Phi_i = \Phi_i(x) \in [0,1]$. When $x$ reaches $M_i$, the cascade returns $M_i(x)$ if $\Phi_i(x) > \phi_i$, where $\phi_i \in (0,1)$ is a confidence threshold for model $M_i$. Otherwise, $C$ forwards the query $x$ to the next model, $M_{i+1}$.
Writing $C_{\text{i:k}}$ for the subcascade $M_i \rightarrow ... \rightarrow M_k$ consisting of the last $k-i+1$ models, the output $C(x)$ of the overall cascade is defined recursively as
\begin{equation}
    C(x) = \begin{cases}
        M_1(x) & \text{~if~} \Phi_1(x) > \phi_1 \text{~or~} $|C| = 1$\\
        C_{\text{2:k}}(x) & \text{~otherwise}, \\
    \end{cases}
\end{equation}
where $|C|$ is the length of the cascade, for example $|C_{2:k}| = k-1$.

Different authors have recently explored LLM cascades. \citet{chen2023} have shown that it is possible to approach the performance of a large LLM at much lower cost by initially sending queries to a small model; \citet{madaan2024} present a flexible cascading approach based on a POMPD router; \citet{yue2024} propose LLM cascades specifically for mathematical reasoning benchmarks; and \citet{gupta2024} consider uncertainty at individual token position within longer generations. While many of these approaches use standard uncertain quantification techniques for LLMs (discussed below), some use trained neural networks for making the decision of forwarding a query $x$ to the next model. Neural network approaches have the potential to make more finegrained distinctions between the capabilities of different LLMs\footnote{Of particular interest is the potential for detecting rare cases when a small model correctly answers a query on which a larger model fails.}, but may require large amounts ($n>1000$) of task-specific training data to perform well.

\citet{jitkrittum2024} discuss the limits of forwarding queries based purely on the confidence level of the current model, proposing to train a cascading decision that takes into account not only the current model's probability of correctness, but also that of the following model. In addition, \citet{wang2024} explore finetuning LLMs to make them more effective as part of a cascade. Other methods for LLM orchestration use routers that directly forward queries to suitable LLMs in a one-to-many architecture (\citealp{ding2024}; \citealp{kag2023}; \citealp{sakota2024}; \citealp{hari2023}). In addition, some work has explored recombining the string outputs of several smaller models to yield improved performance (\citealp{jiang2023}).

\vspace{4mm}

\noindent \textbf{Uncertainty Quantification and Calibration}: LLMs within a cascades require the means to tell ``easy'' queries from ``difficult'' ones. Several authors have proposed methods for quantifying this uncertainty. These methods work in different ways. Some draw on the LLMs' intrinsic next-token probabilities (\citealp{hendrycks2017}; \citealp{plaut2024}), while others use prompting to elicit confidence statements (\citealp{lin2022b}; \citealp{kadavath2022}; \citealp{xiong2024}). Some sample repeatedly from the LLM and measure the consistency between different answers (\citealp{wang2023}; \citealp{manakul2023}; \citealp{farquhar2024}; \citealp{lin2024}), while others train lightweight probes on top of an LLM's hidden embeddings (\citealp{azaria2023}; \citealp{ren2023}, \citealp{chen2024}; \citealp{kossen2024}). Finally, it is even possible to evaluate uncertainty in a largely unsupervised way (\citealp{burns2024}).

\textit{Calibration} of LLM uncertainty refers to the question of whether numerical confidence scores reflect the true probabilities of error. Methods relying on LLMs' next-token probabilities face the challenge that these probabilities are typically overconfident, at least for instruction-tuned LLMs (\citealp{ouyang2022}; \citealp{openai2024}). Although calibration is not required for forwarding queries based on confidence, it is important for accurately predicting error rates and desirable for gaining insights into system performance. Many techniques for calibration have been proposed (\citealp{platt1999}; \citealp{zadrozny02}; \citealp{naeini2015}, \citealp{guo2017}; \citealp{jiang2021}). Temperature scaling, which divides an LLM's log probabilities by a suitable constant factor (typically >1), is often favored for its simplicity.

\vspace{4mm}

\noindent \textbf{Copula Models}: copula models are statistical tools for modeling joint probability distributions. They are widely used in applications. For example, in quantitative finance they are used to price complex securities such as baskets of loans whose repayments depend on multiple borrowers. Mathematically, a \textit{copula} is a joint cumulative distribution function whose marginals all follow the uniform distribution. The idea behind copula modeling is that, in order to specify an arbitrary joint distribution $p(x,y)$, it suffices to specify the marginals $p(x)$, $p(y)$ along with a copula accounting for the correlation between $x$ and $y$. This result is known as
\begin{theorem}[Sklar's Theorem]
\label{thm:sklar}
Let $F_\text{XY}$ be a joint distribution function
with marginals $F_X$ and $F_Y$. Then there exists a copula $C$ such that for all $x,y \in \mathbb{R}$, 
\begin{equation}
\label{eq:copula}
    F_\text{XY}(x,y) = C(F_X(x), F_Y(y)).
\end{equation}
Conversely, if $C$ is a copula and $F_X$ and $F_Y$ are distribution functions, then the distribution function $F_\text{XY}$ defined by (\ref{eq:copula}) is a joint distribution function with marginals $F_X$ and $F_Y$.
\end{theorem}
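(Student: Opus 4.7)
The plan is to prove the two directions separately. For the forward direction (existence of a copula), I would first handle the case where $F_X$ and $F_Y$ are continuous, which is clean via the probability integral transform: setting $U = F_X(X)$, $V = F_Y(Y)$ yields uniform $[0,1]$ random variables, and defining $C(u,v) := F_{\text{XY}}(F_X^{-1}(u), F_Y^{-1}(v))$ with $F_X^{-1}(u) = \inf\{x : F_X(x) \geq u\}$ the generalized (quantile) inverse gives the desired copula. I would then extend to the general case via the \emph{distributional transform}: enlarging the probability space with an independent $V \sim \text{Unif}[0,1]$ and setting $U := F_X(X^-) + V \cdot (F_X(X) - F_X(X^-))$, which is uniform on $[0,1]$ and satisfies $F_X^{-1}(U) = X$ almost surely. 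Applying this jointly to both marginals, I would define $C$ as the joint CDF of $(U_X, U_Y)$, which is automatically a copula, and then verify $F_{\text{XY}}(x,y) = C(F_X(x), F_Y(y))$ pointwise.

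For the reverse direction, given a copula $C$ and CDFs $F_X$, $F_Y$, I would define $F_{\text{XY}}(x,y) := C(F_X(x), F_Y(y))$ and verify the four defining properties of a bivariate distribution function. The correct marginals follow from $F_{\text{XY}}(x,\infty) = C(F_X(x), 1) = F_X(x)$, since copulas satisfy $C(u,1) = u$ and $C(1,v) = v$ by definition of uniform marginals. The boundary limits $F_{\text{XY}}(-\infty,y) = C(0, F_Y(y)) = 0$ and $F_{\text{XY}}(\infty,\infty) = C(1,1) = 1$ follow similarly. Joint right-continuity inherits from the Lipschitz continuity of $C$ (copulas are 1-Lipschitz in each argument) composed with the right-continuous $F_X, F_Y$. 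Finally, the 2-increasing (rectangle) inequality on $F_{\text{XY}}$ follows from the 2-increasing property of $C$ applied at the vertices $(F_X(x_i), F_Y(y_j))$, using monotonicity of the marginals to preserve the ordering of arguments.

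The main obstacle is the distributional transform step in the forward direction, since naively taking $U = F_X(X)$ fails whenever $F_X$ has atoms: the push-forward concentrates mass at jump heights rather than smearing it across the corresponding interval in $[0,1]$, so $U$ is no longer uniform. The randomization trick resolves this by interpolating within each jump using the auxiliary $V$, preserving both uniformity and the identity $F_X^{-1}(U) = X$ a.s. This construction also clarifies that in the discontinuous case the copula $C$ is uniquely determined only on $\text{Range}(F_X) \times \text{Range}(F_Y)$, which is why the theorem asserts existence rather than uniqueness. Once the forward direction is established, the reverse direction reduces to routine verifications from the axioms of a copula and CDF, so I would spend the bulk of the proof write-up justifying the distributional transform carefully.
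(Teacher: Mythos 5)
The paper does not supply its own proof of Sklar's theorem; it simply states the result and refers the reader to \citet{nelsen2006} for the proof. So there is no in-paper argument to compare against, and your proposal should be judged on its own and against the cited reference.

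Your outline is correct, and it takes a genuinely different route from the one in Nelsen. Nelsen's proof (following Sklar's original argument) first constructs a \emph{subcopula} $C'$, defined and uniquely determined on $\mathrm{Ran}(F_X)\times\mathrm{Ran}(F_Y)$ by $C'(F_X(x),F_Y(y))=F_{\text{XY}}(x,y)$, and then invokes an extension lemma (bilinear interpolation between grid points) to extend $C'$ to a bona fide copula on $[0,1]^2$. Your approach instead uses the distributional transform of R\"uschendorf: enlarge the space with an auxiliary uniform $V$, set $U_X=F_X(X^-)+V\bigl(F_X(X)-F_X(X^-)\bigr)$ (and analogously $U_Y$), let $C$ be the joint law of $(U_X,U_Y)$, and check $C(F_X(x),F_Y(y))=F_{\text{XY}}(x,y)$ directly. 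This is cleaner probabilistically and avoids the extension machinery entirely; the cost is that the heavy lifting shifts to the two facts you implicitly rely on, namely that $U_X\sim\mathrm{Unif}[0,1]$ and that $\{U_X\le F_X(x)\}=\{X\le x\}$ almost surely. The latter follows from $F_X^{-1}(U_X)=X$ a.s.\ together with the Galois identity $F_X^{-1}(u)\le x \iff u\le F_X(x)$ for right-continuous $F_X$, and you should spell that chain out explicitly in a full write-up. Two small points to tighten: (i) state whether you use one auxiliary $V$ for both coordinates or two independent $V_1,V_2$ — both work, but the identity $\{U_X\le F_X(x)\}=\{X\le x\}$ a.s.\ must be argued to hold jointly with the corresponding identity for $U_Y$; (ii) in the converse, ``$C(u,1)=u$ by definition of uniform marginals'' is fine if you take the axiomatic definition of a copula (grounded, uniform margins, 2-increasing), but say which definition you are working from, since Lipschitz continuity of $C$ is then a derived property rather than an axiom. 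Your closing remark on uniqueness only on $\mathrm{Ran}(F_X)\times\mathrm{Ran}(F_Y)$ is accurate and is exactly the content that Nelsen's subcopula lemma makes explicit.
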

For a proof and further discussion, see \citet{nelsen2006}. Intuitively, copula modeling builds on the probability integral transform principle: if $X$ is a continuous random variable with distribution function $F_X(\cdot)$, then $F_X(X)$ follows a uniform distribution (\citealp{casellaberger2002}). In our application to LLM cascades, we model the joint probability $p(\phi_{i-1},\phi_{i})$ of the calibrated confidences of models $M_{i}$ and $M_{i-1}$ using a Gumbel copula. This copula depends on a single correlation parameter $\theta$, which can be easily calculated from the rank correlation (Kendall's $\tau$) of the two variables.

\section{Rational Tuning of LLM Cascades via Probabilistic Modeling}

\subsection{Markov-Copula Model}
\label{subsec:markov-copula}

Our probabilistic model for the joint distribution of LLM confidences is based on calibrated confidence scores. We use logistic regression to transform a raw confidence signal $p_\text{raw}$ into the calibrated confidence score
\begin{equation}
    \phi = \Phi_{\theta}(p_\text{raw}),
\end{equation}
where $\theta$ are the parameters of the logistic regression. The calibrated confidence $\phi$ estimates the model's probability of correctness based on the raw confidence signal $p_\text{raw}$. We calibrate each model separately, resulting in functions $\Phi_1$, ..., $\Phi_k$ for the models $M_1, ..., M_k$ of a cascade  $M_1 \rightarrow ... \rightarrow M_k$. See Section \ref{subsec:methodology} for more details. Since the confidence signal $p_\text{raw}$ depends on the query $x$, we also write $\phi = \Phi(x)$ in a slight abuse of notation.

Our probabilistic model for the joint distribution of the calibrated confidences $\Phi_1(x), ..., \Phi_k(x)$ consists of three parts. First, we model the marginal distribution of the calibrated confidence of each individual LLM in the cascade. Second, we model the correlation between the calibrated confidences $\Phi_{i}(x), \Phi_{i+1}(x)$ of adjacent models using copulas. Finally, we construct the full joint distribution by combining the conditional probabilities $p(\phi_{i+1} | \phi_{i})$ using the Markov property.

Specifically, given a cascade $M_1 \rightarrow ... \rightarrow M_k$ with trained confidence calibrators $\Phi_1, ..., \Phi_k$, we first fit parametric univariate distributions $F_i(\phi_i | \theta_i)$ to model the true marginal distributions $\mathbb{P}(\Phi_i \leq \phi_i)$. Second, we account for the correlation between adjacent models by fitting copulas $C_{i,i+1}(\cdot, \cdot)$. Each copula $C_{i,i+1}$ makes it possible to compute the joint distribution $F_{i,i+1}(\cdot, \cdot)$ of $(\Phi_i, \Phi_{i+1})$ via
\begin{equation}
    F_{i,i+1}(\phi_i, \phi_{i+1}) = C_{ij}(F_i(\phi_i), F_j(\phi_j)),
\end{equation}
by Theorem \ref{thm:sklar}. Finally, we estimate joint probabilities $\mathbb{P}(\Phi_1 \leq \phi_1, \Phi_2 \leq \phi_2, ..., \Phi_k \leq \phi_k$) by relying on the Markov assumption
\begin{equation}
    \label{eq:markov_factorization}
    \mathbb{P}(\Phi_i \leq \phi_i | \Phi_{i-1} \leq \phi_{i-1}, \Phi_{i-2} \leq \phi_{i-2}, ..., \Phi_{1} \leq t_{1}) \approx \mathbb{P}(\Phi_i \leq \phi_i | \Phi_{i-1} \leq \phi_{i-1}),
\end{equation}
which implies
\begin{equation}
\label{eq:markov_model}
    \mathbb{P}(\Phi_1 \leq \phi_1, \Phi_2 \leq \phi_2, ..., \Phi_i \leq \phi_i) \approx \mathbb{P}(\Phi_1 \leq \phi_1) \prod_{j=2}^{i} \mathbb{P}(\Phi_j \leq \phi_j | \Phi_{j-1} \leq \phi_{j-1}).
\end{equation}
for any $i = 2, ..., k$ and $\phi_1, ..., \phi_i \in (0,1)$. We study the validity of assumption $(\ref{eq:markov_factorization})$ in Section \ref{subsec:gof}.

\subsection{Parameter Inference for the Probabilistic Model}

In this section, we describe in detail the components of our parametric probabilistic model and how we infer their parameters.

\vspace{4mm}

\noindent \textbf{Continuous-discrete mixture of scaled beta distributions}: to model the marginals of calibrated confidence, we must account for the possibility that LLMs sometimes return perfect confidence $p_\text{raw} = 1.0$, possibly as a result of performance optimizations such as quantization (\citealp{dettmers2022}; \citealp{proskurina2024}). Depending on the LLM and the task, almost half of all queries may return perfect confidence, as is the case of GPT-4o Mini on the MMLU validation set (45.7\%).

To accommodate the resulting discrete probability masses at the minimum and maximum calibrated confidence values $\phi_\text{min}$ and $\phi_\text{max}$, we use a mixed continuous-discrete distribution based on a mixture of two beta distributions. Specifically, we use the distribution function
\begin{equation}
\label{eq:full_marginal_distribution}
    F(\phi | w_1, w_2 , \phi_\text{min}, \phi_\text{max} ; \pi, \alpha_1, \beta_1, \alpha_2, \beta_2 ) = w_\text{min}\delta_{\phi_\text{min}}(\phi) + w_\text{max}\delta_{\phi_\text{max}}(\phi) + (1-w_\text{min} - w_\text{max})F_\text{mixture}(\phi),
\end{equation}
where $\delta_{z}$ is the distribution of a point mass at $z$, and $F_\text{mixture}(\phi)$ is
\begin{equation}
\label{eq:beta_mixture}
    F_\text{mixture}(\phi | \phi_\text{min}, \phi_\text{max} ; \alpha_1, \beta_1; \alpha_2, \beta_2) = \pi F_\beta(\frac{\phi - \phi_\text{min}}{\phi_\text{max}-\phi_\text{min}} | \alpha_1, \beta_1 ) + (1-\pi)F_\beta(\frac{\phi - \phi_\text{min}}{\phi_\text{max}-\phi_\text{min}} | \alpha_2, \beta_2 ).
\end{equation}
Here, $F_\beta( \cdot | \alpha, \beta)$ is the beta distribution with pdf $f_\beta(x | \alpha, \beta) = x^{\alpha-1} (1-x)^{\beta-1} $ for $x \in (0,1)$.

We infer the parameters of the model (\ref{eq:full_marginal_distribution}) as follows. First, we estimate the minimum and maximum calibrated confidences $\phi_\text{min}$ and $\phi_\text{max}$ by their observed minimum and maximum values on the training set. We estimate the corresponding discrete probability masses $w_\text{min}$ and $w_\text{max}$ by simple counting. Finally, to estimate the mixture of beta distributions (\ref{eq:beta_mixture}), we use the expectation-maximization algorithm (\citealp{dempster1977}).

\vspace{4mm}

\noindent \textbf{Gumbel copula}: to model the correlations between the calibrated confidences of pairs of LLMs, we use the Gumbel copula $C_\theta(u, v)$ given by
\begin{equation}
    \label{eq:gumbel}
    C_\theta(u, v) = \exp \left( - \left( \log\left( \frac{1}{u}\right)^\theta + \log \left(\frac{1}{v}\right)^\theta \right)^{\frac{1}{\theta}} \right),.
\end{equation}
where $\theta > 1$ measures the degree of correlation between $u$ and $v$. To fit $\theta$ from empirical data, we use the relationship
\begin{equation}
\label{eq:tau_gumbel}
    \theta = \frac{1}{1-\tau},
\end{equation}
where $\tau$ is Kendall's rank correlation coefficient (\citealp{nelsen2006}).


\subsection{Tuning the Confidence Thresholds}

The purpose of the Markov model (\ref{eq:markov_model}) is to obtain optimal error-cost tradeoffs for an LLM cascade $C$ by tuning the confidence thresholds. We formulate the optimization problem
\begin{equation}
\label{eq:optimization_problem}
    \boldsymbol{\theta}^* = \argmin_{\boldsymbol{\theta}}~ (1-\mathbb{P}_{\boldsymbol{\theta}}(\text{Correct})) + \lambda~ \mathbb{E}_{\boldsymbol{\theta}}[\text{Cost}],
\end{equation}
where $\boldsymbol{\theta} \in \mathbb{R}^{k-1}$ denotes the confidence thresholds $(\phi_1, ..., \phi_{k-1})$. The Lagrange multiplier $\lambda \geq 0$ indicates the user's cost sensitivity. Setting $\lambda = 0$ means that cost is irrelevant, whereas $\lambda > 0$ penalizes the use of expensive models. To compute the efficient frontier of optimal $(\mathbb{P}(\text{Correct}), \mathbb{E}[\text{Cost}])$ tuples, we solve (\ref{eq:optimization_problem}) for different values of the cost sensitivity $\lambda$.

Since $\lambda$ has no known relationship with the expected cost, it is not clear how to choose $\lambda$ to obtain uniform coverage of the efficient frontier. In practice, we start with very small values of $\lambda$ and set
\begin{equation}
    \lambda \gets (1+r)\lambda,
\end{equation}
for some $r>0$, until the cost constraint is stringent enough to make the expected cost equal to the least expensive model's expected cost. Typically, setting $r$ between 0.25 and 1 performs well. For any potential gaps in coverage, we adaptively interpolate the optimal thresholds. Specifically, if $\lambda^{(i)} < \lambda^{(i+1)}$ yield optimal thresholds $\boldsymbol{\theta}^{(i)}$ and $\boldsymbol{\theta}^{(i+1)}$ and the gap $|\boldsymbol{\theta}_j^{(i+1)} -\boldsymbol{\theta}_j^{(i)}| = |\phi_j^{(i+1)} - \phi_j^{(i)}|$ for any individual threshold exceeds probability mass $q$ based on the distribution of the calibrated confidence $\Phi_j$, we insert
\begin{equation}
\label{eq:adaptive_infill}
    \boldsymbol{\theta}^{(i+1/2)} = (\boldsymbol{\theta}^{(i)} + \boldsymbol{\theta}^{(i+1)})/2
\end{equation}
into the list of optimal thresholds between $\boldsymbol{\theta}^{(i)}$ and $\boldsymbol{\theta}^{(i+1)}$. We repeat the infilling procedure (\ref{eq:adaptive_infill}) until no gaps remain at level $q$. We have found $q < 0.2$ to perform well.

\vspace{4mm}

\noindent \textbf{Efficient computation and optimization of the objective}: solving the minimization problem (\ref{eq:optimization_problem}) requires computing a cascade's probability of correctness and expected cost for candidate confidence thresholds $\boldsymbol{\theta} = (\phi_1, ..., \phi_{k-1}) \in \mathbb{R}^{k-1}$. To compute these quantities, we rely on the decompositions (\ref{eq:pcorrect}) and (\ref{eq:expectedcost}) presented in

\begin{proposition}
    \label{prop:expected_correctness_and_cost}
    Consider a cascade $M_1 \rightarrow ... \rightarrow M_k$ with confidence thresholds $(\phi_1, ..., \phi_{k-1})$. Assume that the distribution functions for the calibrated confidences $\Phi_i$ satisfy (\ref{eq:markov_factorization}), for $i=1, 2, ..., k$. Assume further that the expected numbers of input and output tokens, $T_i^{\text{(in)}}$ and $T_i^{\text{(out)}}$, for each model $i$ are independent of the calibrated confidences $\Phi_1, ..., \Phi_k$. Then the probability of correctness $\mathbb{P}(\text{Correct})$ and expected cost $\mathbb{E}[Cost]$ for the cascade are
    \begin{align}
        \mathbb{P}(\text{Correct}) = & \int_{\{\Phi_1 > \phi_1\}} \Phi_1(\omega)  \text{~d}\mathbb{P}(\omega) \label{eq:pcorrect} \\
        & + \sum_{i=2}^{k} \mathbb{P}(\Phi_1 \leq \phi_1) \left( \prod_{j=2}^{i-1} \mathbb{P}(\Phi_{j} \leq \phi_j | \Phi_{j-1} \leq \phi_{j-1}) \right) \int_{\{\Phi_i > \phi_i\}} \Phi_i(\omega)  \text{~d}\mathbb{P}(\omega | \Phi_{i-1} \leq \phi_{i-1}) \nonumber \\
        \mathbb{E}[\text{Cost}] = & ~(1 - \mathbb{P}(\Phi_1 \leq \phi_1)) ~\mathbb{E}[C_1] \label{eq:expectedcost} \\
        & + \sum_{i=2}^{k} \mathbb{P}(\Phi_1 \leq \phi_1) \left( \prod_{j=2}^{i-1} \mathbb{P}(\Phi_{j} \leq \phi_j | \Phi_{j-1} \leq \phi_{j-1}) \right) (1 - \mathbb{P}(\Phi_i \leq \phi_i | \Phi_{i-1} \leq \phi_{i-1})) \sum_{j=1}^{i} \mathbb{E}[C_j],     \nonumber
    \end{align}
    where $C_i$ is the cost per query of model $i$. Specifically, if $\gamma_i^{(\text{in})}$ and $\gamma_i^{(\text{out})}$ are the costs per input and output token, $C_i = \gamma_i^{(\text{in})} T_i^{\text{(in)}} + \gamma_i^{(\text{out})} T_i^{\text{(out)}}$. To simplify the notation, we let $\phi_k := -\infty$ (although there is no confidence threshold for the final model in the cascade).
\end{proposition}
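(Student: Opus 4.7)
The strategy is to partition the sample space by which model in the cascade actually returns the answer, rewrite both $\mathbb{P}(\text{Correct})$ and $\mathbb{E}[\text{Cost}]$ as sums over this partition, and then unfold the resulting joint probabilities using the Markov factorization (\ref{eq:markov_factorization}). Define the disjoint acceptance events $A_1 = \{\Phi_1 > \phi_1\}$ and, for $2 \leq i \leq k$,
\begin{equation*}
A_i = \{\Phi_1 \leq \phi_1, \ldots, \Phi_{i-1} \leq \phi_{i-1}, \Phi_i > \phi_i\},
\end{equation*}
with the convention $\phi_k := -\infty$, which forces $\{\Phi_k > \phi_k\}$ to be the whole space. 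These events partition the sample space; on $A_i$ the cascade returns $M_i(x)$ and incurs the accumulated cost $\sum_{j=1}^{i} C_j$.

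For the correctness identity (\ref{eq:pcorrect}), I would expand $\mathbb{P}(\text{Correct}) = \sum_{i=1}^{k} \mathbb{E}[\mathbf{1}_{A_i} \mathbf{1}_{\{M_i \text{ correct}\}}]$ and use the calibration property of $\Phi_i$ to replace the inner indicator by $\Phi_i$ after conditioning, so that each summand becomes $\mathbb{E}[\mathbf{1}_{A_i} \Phi_i]$. For $i \geq 2$, this factors as $\mathbb{P}(\Phi_1 \leq \phi_1, \ldots, \Phi_{i-1} \leq \phi_{i-1}) \cdot \int_{\{\Phi_i > \phi_i\}} \Phi_i(\omega) \, d\mathbb{P}(\omega \mid \Phi_1 \leq \phi_1, \ldots, \Phi_{i-1} \leq \phi_{i-1})$. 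Two applications of (\ref{eq:markov_factorization}), one telescoping the joint distribution of $\Phi_1, \ldots, \Phi_{i-1}$ into $\mathbb{P}(\Phi_1 \leq \phi_1) \prod_{j=2}^{i-1} \mathbb{P}(\Phi_j \leq \phi_j \mid \Phi_{j-1} \leq \phi_{j-1})$ and the other collapsing the conditioning measure in the remaining integral down to $\{\Phi_{i-1} \leq \phi_{i-1}\}$, yield exactly (\ref{eq:pcorrect}).

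For the cost identity (\ref{eq:expectedcost}), I would write $\mathbb{E}[\text{Cost}] = \sum_{i=1}^{k} \mathbb{E}\big[\mathbf{1}_{A_i} \sum_{j=1}^{i} C_j\big]$ and invoke the hypothesized independence of $(T_j^{(\text{in})}, T_j^{(\text{out})})$ from the $\Phi_\ell$ (and hence from $\mathbf{1}_{A_i}$) to pull the expected per-query costs outside, yielding $\sum_{i=1}^{k} \mathbb{P}(A_i) \sum_{j=1}^{i} \mathbb{E}[C_j]$. Decomposing $\mathbb{P}(A_i)$ into the reach-probability $\mathbb{P}(\Phi_1 \leq \phi_1, \ldots, \Phi_{i-1} \leq \phi_{i-1})$ times the conditional acceptance probability $1 - \mathbb{P}(\Phi_i \leq \phi_i \mid \Phi_1 \leq \phi_1, \ldots, \Phi_{i-1} \leq \phi_{i-1})$, and once more invoking (\ref{eq:markov_factorization}), reproduces (\ref{eq:expectedcost}); the last-model case drops out of $\phi_k := -\infty$, which makes the conditional acceptance probability equal to one.

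The main subtlety is the calibration step in the correctness derivation: strictly, I need $\mathbb{E}[\mathbf{1}_{\{M_i \text{ correct}\}} \mid \Phi_1, \ldots, \Phi_i] = \Phi_i$, even though the logistic-regression calibration described in Section \ref{subsec:markov-copula} only enforces calibration against the marginal $\Phi_i$. I would add this explicitly as a standing assumption, namely that conditional on the entire confidence vector the probability that $M_i$ is correct equals $\Phi_i$; with it in hand, the remaining manipulations are routine linearity, disintegration of measure, and repeated application of the Markov hypothesis.
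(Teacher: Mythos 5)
Your proposal is correct and follows essentially the same route as the paper: partition by the returning model $\tau$, rewrite $\mathbb{P}(\text{Correct})$ as $\sum_{i=1}^{k}\mathbb{E}[\Phi_i \mathds{1}\{\tau=i\}]$ via the calibration identity, apply the Markov hypothesis (\ref{eq:markov_factorization}) twice per term (once to telescope the reach probability, once to collapse the conditioning event to $\{\Phi_{i-1}\leq\phi_{i-1}\}$), and handle the cost analogously by pulling $\mathbb{E}[C_j]$ out via the assumed independence. Your closing remark about the calibration hypothesis is on point: the paper's proof explicitly invokes $\Phi_i = \mathbb{E}[\mathds{1}\{M_i\ \text{correct}\}\mid x]$ as a standing property of the calibrated confidence, which is at least as strong as the condition you identify as necessary.
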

\begin{proof}
See Appendix A for a proof.
\end{proof}

By leveraging the structure of the summands in Proposition \ref{prop:expected_correctness_and_cost}, we efficiently compute (\ref{eq:pcorrect}) and (\ref{eq:expectedcost}) in $O(k)$ time, where $k$ is the length of the cascade. See Appendix B for the algorithm. To solve the minimization problem (\ref{eq:optimization_problem}), we use the L-BFGS-B optimizer, a low-memory version of the Broyden–Fletcher–Goldfarb–Shanno algorithm (\citealp{liu1989}) modified to handle simple box constraints.

\vspace{4mm}

\noindent \textbf{Smoothness of the objective}: although our Markov-copula model uses mixed discrete-continuous marginals, the objective (\ref{eq:optimization_problem}) is smooth because we restrict each threshold $\phi$ to vary only inside the interior of the interval $(\phi_\text{min}, \phi_\text{max})$, where the marginal distributions of calibrated confidence are smooth. Leaving out the boundary $\{\phi_\text{min}, \phi_\text{max}\}$ results in no loss of generality because selecting $\phi \in \{ \phi_{min}, \phi_{max} \}$ is equivalent to dropping the model from the cascade (if $\phi = \phi_\text{max}$) or dropping all subsequent models (if $\phi = \phi_\text{min}$). Within our framework, it is possible to carry out such model selection by evaluating subcascades. After fitting copula models for all pairs of models (rather than only adjacent pairs), evaluating subcascades involves little computational overhead.

\section{Results}

\subsection{Methodology}
\label{subsec:methodology}

\noindent \textbf{Forwarding Queries}: the models in our cascades decide whether to forward queries by thresholding the calibrated confidence $\phi = \Phi(p_\text{raw})$, where $p_\text{raw}$ is the raw confidence signal. We obtain $p_\text{raw}$ from the model-intrinsic next-token probabilities. On multiple-choice tasks, we take the maximum probability among the answer choices (\citealp{hendrycks2017}; \citealp{plaut2024}). In the natural language generation case, we first generate the answer, then send a follow up verification prompt to the model asking ``Is the proposed answer \texttt{<answer>} true? Answer only \texttt{Y} or \texttt{N}.'' We use the probability of the \texttt{Y} token as the confidence signal $p_\text{raw}$. Our prompt templates are available in Appendix C.

Since we focus on providing techniques compatible with black-box LLM inference via third-party APIs, we leave consideration of hidden layer-based confidence signals to future work. In addition, we do not consider resampling methods such as semantic entropy (\citealp{farquhar2024}). Such methods are compatible with black-box inference, but in the context of LLM cascades, their computational overhead appears prohibitive. For example, at the time of writing inference of Llama3.1 405B typically costs 15 times more than inference of Llama3.1 8B. In this case, it is likely preferable to directly run the 405B model once rather than forward a query based on $k \approx 15$ resamples of the 8B model. See Appendix D for a table listing small-large model pairings from Meta, Anthropic, and OpenAI, along with their price differentials.

\vspace{4mm}

\noindent \textbf{Confidence Calibration}: raw token probabilities of instruction-tuned LLMs are typically poorly calibrated (\citealp{ouyang2022}; \citealp{brown2020}; \citealp{openai2024}; \citealp{plaut2024}). However, calibration is important for accurate error prediction. To obtain calibrated confidence scores, we use logistic regression. We favor this approach over temperature scaling since it yields $p$ values and other statistical metrics that are useful for diagnosing calibration issues, especially in a low-data scenario.

Unfortunately, the overconfidence of the raw token probabilities makes the distribution of raw confidence signals highly peaked. The raw token probabilities accumulate near 1.0, making tiny changes in confidence (for example, $p_\text{raw} = 0.98$ vs $p_\text{raw} = 0.99$) highly consequential. To enhance the calibration performance of logistic regression, as a pre-processing step we apply hyperparameter-free feature transformations that spread out the overconfident probabilities via asymptotes near $p_\text{raw}=0.0$ and $p_\text{raw}=1.0$. Following \citet{zellinger2024}, on multiple-choice tasks we use the transformation
\begin{equation}
\label{eq:nonlinear_transform_mc}
    \xi(p_\text{raw}) = \log \left( \frac{1}{1 - p_\text{raw}} \right),
\end{equation}
whereas on natural language generation tasks, we use \begin{equation}
\label{eq:nonlinear_transform_nlg}
        \xi(p_\text{raw}) = \begin{cases}
            \log \left( \frac{1}{1 - p_\text{raw}} \right)  & \text{if } p \geq \frac{1}{2}, \\

            \log \left( \frac{1}{p_\text{raw}} \right)  & \text{if } p < \frac{1}{2}. \\
        \end{cases}
\end{equation}
Importantly, these feature transformations do not require any hyperparameter tuning.

Unfortunately, models sometimes return perfect certainty $p_\text{raw} = 1.0$ or $p_\text{raw} = 0.0$, making (\ref{eq:nonlinear_transform_mc}) and (\ref{eq:nonlinear_transform_nlg}) blow up. To address this problem, we reassign all observations with infinite $\xi$ to the maximum of the finite values of $\xi$. In other words, we define
\begin{equation}
    \xi_\text{max} = \max \{ \xi(p_\text{raw}) : (p_\text{raw}, y) \in \mathcal{D}, p_\text{raw} < \infty \},
\end{equation}
where $\mathcal{D}$ is the training set consisting of pairs $(p_\text{raw},y)$, and $y \in \{0,1\}$ indicates correctness of the model's answer.\footnote{Note we do not require knowing the \textit{actual} answer of a model, only whether it was correct.} We set all observations where $\xi = \infty$ to $\xi_\text{max}$, and treat $\xi_\text{min}$ analogously.

\vspace{4mm}

\noindent \textbf{Benchmarks}: we evaluate our probabilistic model and the error-cost curves of LLM cascades on six language modeling benchmarks including MMLU (\citealp{hendrycks2021mmlu}); MedMCQA (\citealp{pal2022}); TriviaQA (\citealp{joshi2017}); XSum (\citealp{narayan2018}); GSM8K (\citealp{cobbe2021}); and TruthfulQA (\citealp{lin2022a}). These tasks include general-purpose knowledge and reasoning, domain-specific QA, open-domain QA, summarization, mathematical reasoning, and truthfulness in the face of adversarially chosen questions.

For each benchmark, we use 300 examples for training, and 1000 examples for testing, except on MMLU and TruthfulQA. On MMLU, the dev set contains only 285 examples, of which we use all. The validation set consists of 1531 examples and is divided into different subjects; to avoid bias from subject selection, we take all 1531 validation examples for testing. On TruthfulQA, the entire data set consists only of 817 observations, of which we randomly select 300 for training and the remaining 517 for testing.

Importantly, we run each benchmark in a \textbf{zero-shot} manner, since we believe this setting faithfully reflects off-the-shelf use of LLMs in practice. Appendix C gives the prompt templates we used for each benchmark. To conveniently transform and calibrate the raw confidence scores, track the numbers of input and output tokens, and monitor cost, we ran our evaluations using a preliminary version of the \texttt{niagara} Python package for LLM cascading. Code for reproducing the results of the paper is available on GitHub.\footnote{Code for reproducing the results of the paper is available at \texttt{github.com/mzelling/rational-llm-cascades}.}.

\vspace{4mm}

\noindent \textbf{Evaluation}: to evaluate whether a model's answer is correct on open-ended questions, we use Anthropic's Claude 3.5 Sonnet model as a judge. Note that this judging task is relatively easy since the open-ended benchmarks provide reference answers. For example, on TruthfulQA, we include in the evaluation prompt for Claude a list of correct and incorrect reference answers, as provided by the authors of the benchmark (\citealp{lin2022a}). On XSum, we do not use the one-line reference summaries and instead follow G-Eval (\citealp{liu2023}) to evaluate a proposed summary in terms of its coherence, consistency, fluency, and relevance (\citealp{kryściński2019}). We ask Claude to score each dimension on a scale of 1-5. We consider a summary to be correct if it attains a perfect score (5) in each dimension.

\vspace{4mm}

\noindent \textbf{Language Models}: we work with models from Meta's Llama3 series (1B-405B), Alibaba's Qwen series (Qwen2.5 32B Coder and Qwen 72B), and OpenAI's GPT models (GPT-4o Mini and GPT-4o). All models are instruction-tuned. We used the OpenAI API to run inference with GPT-4o Mini and GPT-4o, and the Fireworks API for all other models.

\subsection{Performance Summary}

Tables \ref{tab:overall_performance} and \ref{tab:ece_ablation} show the overall performance of all the language models across tasks, including the calibration performance. We measure calibration in terms of the expected calibration error (ECE), which we compute adaptively by bucketing confidence scores into 10 bins based on the deciles of their distributions. Tables \ref{tab:overall_performance} and \ref{tab:ece_ablation} yield several interesting findings.

\begin{table*}[t]
\centering
\footnotesize
\caption{Overall performance of language models across tasks, evaluated on the $n\approx1000$ test sets. \%Corr is the percentage of correct answers, \%ECE is the expected calibration error (after training on the $n\approx300$ training sets), and \%Cert is the percentage of queries for which a model returns log probabilities indicating certainty ($-\infty$ or $0.0$).}
\begin{adjustbox}{max width=\linewidth}
\begin{tabular}{
    l
    S[table-format=3.1] S[table-format=3.1] S[table-format=3.1]
    S[table-format=3.1] S[table-format=3.1] S[table-format=3.1]
    S[table-format=3.1] S[table-format=3.1] S[table-format=3.1]
    S[table-format=3.1] S[table-format=3.1] S[table-format=3.1]
    S[table-format=3.1] S[table-format=3.1] S[table-format=3.1]
    S[table-format=3.1] S[table-format=3.1] S[table-format=3.1]
}
\toprule
& \multicolumn{3}{c}{\textbf{MMLU}}
& \multicolumn{3}{c}{\textbf{MedMCQA}}
& \multicolumn{3}{c}{\textbf{TriviaQA}}
& \multicolumn{3}{c}{\textbf{XSum}}
& \multicolumn{3}{c}{\textbf{GSM8K}}
& \multicolumn{3}{c}{\textbf{TruthfulQA}} \\
\cmidrule(lr){2-4}\cmidrule(lr){5-7}\cmidrule(lr){8-10}
\cmidrule(lr){11-13}\cmidrule(lr){14-16}\cmidrule(lr){17-19}
\textbf{Model}
& \textbf{\%Corr} & \textbf{\%ECE} & \textbf{\%Cert}
& \textbf{\%Corr} & \textbf{\%ECE} & \textbf{\%Cert}
& \textbf{\%Corr} & \textbf{\%ECE} & \textbf{\%Cert}
& \textbf{\%Corr} & \textbf{\%ECE} & \textbf{\%Cert}
& \textbf{\%Corr} & \textbf{\%ECE} & \textbf{\%Cert}
& \textbf{\%Corr} & \textbf{\%ECE} & \textbf{\%Cert} \\
\midrule

\textbf{llama3.2-1b}
& {42.5} & {3.8} & {0.0}
& {34.5} & {8.7} & {0.0}
& {37.2} & {5.8} & {0.0}
& {9.4 } & {2.9} & {0.0}
& {45.9} & {13.1} &{0.0}
& {35.8} & {4.3} &{0.0} \\

\textbf{llama3.2-3b}
& {57.2} & {4.0} & {0.0}
& {53.1} & {6.8} & {0.0}
& {63.3} & {4.5} & {0.0}
& {21.2} & {3.6} & {0.0}
& {79.2} & {9.5} & {0.0}
& {43.3} & {7.5} & {0.0} \\

\textbf{llama3.1-8b}
& {63.4} & {4.1} & {0.0}
& {51.8} & {9.4} & {0.0}
& {78.7} & {6.2} & {0.0}
& {50.8} & {3.5} & {0.0}
& {84.3} & {4.7} & {0.0}
& {50.3} & {7.3} & {0.0} \\

\textbf{llama3.1-70b}
& {81.5} & {\textbf{2.4}} & {0.0}
& {72.6} & {9.9}          & {0.0}
& {92.8} & {2.3}          & {0.0}
& {84.5} & {6.0}          & {0.0}
& {94.9} & {2.9}          & {0.0}
& {59.4} & {5.7}          & {0.0} \\

\textbf{llama3.1-405b}
& {\textbf{85.2}} & {2.9}         & {\underline{0.1}}
& {75.7}          & {10.8}        & {0.0}
& {94.9}          & {3.0}         & {\underline{0.1}}
& {83.9}          & {5.4}         & {0.0}
& {\textbf{97.1}} & {1.9}         & {\underline{0.5}}
& {69.2}          & {5.6}         & {0.0} \\

\textbf{qwen2.5-32b-c}
& {75.3} & {5.3} & {0.0}
& {55.9} & {6.2} & {0.0}
& {70.2} & {8.9} & {0.0}
& {69.3} & {4.3} & {0.0}
& {95.1} & {3.2} & {0.0}
& {57.4} & {5.9} & {0.0} \\

\textbf{qwen2.5-72b}
& {82.0} & {4.9} & {0.0}
& {69.1} & {7.0} & {0.0}
& {87.6} & {3.2} & {0.3}
& {95.2} & {2.2} & {\underline{15.5}}
& {95.4} & {\textbf{1.2}} & {\underline{79.4}}
& {57.8} & {7.7} & {\underline{0.6}}\\

\textbf{gpt-4o-mini}
& {74.9}          & {4.7}         & {\underline{45.7}}
& {66.0}          & {5.3}         & {\underline{27.8}}
& {90.0}          & {\textbf{2.8}} & {\underline{76.2}}
& {97.6}          & {2.6}         & {\underline{38.1}}
& {92.9}          & {3.5}         & {\underline{48.3}}
& {59.4}          & {7.0}         & {\underline{26.7}} \\

\textbf{gpt-4o}
& {83.6}           & {4.8}          & {\underline{22.7}}
& {\textbf{76.5}}  & {\textbf{2.8}} & {\underline{4.8}}
& {\textbf{96.2}}  & {2.1}          & {\underline{0.9}}
& {\textbf{99.0}}  & {\textbf{0.7}} & {0.0}
& {95.9}           & {2.1}          & {\underline{4.3}}
& {\textbf{72.1}}  & {\textbf{3.8}} & {0.2} \\

\bottomrule

\addlinespace[1pt]
\textbf{Average}
& {71.7} & {4.1} & {7.6}
& {61.7} & {7.4} & {3.6}
& {79.0} & {4.2} & {8.3}
& {67.9} & {3.5} & {6.0}
& {86.7} & {4.7} & {14.7}
& {56.1} & {5.9} & {3.0} \\
\bottomrule

\end{tabular}
\end{adjustbox}
\label{tab:overall_performance}
\end{table*}

\begin{table*}[t]
\centering
\footnotesize
\caption{Expected calibration error for logistic regression-based calibration, with (\%ECE) and without ($\text{\%ECE}_{\text{-TF}}$) applying the nonlinear transformations (\ref{eq:nonlinear_transform_mc}) and (\ref{eq:nonlinear_transform_nlg}) as a pre-processing step. All values are computed on the $n\approx1000$ test sets, after fitting the logistic regressions on the $n\approx300$ training sets. For each benchmark, bold font indicates the better performance. The column $\% \Delta $ shows the reduction in ECE when using the transformations.}
\begin{adjustbox}{max width=\linewidth}
\begin{tabular}{
    l
    S[table-format=3.1] S[table-format=3.1] S[table-format=4.1]
    S[table-format=3.1] S[table-format=3.1] S[table-format=4.1]
    S[table-format=3.1] S[table-format=3.1] S[table-format=4.1]
    S[table-format=3.1] S[table-format=3.1] S[table-format=4.1]
    S[table-format=3.1] S[table-format=3.1] S[table-format=4.1]
    S[table-format=3.1] S[table-format=3.1] S[table-format=4.1]
}
\toprule
& \multicolumn{3}{c}{\textbf{MMLU}}
& \multicolumn{3}{c}{\textbf{MedMCQA}}
& \multicolumn{3}{c}{\textbf{TriviaQA}}
& \multicolumn{3}{c}{\textbf{XSum}}
& \multicolumn{3}{c}{\textbf{GSM8K}}
& \multicolumn{3}{c}{\textbf{TruthfulQA}} \\
\cmidrule(lr){2-4}\cmidrule(lr){5-7}\cmidrule(lr){8-10}
\cmidrule(lr){11-13}\cmidrule(lr){14-16}\cmidrule(lr){17-19}
\textbf{Model}
& \textbf{\%ECE} & \textbf{\%ECE\(_{\text{-TF}}\)} & $\boldsymbol{\%\Delta}$
& \textbf{\%ECE} & \textbf{\%ECE\(_{\text{-TF}}\)} & $\boldsymbol{\%\Delta}$
& \textbf{\%ECE} & \textbf{\%ECE\(_{\text{-TF}}\)} & $\boldsymbol{\%\Delta}$
& \textbf{\%ECE} & \textbf{\%ECE\(_{\text{-TF}}\)} & $\boldsymbol{\%\Delta}$
& \textbf{\%ECE} & \textbf{\%ECE\(_{\text{-TF}}\)} & $\boldsymbol{\%\Delta}$
& \textbf{\%ECE} & \textbf{\%ECE\(_{\text{-TF}}\)} & $\boldsymbol{\%\Delta}$ \\
\midrule

\textbf{llama3.2-1b}
& \textbf{{3.8}} & {6.1} & {-37.7}
& \textbf{{8.7}} & {9.8} & {-11.2}
& {5.8} & {5.8} & {0.0}
& {2.9} & \textbf{{2.7}} & {7.4}
& \textbf{{13.1}} & {13.2} & {-0.8}
& {4.3} & {4.3} & {0.0}
\\

\textbf{llama3.2-3b}
& \textbf{{4.0}} & {7.4} & {-45.9}
& \textbf{{6.8}} & {10.0} & {-32.0}
& \textbf{{4.5}} & {14.9} & {-69.8}
& \textbf{{3.6}} & {3.7} & {-2.7}
& {9.5} & {9.5} & {0.0}
& \textbf{{7.5}} & {6.4} & {17.2}
\\

\textbf{llama3.1-8b}
& \textbf{{4.1}} & {7.0} & {-41.4}
& \textbf{{9.4}} & {14.1} & {-33.1}
& \textbf{{6.2}} & {15.1} & {-58.9}
& \textbf{{3.5}} & {9.7} & {-63.9}
& \textbf{{4.7}} & {5.1} & {-7.8}
& {7.3} & {7.3} & {0.0}
\\

\textbf{llama3.1-70b}
& \textbf{{2.4}} & {7.9} & {-69.6}
& \textbf{{9.9}} & {12.5} & {-20.8}
& \textbf{{2.3}} & {5.1} & {-54.9}
& \textbf{{6.0}} & {10.4} & {-42.3}
& \textbf{{2.9}} & {4.5} & {-35.6}
& {5.7} & \textbf{{5.3}} & {7.5}
\\

\textbf{llama3.1-405b}
& \textbf{{2.9}} & {10.4} & {-72.1}
& \textbf{{10.8}} & {14.2} & {-24.0}
& \textbf{{3.0}} & {4.9} & {-38.8}
& \textbf{{5.4}} & {10.2} & {-47.1}
& \textbf{{1.9}} & {3.6} & {-47.2}
& \textbf{{5.6}} & {10.8} & {-48.1}
\\

\textbf{qwen2.5-32b-c}
& {\textbf{5.3}} & {13.9} & {-61.9}
& {\textbf{6.2}} & {14.5} & {-57.2}
& {\textbf{10.0}} & {15.7} & {-36.3}
& {\textbf{4.3}} & {10.2} & {-57.8}
& {\textbf{3.2}} & {4.7} & {-31.9}
& {\textbf{5.9}} & {10.6} & {-44.3}
\\

\textbf{qwen2.5-72b}
& {\textbf{4.9}} & {10.9} & {-55.0}
& {\textbf{7.0}} & {16.1} & {-56.5}
& {\textbf{4.0}} & {9.4} & {-57.4}
& {\textbf{2.2}} & {3.2} & {-31.3}
& {\textbf{1.2}} & {4.9} & {-75.5}
& {\textbf{7.5}} & {9.3} & {-19.4}
\\

\textbf{gpt-4o-mini}
& {\textbf{4.7}} & {14.8} & {-68.2}
& {\textbf{5.3}} & {15.5} & {-65.8}
& {\textbf{1.4}} & {3.8} & {-63.2}
& {2.6} & {\textbf{2.4}}  &  {8.3}
& {\textbf{3.5}} & {6.1}  & {-42.6}
& {\textbf{5.3}} & {5.8}  & {-8.6}
\\

\textbf{gpt-4o}
& {\textbf{4.8}} & {11.2} & {-57.1}
& {\textbf{3.1}} & {12.6} & {-75.4}
& {\textbf{2.1}} & {3.6} & {-41.7}
& {\textbf{0.7}} & {1.0} & {-29.6}
& {\textbf{2.1}} & {4.6} & {-54.3}
& {\textbf{3.8}} & {11.4} & {-66.7}
\\

\bottomrule
\addlinespace[1pt]  
\textbf{Average}    
& {5.0} & {7.9} & {-36.7}
& {7.6} & {9.8} & {-22.5}
& {4.5} & {8.2} & {-45.0}
& {3.1} & {4.3} & {-28.2}
& {6.2} & {7.1} & {-12.7}
& {5.1} & {6.7} & {-23.9}
\\

\bottomrule
\end{tabular}
\end{adjustbox}
\label{tab:ece_ablation}
\end{table*}

First, some of the models often return raw log probabilities indicating certainty ($-\infty$ or $1.0$). This tendency varies strongly by model family. OpenAI's GPT models are especially prone to certainty: on MMLU, for example, GPT-4o Mini returns raw confidence 1.0 on 45.7\% of queries, while GPT-4o does so on 22.7\% of queries. By contrast, Llama3.1 405B returns perfect confidence only on 0.1\% of queries.

Second, the test ECE for our calibration scheme varies by model and by benchmark. The benchmark yielding the poorest calibration is MedMCQA, giving an average test ECE of 7.4\% across models. However, some models give exceptional calibration performance across benchmarks. GPT-4o stands out: its test ECE never exceeds 4.8\%, which is its ECE on MMLU.

Overall, we observe that our calibration scheme performs satisfactorily across benchmarks and models, with most benchmarks reporting an average test ECE below 5\%. Table \ref{tab:ece_ablation} ablates the importance of the hyperparameter-free feature transforms (\ref{eq:nonlinear_transform_mc}) and (\ref{eq:nonlinear_transform_nlg}) for obtaining effective calibration. Applying these transformations results in much lower test ECE scores, reducing them by 28.2\% on average. Figure \ref{fig:conditional_correctness_probabilities} in Appendix E further verifies calibration by showing that, across models and benchmarks, rejecting queries for which the calibrated confidence is $<1-q$ approximately lowers the test error rates to $<q$.

\subsection{Goodness-of-Fit of the Markov-Copula Model}
\label{subsec:gof}

In this section, we show that our probabilistic model fits the empirical data well. We start by presenting evidence that the Markov assumption (\ref{eq:markov_factorization}) approximately holds. Second, we show that our Gumbel copula models successfully account for correlations between the error rates of different LLMs, as measured by low square-rooted Cramér-von Mises (CvM) statistics and low rejection rates of the null hypothesis. Finally, we show that our mixed discrete-continuous mixtures of beta distributions provide an adequate model for the marginal confidence distributions, as measured by low square-rooted CvM scores. However, the high rejection rates of the null hypothesis suggest the potential for further improvements.

\subsubsection{Verifying the Markov Assumption}

\begin{figure}[h]
    \centering
    \begin{subfigure}[t]{0.48\textwidth}
        \centering
        \makebox[\textwidth]{%
            \makebox[0.4\textwidth][c]{~~~Llama Cascade} \makebox[0.1\textwidth][c]{~} \makebox[0.4\textwidth][c]{Mixed Cascade}%
        }
        \vspace{0.5mm}
        
        \begin{minipage}[t]{0.48\textwidth}
            \includegraphics[width=\textwidth]{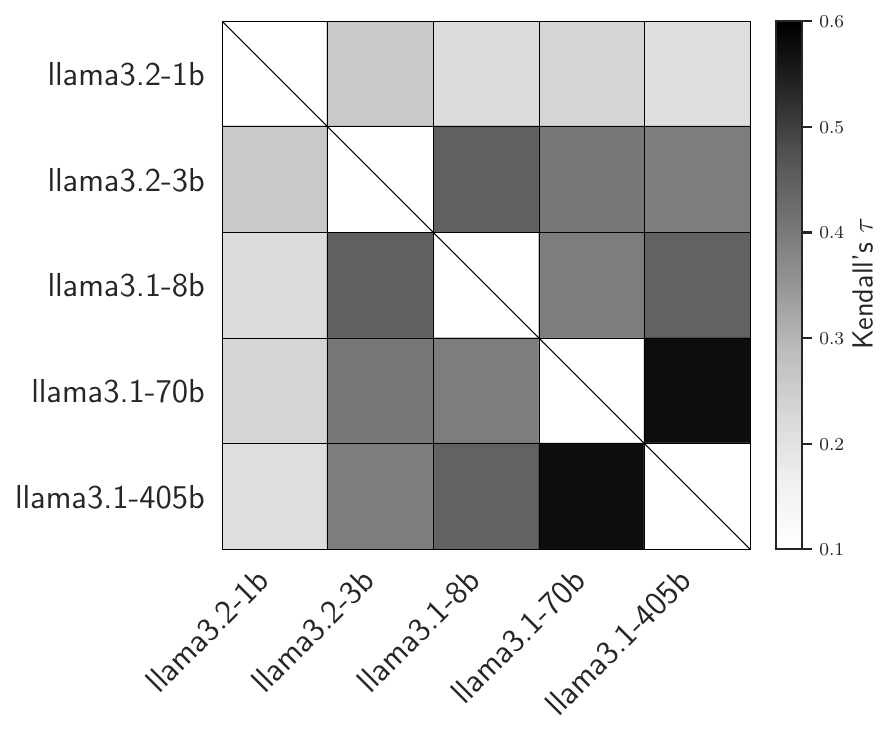}
        \end{minipage}%
        \hfill
        \begin{minipage}[t]{0.48\textwidth}
            \includegraphics[width=\textwidth]{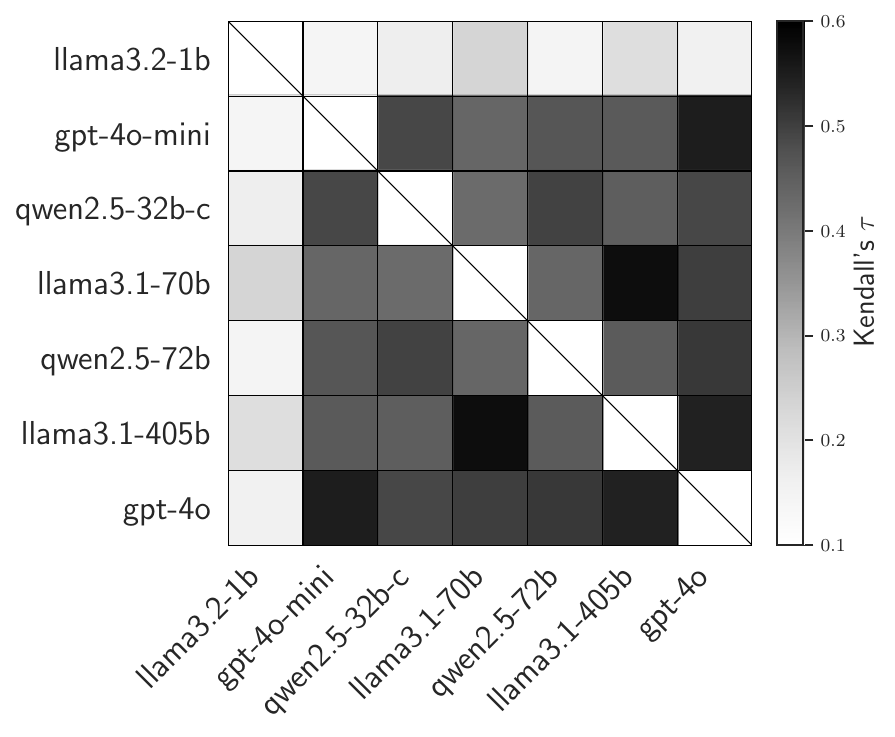}
        \end{minipage}
        \vspace{-2mm}
        \caption{MMLU}
    \end{subfigure}
    \begin{subfigure}[t]{0.48\textwidth}
        \centering
        \makebox[\textwidth]{%
            \makebox[0.4\textwidth][c]{~~~Llama Cascade} \makebox[0.1\textwidth][c]{~} \makebox[0.4\textwidth][c]{Mixed Cascade}%
        }
        \vspace{0.5mm}
        
        \begin{minipage}[t]{0.48\textwidth}
            \includegraphics[width=\textwidth]{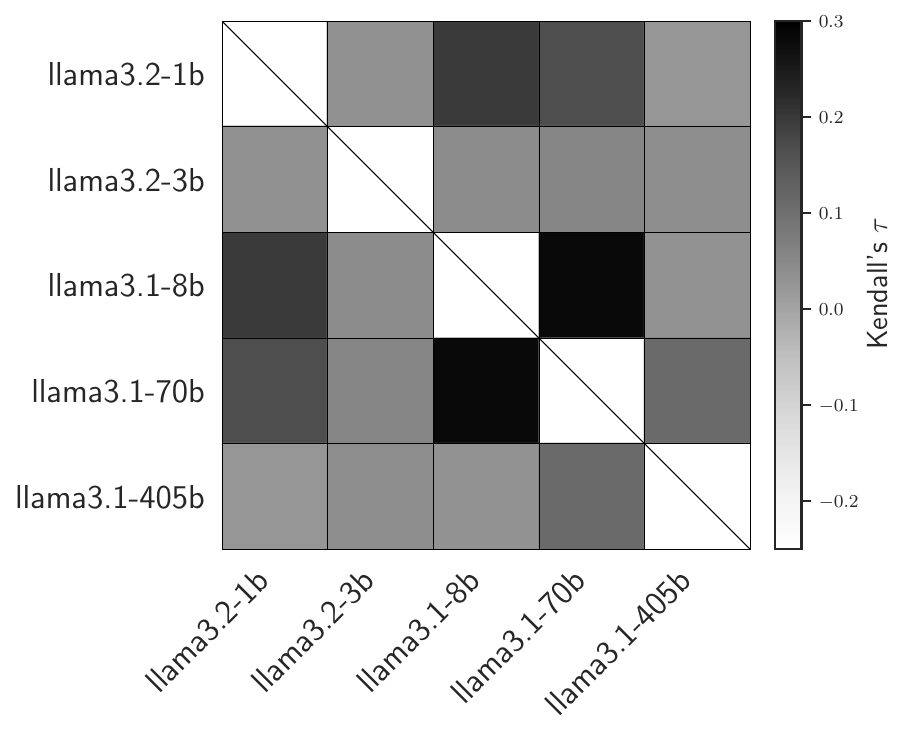}
        \end{minipage}%
        \hfill
        \begin{minipage}[t]{0.48\textwidth}
            \includegraphics[width=\textwidth]{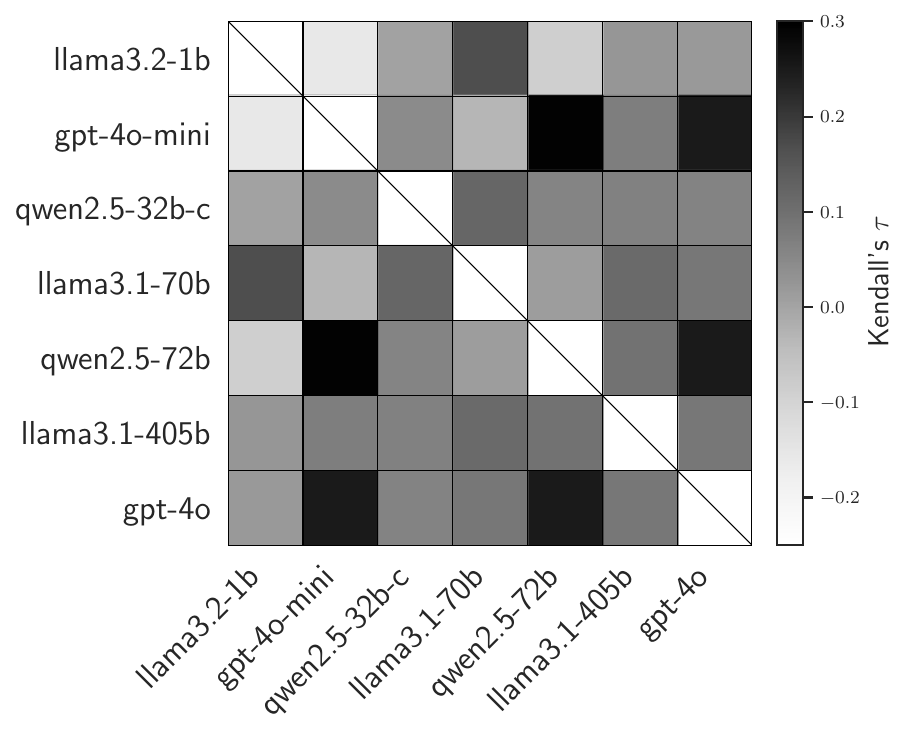}
        \end{minipage}
        \vspace{-2mm}
        \caption{XSum}
    \end{subfigure}

    \vspace{4mm}
    \begin{subfigure}[t]{0.48\textwidth}
        \begin{minipage}[t]{0.48\textwidth}
            \includegraphics[width=\textwidth]{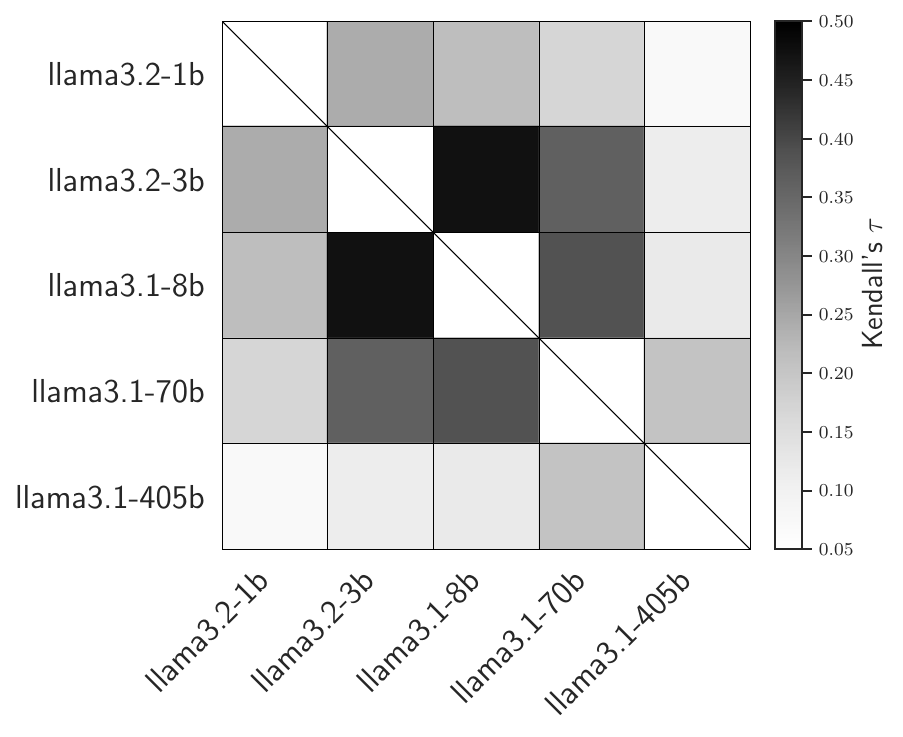}
        \end{minipage}%
        \hfill
        \begin{minipage}[t]{0.48\textwidth}
            \includegraphics[width=\textwidth]{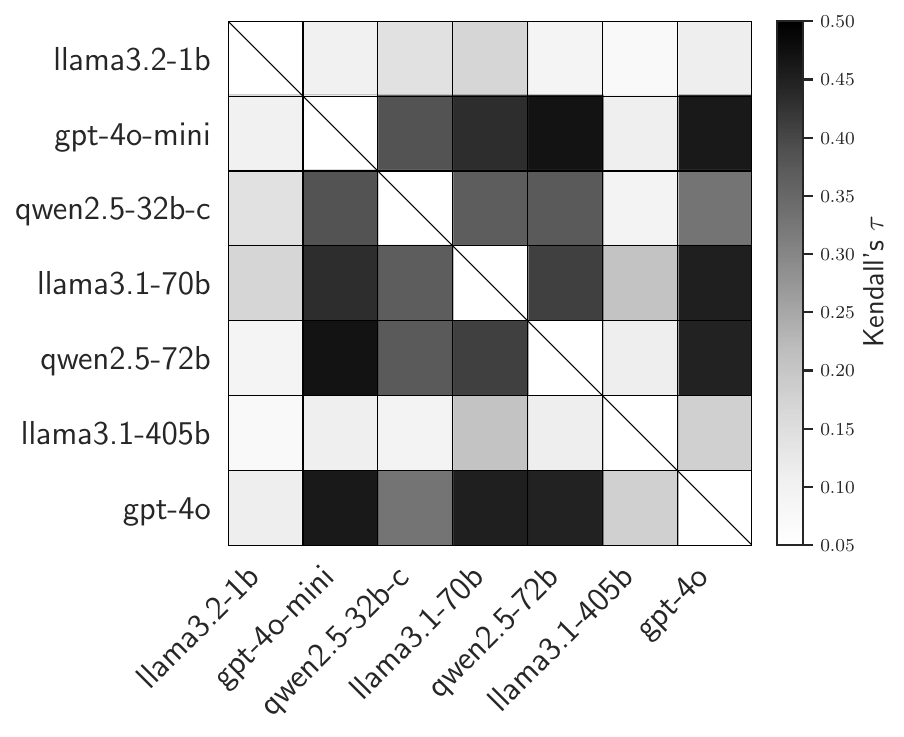}
        \end{minipage}
        \vspace{-2mm}
        \caption{MedMCQA}
    \end{subfigure}
    \begin{subfigure}[t]{0.48\textwidth}
        \begin{minipage}[t]{0.48\textwidth}
            \includegraphics[width=\textwidth]{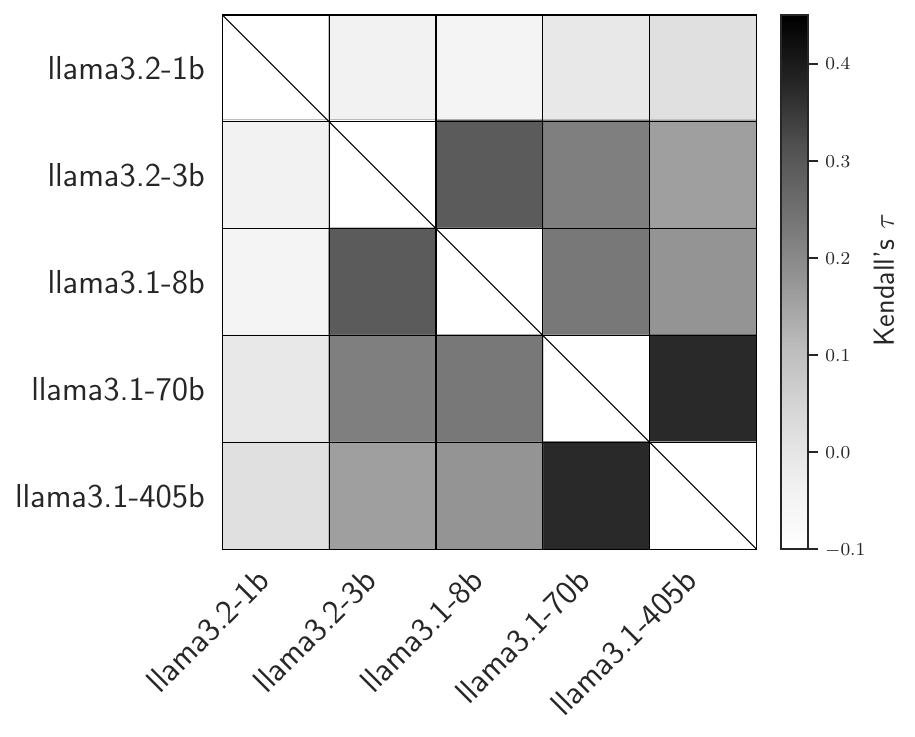}
        \end{minipage}%
        \hfill
        \begin{minipage}[t]{0.48\textwidth}
            \includegraphics[width=\textwidth]{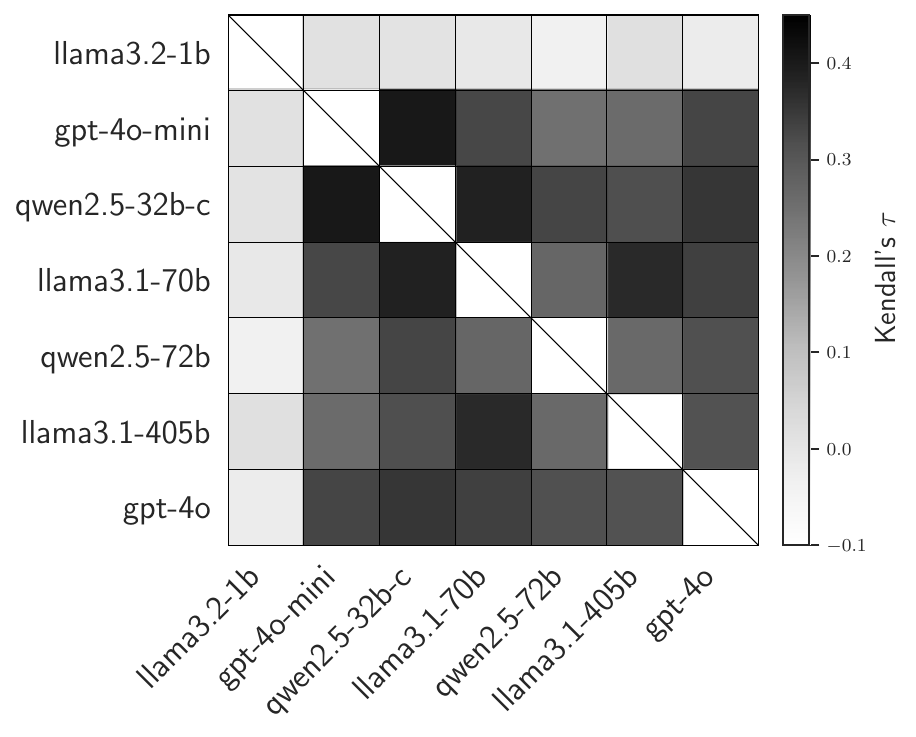}
        \end{minipage}
        \vspace{-2mm}
        \caption{GSM8K}
    \end{subfigure}

    \vspace{4mm}
    \begin{subfigure}[t]{0.48\textwidth}
        \begin{minipage}[t]{0.48\textwidth}
            \includegraphics[width=\textwidth]{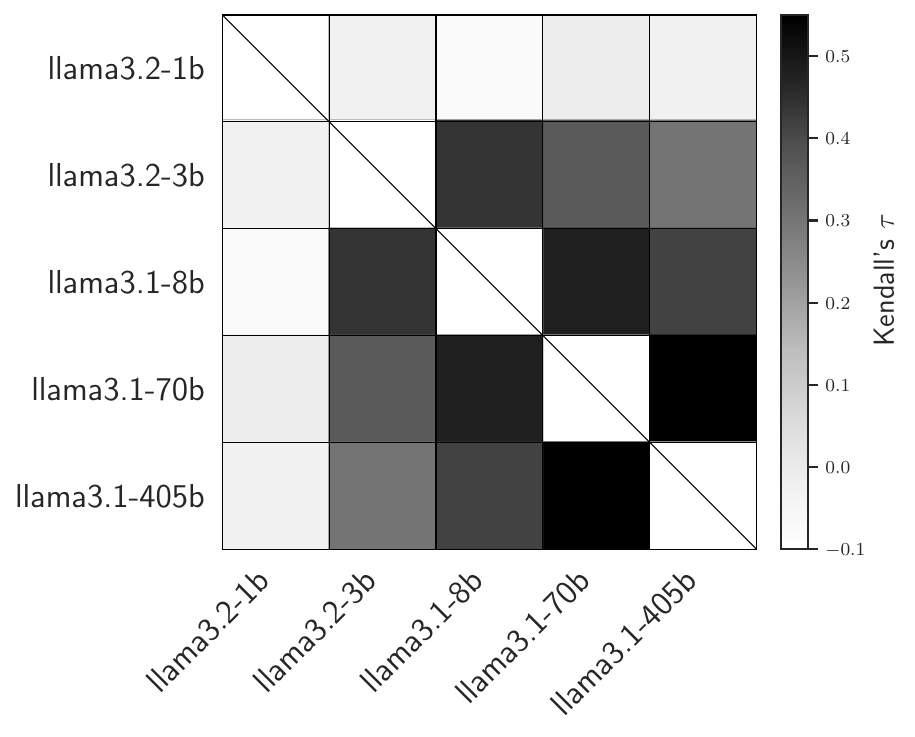}
        \end{minipage}%
        \hfill
        \begin{minipage}[t]{0.48\textwidth}
            \includegraphics[width=\textwidth]{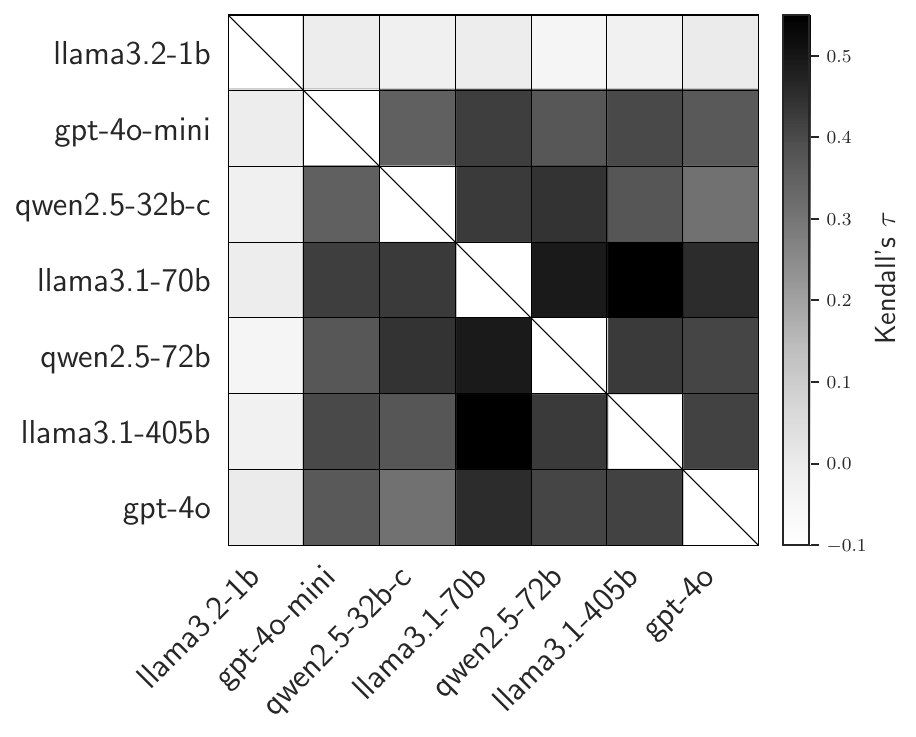}
        \end{minipage}
        \vspace{-2mm}
        \caption{TriviaQA}
    \end{subfigure}
    \begin{subfigure}[t]{0.48\textwidth}
        \begin{minipage}[t]{0.48\textwidth}
            \includegraphics[width=\textwidth]{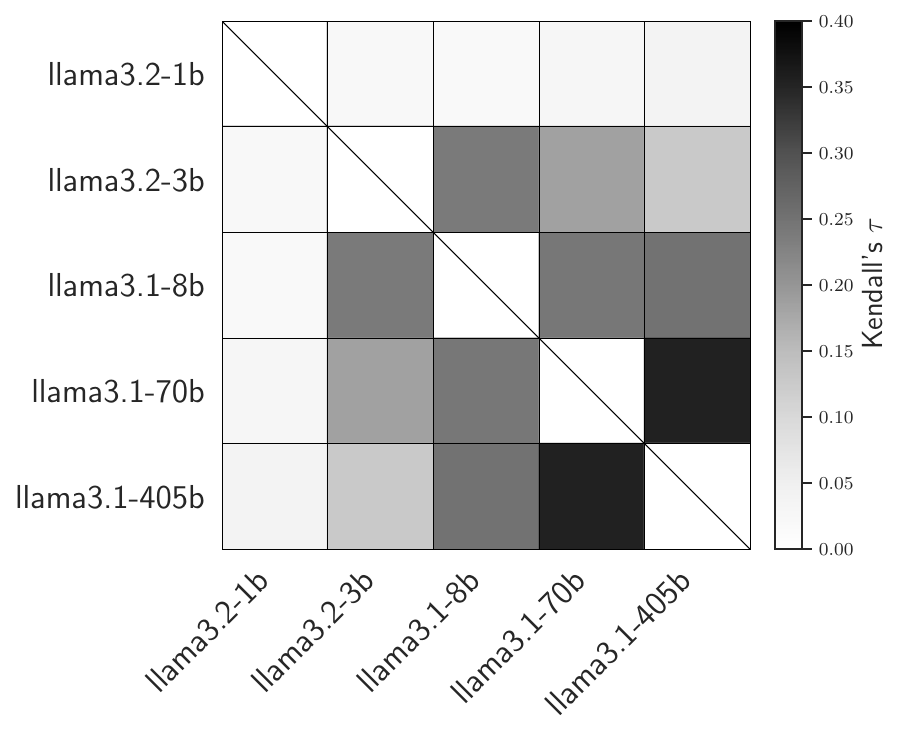}
        \end{minipage}%
        \hfill
        \begin{minipage}[t]{0.48\textwidth}
            \includegraphics[width=\textwidth]{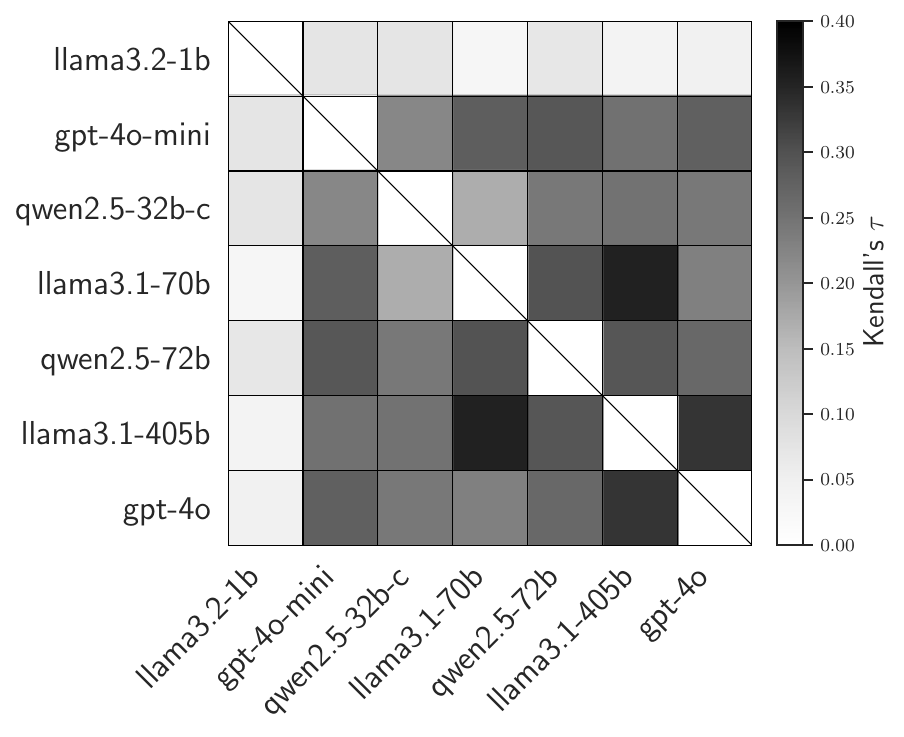}
        \end{minipage}
        \vspace{-2mm}
        \caption{TruthfulQA}
    \end{subfigure}

    \caption{Evaluates the Markov property by showing the Kendall's $\tau$ rank correlation between the calibrated confidences of pairs of LLMs, as evaluated on the test set ($n \approx 1000$ examples). In a Markov pattern, the largest rank correlations occur near the diagonal, based on similarity in model size. For each benchmark, the figure compares the rank correlation structure of a cascade of Llama models to that of a mixed cascade consisting of models from the Llama, GPT, and Qwen families, suggesting that a cascade drawn from models of the same architectural family is more nearly Markovian.}
    \label{fig:kendalls_markov}
\end{figure}

To verify that (\ref{eq:markov_factorization}) approximately holds, we first visualize the rank correlation between the calibrated confidences of different models. Figure \ref{fig:kendalls_markov} shows that the Kendall's $\tau$ rank correlation is higher for models of similar sizes. In addition, models sharing the same architectural family (Llama, GPT, or Qwen) are more highly correlated than models of different families.

Our findings suggest that a cascade composed only of Llama models (1B-405B) satisfies the Markov assumption more exactly. Consider Figure \ref{fig:kendalls_markov}a as an example. For the Llama cascade, Kendall's $\tau$ is highest near the heatmap's diagonal, suggesting a Markov property. By contrast, the mixed cascade composed of Llama, GPT, and Qwen models shows a more haphazard pattern. For example, the rank correlation between GPT-4o Mini and GPT-4o ($\tau = 0.55$) is higher than that between GPT-4o and Llama3 405B ($\tau = 0.54$), even though the latter pair of models are more similar in size. Similarly, Llama3 405B is more strongly correlated with Llama3 70B ($\tau = 0.58$) than with Qwen2.5 72B ($\tau = 0.46$), even though the latter models are of near-identical size. These examples highlight that, in order for the Markov property to hold based on model size, it seems important that models share the same architectural family.

In Appendix F, we further verify the rank correlation patterns between different LLMs by recomputing the rank correlations only on those queries where both models answer correctly or incorrectly.

To probe the Markov property for the Llama cascade in a different way, we train logistic regressions for predicting correctness of the 8B, 70B, and 405B models based on the calibrated confidences of two ancestor models in the cascade. Specifically, we consider the immediate predecessor model (the Markov predictor) paired with each available earlier ancestor. If the Markov property holds, the Markov predictor should hold much greater significance than any other ancestor. Table \ref{tab:markov_property} lists the results, revealing a diagonal pattern for each benchmark that confirms that the Markov predictor is usually much more significant. However, the earlier ancestor often shares statistical significance. To evaluate the significance of this finding, we also computed the magnitude of the regression coefficients corresponding to Table \ref{tab:markov_property}. The coefficients follow a similar pattern, revealing that even if multiple predictors are significant, the Markov predictor usually carries the greatest weight.

In sum, our findings suggest that for cascades composed of models sharing the same architectural family, a Markov property holds approximately, though not exactly.

\begin{table}[h]
\centering
\setlength{\tabcolsep}{8pt}  
\small  
\caption{Verifies the Markov property for the Llama cascade by showing the results of using logistic regression to predict each model's correctness based on the calibrated confidences of two ancestor models in the cascade: the immediate predecessor model (Markov predictor) and each available earlier ancestor. For the Markov predictors, the table displays the average $p$ values across all these logistic regressions; for the earlier ancestors, the $p$ value corresponds to a single logistic regression. Underlined values indicate statistical significance (5\% level); the lowest $p$ values in each row are bolded. The diagonal pattern in the table suggests the Markov property.}
\begin{tabular*}{0.9\textwidth}{@{\extracolsep{\fill}}llcccc@{}}
\toprule
\multirow{2}{*}{\textbf{Benchmark}} & \multirow{2}{*}{\textbf{Predicted}} & \multicolumn{4}{c}{\textbf{log\textsubscript{10} \textit{p} Value of Markov Predictor vs Earlier Ancestor}} \\
\cmidrule(l){3-6}
& & 1B & 3B & 8B & 70B \\
\midrule
\multirow{3}{*}{\textbf{MMLU}}
& 8B   & \underline{-2.66} & \underline{\textbf{-26.86}} & --       & --       \\
& 70B  & -0.52             & \underline{-3.48}           & \underline{\textbf{-13.71}} & -- \\
& 405B & -0.78             & \underline{-2.41}           & \underline{-6.32}           & \underline{\textbf{-25.78}} \\
\midrule
\multirow{3}{*}{\textbf{MedMCQA}}
& 8B   & \underline{-1.85} & \underline{\textbf{-26.40}} & --       & --       \\
& 70B  & -0.26             & \underline{-2.72}           & \underline{\textbf{-4.35}} & -- \\
& 405B & -0.23             & -0.82                       & \underline{-2.45}    & \underline{\textbf{-24.63}} \\
\midrule
\multirow{3}{*}{\textbf{TriviaQA}}
& 8B   & -0.14             & \underline{\textbf{-22.38}} & --       & --       \\
& 70B  & -0.58             & -1.02                       & \underline{\textbf{-6.42}} & -- \\
& 405B & -0.26             & \underline{-1.88}           & \underline{-3.72}    & \underline{\textbf{-11.45}} \\
\midrule
\multirow{3}{*}{\textbf{XSum}} 
& 8B   & -0.72 & \underline{\textbf{-1.58}} & --     & --     \\
& 70B  & -0.97 & -0.61                      & \underline{\textbf{-6.94}} & -- \\
& 405B & -0.56 & -0.50                      & \underline{\textbf{-2.81}} & \underline{-1.62} \\
\midrule
\multirow{3}{*}{\textbf{GSM8K}}
& 8B   & \underline{-2.85} & \underline{\textbf{-7.48}} & --       & --      \\
& 70B  & -0.51             & -0.17                      & \underline{\textbf{-6.49}} & -- \\
& 405B & -0.36             & -0.13                      & \underline{\textbf{-3.22}} & \underline{-2.76} \\
\midrule
\multirow{3}{*}{\textbf{TruthfulQA}}
& 8B   & \underline{\textbf{-1.77}} & -0.42 & --       & --      \\
& 70B  & -0.30                      & -0.44 & \textbf{-0.52}    & -- \\
& 405B & -0.20                      & -0.67 & -0.59    & \underline{\textbf{-1.55}} \\
\bottomrule
\end{tabular*}
\label{tab:markov_property}
\end{table}

\subsubsection{Testing the Gumbel Copulas for Modeling LLM Correlations}

To evaluate the goodness-of-fit of our Gumbel copula models, we first visualize the correlation between the calibrated confidences of pairs of LLMs. Figures \ref{fig:copula_scatterplots_1} and \ref{fig:copula_scatterplots_2} show scatterplots for several pairs of Qwen, OpenAI, and Llama models. Each scatterplot shows the copula-transformed variables
\begin{equation}
    u = \hat{F}_n(\phi),
\end{equation}
where $\phi$ is the calibrated confidence and $\hat{F}_n$ its empirical distribution on the test set. The marginal distribution of each $u$ is uniform, since we restrict our copula models to the region $(\phi_\text{min}, (\phi_\text{max})$ of calibrated confidence where the marginal confidence distribution is smooth. Note that Figure \ref{fig:copula_scatterplots_2} highlights the Markov property by showing the increasing rank correlation between Llama models of similar sizes.

\begin{figure*}[t]
   \centering
   \begin{subfigure}[t]{0.31\textwidth}
       \centering
       \includegraphics[width=\textwidth]{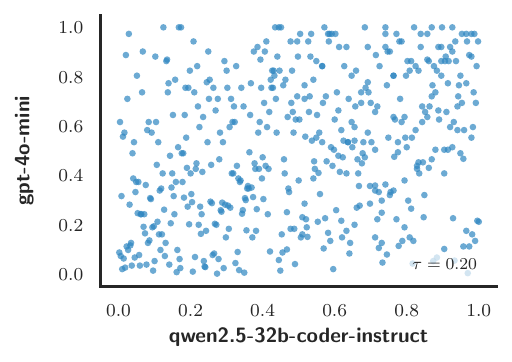}
       \caption{GPT-4o Mini and Qwen2.5 32B Coder on GSM8K}
       \label{fig:plot1}
   \end{subfigure}
   \hfill
   \begin{subfigure}[t]{0.31\textwidth}
       \centering
       \includegraphics[width=\textwidth]{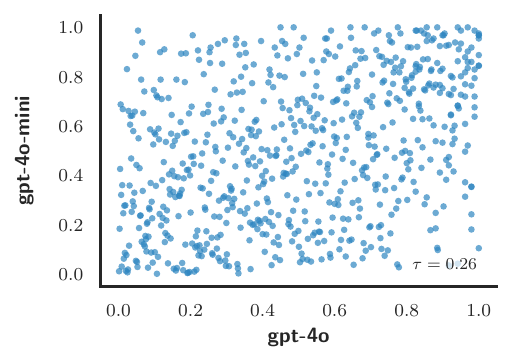}
       \caption{GPT-4o Mini and GPT-4o on MedMCQA}
       \label{fig:plot2}
   \end{subfigure}
   \hfill
   \begin{subfigure}[t]{0.31\textwidth}
       \centering
       \includegraphics[width=\textwidth]{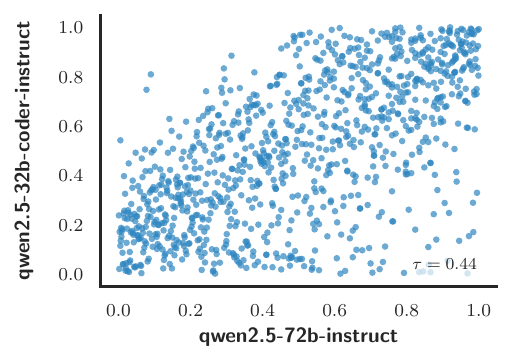}
       \caption{Qwen2.5 32B Coder and Qwen2.5 72B on TriviaQA}
       \label{fig:plot3}
   \end{subfigure}
   
   \caption{Correlations between the calibrated confidences of selected pairs of LLMs on different benchmarks, showing a range of rank correlations between models. The Kendall's $\tau$ rank correlation, shown in the bottom right corner, ranges from $\tau = 0.20$ to $\tau = 0.44$.}
   \label{fig:copula_scatterplots_1}
\end{figure*}

\begin{figure*}[t]
   \centering
   \begin{subfigure}[t]{0.31\textwidth}
       \centering
       \includegraphics[width=\textwidth]{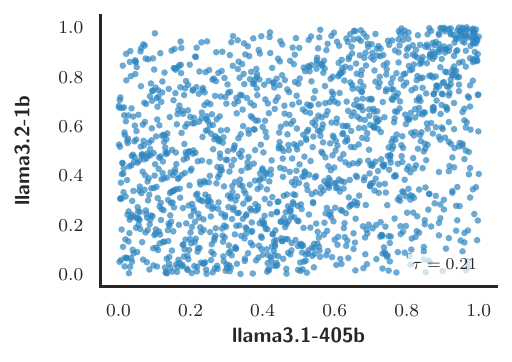}
       \caption{1B-405B}
       \label{fig:plot1}
   \end{subfigure}
   \hfill
   \begin{subfigure}[t]{0.31\textwidth}
       \centering
       \includegraphics[width=\textwidth]{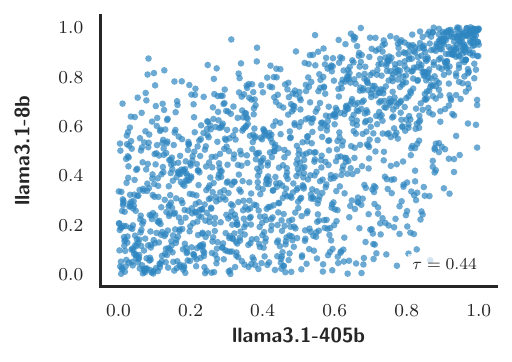}
       \caption{8B-405B}
       \label{fig:plot2}
   \end{subfigure}
   \hfill
   \begin{subfigure}[t]{0.31\textwidth}
       \centering
       \includegraphics[width=\textwidth]{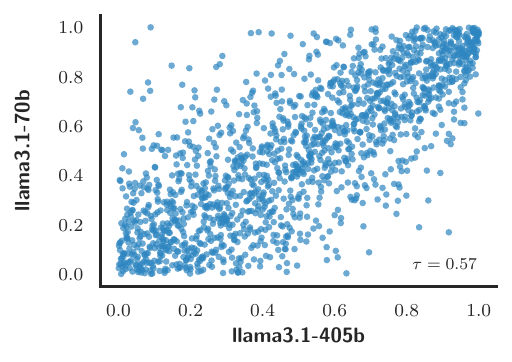}
       \caption{70B-405B}
       \label{fig:plot3}
   \end{subfigure}
   
   \caption{Correlations between the calibrated confidences of smaller Llama models (1B, 8B, 70B) ($y$ axis) and the 405B model ($x$ axis) on MMLU. The increasing rank correlation suggests a Markov property based on model size. The Kendall's $\tau$ rank correlation, shown in the bottom right corner, increases from $\tau=0.21$ to $\tau=0.57$.}
   \label{fig:copula_scatterplots_2}
\end{figure*}

\begin{table*}[h]
\centering
\footnotesize
\caption{Shows the goodness-of-fit of our Gumbel copula models for modeling pairwise correlations between LLMs, based on a Cramér-von Mises ($\sqrt{n}\text{CvM}$) test using parametric bootstrapping. We report the $\sqrt{n}\text{CvM}$ value, the number of null hypothesis rejections (out of 10 and 6 model pairs for the Llama and Qwen \& OpenAI groups, respectively), the percentage of rejections, as well as the geometric and arithmetic mean of $p$ values.}
\begin{tabular}{
    l
    S[table-format=1.3]
    S[table-format=1.0]
    S[table-format=3.1]
    S[table-format=1.3]
    S[table-format=1.3]
    S[table-format=1.3]
    S[table-format=1.0]
    S[table-format=3.1]
    S[table-format=1.3]
    S[table-format=1.3]
}
\toprule
& \multicolumn{5}{c}{\textbf{Llama Models}} 
& \multicolumn{5}{c}{\textbf{Qwen \& OpenAI Models}} \\
\cmidrule(lr){2-6}\cmidrule(lr){7-11}
\textbf{Benchmark} 
& {$\sqrt{n}\text{CvM}$} & {\#Rej.} & {\% Rej.} & {$(\prod p)^{\frac{1}{n}}$} & {$\frac{1}{n}\sum p$}
& {$\sqrt{n}\text{CvM}$} & {\#Rej.} & {\% Rej.} & {$(\prod p)^{\frac{1}{n}}$} & {$\frac{1}{n}\sum p$} \\
\midrule
\textbf{MMLU} 
 & {0.002} & {0} & {0.0} & {0.569} & {0.591}
 & {0.011} & {4} & {66.7} & {0.058} & {0.121} \\
\textbf{MedMCQA} 
 & {0.004} & {1} & {10.0} & {0.394} & {0.560}
 & {0.004} & {0} & {0.0}  & {0.397} & {0.444} \\
\textbf{TriviaQA} 
 & {0.002} & {0} & {0.0} & {0.638} & {0.709}
 & {0.012} & {2} & {33.3} & {0.078} & {0.187} \\
\textbf{XSum} 
 & {0.004} & {0} & {0.0} & {0.405} & {0.480}
 & {0.002} & {0} & {0.0} & {0.704} & {0.733} \\
\textbf{GSM8K} 
 & {0.002} & {0} & {0.0} & {0.688} & {0.757}
 & {0.016} & {2} & {33.3} & {0.032} & {0.157} \\
\textbf{TruthfulQA} 
 & {0.001} & {0} & {0.0} & {0.961} & {0.963}
 & {0.002} & {0} & {0.0} & {0.800} & {0.812} \\
\midrule
\textbf{Average}
 & {0.003} & {0} & {1.7} & {0.609} & {0.677}
 & {0.008} & {1} & {22.2} & {0.345} & {0.409} \\
\bottomrule
\end{tabular}
\label{tab:copula_pvalue_distributions_new}
\end{table*}

We formally test the goodness-of-fit between the fitted Gumbel copulas and the test data by carrying out a Cramér-von Mises test using parametric bootstrapping, following the ``Kendall's transform'' approach described in \citet{genest2009}. The test involves computing the univariate distribution of copula values $C_{ij}(F_i(x), F_j(x))$ for $x \sim p(x)$, using both the empirical copula and the fitted Gumbel copula. We evaluate the difference between these two distributions using the Cramér-von Mises ($\sqrt{n}\text{CvM}$) statistic and obtain a $p$ value by parametric bootstrapping with $B=1000$ samples. In each case, we fit the Gumbel copula on the training data ($n \approx 300$) and evaluate the $p$ value relative to the test data ($n \approx 1000$).

Table \ref{tab:copula_pvalue_distributions_new} breaks down the results by benchmark for two groups of models (Llama models vs OpenAI \& Qwen models). Each reported number is based on considering all pairs of models within each group, regardless of similarities in size. There are 10 pairs of Llama models and 6 pairs of Qwen and OpenAI models. The results show that for the Llama models, the fitted Gumbel copulas closely match the empirical correlation structures between pairs of models on the test set, since the overall rejection rate of the null hypothesis is only 1.7\%, well below the 5\% rejection rate expected by chance. In addition, the $\sqrt{n}\text{CvM}$ statistic is only 0.003 on average.

For the group of Qwen and OpenAI models, we observe higher rejection rates. The overall rejection rate of 22.2\% suggests that the Gumbel copula model does not fit the data exactly. However, the average $\sqrt{n}\text{CvM}$ value of 0.008 suggests that the fit is adequate.

\subsubsection{Testing the Discrete-Continuous Marginal Confidence Distributions}

\begin{figure*}[h]
    \centering
    \begin{subfigure}{0.3\textwidth}
        \centering
        \includegraphics[width=\linewidth]{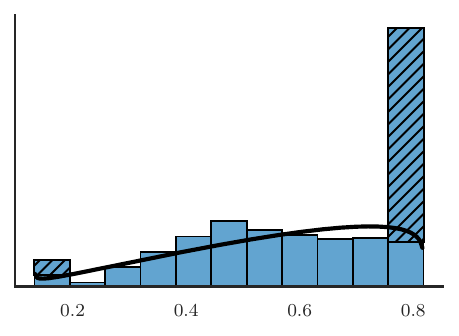}
        \subcaption{GPT-4o Mini on TruthfulQA}
        \label{fig:gpt4o-mini_marginal_truthfulqa}
    \end{subfigure}
    \hfill
    \begin{subfigure}{0.3\textwidth}
        \centering
        \includegraphics[width=\linewidth]{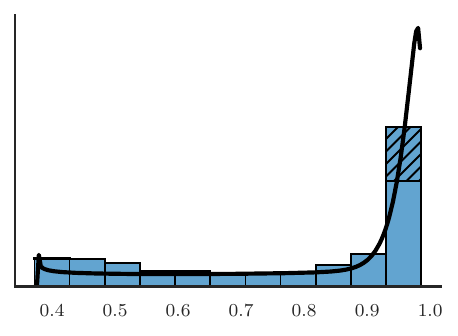}
        \subcaption{GPT-4o on MedMCQA}
        \label{fig:gpt4o_marginal_medmcqa}
    \end{subfigure}
    \hfill
    \begin{subfigure}{0.3\textwidth}
        \centering
        \includegraphics[width=\linewidth]{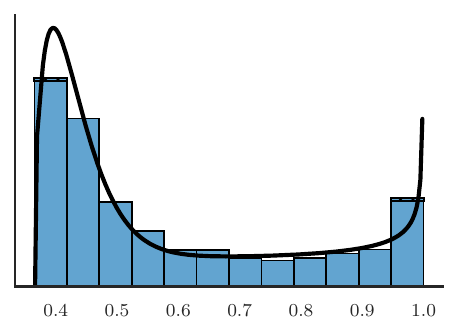}
        \subcaption{Llama3.2 3B on MMLU}
        \label{fig:histogram_c}
    \end{subfigure}

    \vspace{1em} 

    \begin{subfigure}{0.3\textwidth}
        \centering
        \includegraphics[width=\linewidth]{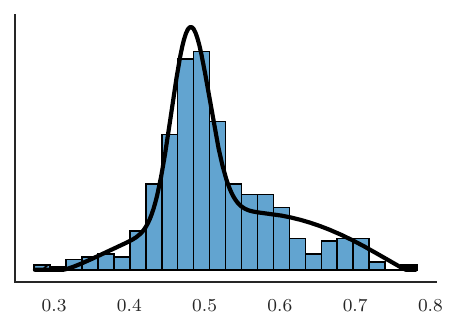}
        \subcaption{Llama3.1 8B on TruthfulQA}
        \label{fig:histogram_e}
    \end{subfigure}
    \hfill
    \begin{subfigure}{0.3\textwidth}
        \centering
        \includegraphics[width=\linewidth]{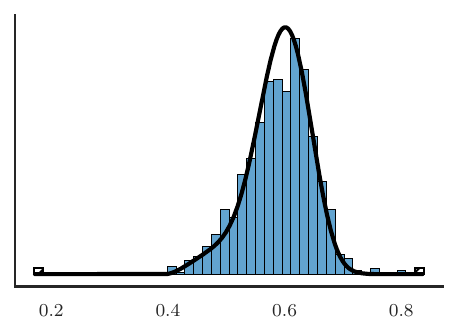}
        \subcaption{Llama3.2 1B on GSM8K}
        \label{fig:histogram_a}
    \end{subfigure}
    \hfill
    \begin{subfigure}{0.3\textwidth}
        \centering
        \includegraphics[width=\linewidth]{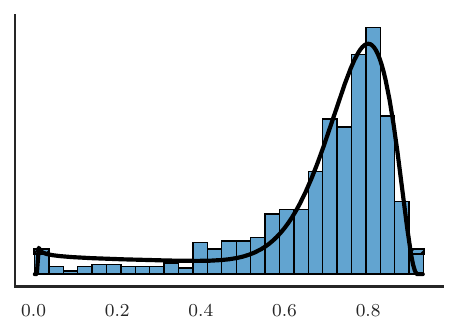}
        \subcaption{Qwen2.5 32B Coder on XSum}
        \label{fig:histogram_d}
    \end{subfigure}

    \caption{Selection of trained marginal distributions (fitted on $n\approx300$ training data), with histograms of the test data ($n\approx1000$). Histogram areas shaded with hatch patterns (especially in (a) and (b) indicate the contributions of discrete probability masses in our models.}
    \label{fig:marginal_goodness_of_fit}
\end{figure*}

First, we visualize the agreement between the fitted continuous-discrete mixtures of scaled beta distributions and the histograms of calibrated confidence values on the test set. To construct these plots, we first train the calibrators and marginal distributions on the training set ($n \approx 300$ examples).\footnote{We do not consider it necessary to train the calibrators and the marginal confidence distributions on separate training data sets, since the calibrators model $p(y|x)$ and the marginal distributions model $p(x)$.} We then compute the calibrated confidence on the test set ($n \approx 1000$) using the trained calibrators. Figure \ref{fig:marginal_goodness_of_fit} suggests that the fitted marginals align well with the calibrated confidence values on the test data.

Each histogram displays the discrete masses $\phi_\text{min}$ and $\phi_\text{max}$ of the fitted marginal distributions by shading corresponding areas on the first and last bars of each histogram. We observe in Figure \ref{fig:gpt4o-mini_marginal_truthfulqa} that the discrete probability masses are especially pronounced for GPT-4o Mini on TruthfulQA and GPT-4o on MedMCQA. The trend that the OpenAI GPT models often report certainty also holds for other benchmarks, as Table \ref{tab:overall_performance} shows.

\begin{table*}[h]
\centering
\footnotesize
\caption{Shows the goodness-of-fit of our discrete-continuous mixtures of scaled beta distributions for modeling the marginal distributions of calibrated LLM confidence. We computed $p$ values for the square-rooted Cramér-von Mises ($\sqrt{\text{CvM}}$) statistic using parametric bootstrapping with $B=1000$ samples. The $\sqrt{\text{CvM}}$ statistic and its $p$ value were computed on the test set ($n \approx 1000$), whereas the marginal distributions were fitted on the training set ($n \approx 300$). We highlight $p<0.05$ with an \underline{underline} and $p<0.001$ with \textbf{bold} font. Additionally, we \textbf{bold} the largest $\sqrt{\text{CvM}}$ value within each column. Highlighted values indicate the greatest discrepancies with our model.}
\resizebox{\linewidth}{!}{
\begin{tabular}{
    l
    S[table-format=1.3] S[table-format=1.3]
    S[table-format=1.3] S[table-format=1.3]
    S[table-format=1.3] S[table-format=1.3]
    S[table-format=1.3] S[table-format=1.3]
    S[table-format=1.3] S[table-format=1.3]
    S[table-format=1.3] S[table-format=1.3]
}
\toprule
& \multicolumn{2}{c}{\textbf{MMLU}}
& \multicolumn{2}{c}{\textbf{MedMCQA}}
& \multicolumn{2}{c}{\textbf{TriviaQA}}
& \multicolumn{2}{c}{\textbf{XSum}}
& \multicolumn{2}{c}{\textbf{GSM8K}}
& \multicolumn{2}{c}{\textbf{TruthfulQA}} \\
\cmidrule(lr){2-3}\cmidrule(lr){4-5}\cmidrule(lr){6-7}
\cmidrule(lr){8-9}\cmidrule(lr){10-11}\cmidrule(lr){12-13}
\textbf{Model}
& $\sqrt{\text{CvM}}$ & $p$
& $\sqrt{\text{CvM}}$ & $p$
& $\sqrt{\text{CvM}}$ & $p$
& $\sqrt{\text{CvM}}$ & $p$
& $\sqrt{\text{CvM}}$ & $p$
& $\sqrt{\text{CvM}}$ & $p$ \\
\midrule
\textbf{llama3.2-1b}
& 0.031 & \textbf{\underline{0.000}}
& 0.025 & \underline{0.015}
& 0.018 & 0.117
& 0.036 & \underline{0.001}
& 0.026 & \underline{0.015}
& 0.025 & 0.109 \\
\textbf{llama3.2-3b}
& 0.014 & 0.144
& \textbf{0.115} & \textbf{\underline{0.000}}
& 0.043 & \textbf{\underline{0.000}}
& 0.020 & 0.076
& 0.020 & 0.071
& 0.030 & 0.053 \\
\textbf{llama3.1-8b}
& 0.016 & 0.066
& 0.088 & \textbf{\underline{0.000}}
& 0.022 & \underline{0.033}
& 0.037 & \textbf{\underline{0.000}}
& 0.016 & 0.163
& 0.022 & 0.181 \\
\textbf{llama3.1-70b}
& \textbf{0.048} & \textbf{\underline{0.000}}
& \textbf{0.137} & \textbf{\underline{0.000}}
& \textbf{0.057} & \textbf{\underline{0.000}}
& \textbf{0.070} & \textbf{\underline{0.000}}
& 0.038 & \textbf{\underline{0.000}}
& 0.044 & \underline{0.002} \\
\textbf{llama3.1-405b}
& 0.024 & \underline{0.004}
& 0.113 & \textbf{\underline{0.000}}
& 0.028 & \underline{0.008}
& 0.027 & \underline{0.009}
& 0.034 & \underline{0.001}
& 0.036 & \underline{0.019} \\
\textbf{gpt-4o-mini}
& 0.032 & \textbf{\underline{0.000}}
& 0.060 & \textbf{\underline{0.000}}
& 0.008 & 0.441
& 0.020 & 0.077
& 0.026 & \underline{0.016}
& 0.028 & 0.072 \\
\textbf{qwen2.5-32b-c}
& 0.036 & \textbf{\underline{0.000}}
& 0.069 & \textbf{\underline{0.000}}
& 0.040 & \textbf{\underline{0.000}}
& 0.020 & 0.067
& 0.028 & \underline{0.010}
& 0.023 & 0.160 \\
\textbf{qwen2.5-72b}
& 0.028 & \underline{0.001}
& 0.073 & \textbf{\underline{0.000}}
& 0.040 & \textbf{\underline{0.000}}
& 0.041 & \textbf{\underline{0.000}}
& 0.004 & 0.678
& 0.036 & \underline{0.018} \\
\textbf{gpt-4o}
& 0.029 & \textbf{\underline{0.000}}
& 0.100 & \textbf{\underline{0.000}}
& 0.046 & \textbf{\underline{0.000}}
& 0.026 & \underline{0.013}
& 0.036 & \underline{0.001}
& \textbf{0.065} & \textbf{\underline{0.000}} \\
\midrule
\textbf{Average}
& 0.029 & 0.024
& 0.087 & 0.003
& 0.034 & 0.066
& 0.033 & 0.027
& 0.025 & 0.106
& 0.034 & 0.068 \\
\bottomrule
\end{tabular}}
\label{tab:cvm_results}
\end{table*}

\begin{table*}[h]
\centering
\footnotesize
\caption{Shows the goodness-of-fit of our discrete-continuous mixtures of scaled beta distributions for modeling the marginal distributions of calibrated LLM confidence, after re-fitting the marginal distributions on the test set. We computed $p$ values for the square-rooted Cramér-von Mises ($\sqrt{\text{CvM}_\text{r}}$) statistic using parametric bootstrapping with $B=1000$ samples. We highlight $p<0.05$ with an \underline{underline} and $p<0.001$ with \textbf{bold} font. Additionally, we \textbf{bold} the largest $\sqrt{\text{CvM}_\text{r}}$ value within each column. Highlighted values indicate the greatest discrepancies with our model.}
\resizebox{\linewidth}{!}{
\begin{tabular}{
    l
    S[table-format=1.3] S[table-format=1.3]
    S[table-format=1.3] S[table-format=1.3]
    S[table-format=1.3] S[table-format=1.3]
    S[table-format=1.3] S[table-format=1.3]
    S[table-format=1.3] S[table-format=1.3]
    S[table-format=1.3] S[table-format=1.3]
}
\toprule
& \multicolumn{2}{c}{\textbf{MMLU}}
& \multicolumn{2}{c}{\textbf{MedMCQA}}
& \multicolumn{2}{c}{\textbf{TriviaQA}}
& \multicolumn{2}{c}{\textbf{XSum}}
& \multicolumn{2}{c}{\textbf{GSM8K}}
& \multicolumn{2}{c}{\textbf{TruthfulQA}} \\
\cmidrule(lr){2-3}\cmidrule(lr){4-5}\cmidrule(lr){6-7}
\cmidrule(lr){8-9}\cmidrule(lr){10-11}\cmidrule(lr){12-13}
\textbf{Model}
& $\sqrt{\text{CvM}_\text{r}}$ & $p$
& $\sqrt{\text{CvM}_\text{r}}$ & $p$
& $\sqrt{\text{CvM}_\text{r}}$ & $p$
& $\sqrt{\text{CvM}_\text{r}}$ & $p$
& $\sqrt{\text{CvM}_\text{r}}$ & $p$
& $\sqrt{\text{CvM}_\text{r}}$ & $p$ \\
\midrule
\textbf{llama3.2-1b}
& 0.018 & \underline{0.046}
& 0.020 & 0.085
& 0.005 & 0.986
& 0.006 & 0.935
& 0.018 & 0.126
& 0.008 & 0.970 \\
\textbf{llama3.2-3b}
& 0.009 & 0.461
& 0.010 & 0.608
& 0.036 & \textbf{\underline{0.000}}
& 0.009 & 0.666
& 0.010 & 0.566
& 0.013 & 0.661 \\
\textbf{llama3.1-8b}
& 0.012 & 0.248
& 0.010 & 0.574
& 0.011 & 0.492
& 0.009 & 0.688
& 0.006 & 0.947
& 0.010 & 0.846 \\
\textbf{llama3.1-70b}
& 0.016 & 0.072
& 0.018 & 0.121
& 0.024 & \underline{0.029}
& 0.017 & 0.141
& 0.023 & \underline{0.037}
& 0.015 & 0.460 \\
\textbf{llama3.1-405b}
& 0.015 & 0.133
& 0.011 & 0.498
& 0.017 & 0.130
& 0.006 & 0.933
& 0.031 & \underline{0.002}
& 0.019 & 0.290 \\
\textbf{gpt-4o-mini}
& 0.004 & 0.928
& 0.007 & 0.853
& 0.003 & 0.913
& 0.011 & 0.393
& 0.009 & 0.549
& 0.014 & 0.548 \\
\textbf{qwen2.5-32b-c}
& 0.011 & 0.282
& 0.012 & 0.355
& 0.020 & 0.070
& 0.022 & \underline{0.047}
& 0.015 & 0.217
& 0.013 & 0.600 \\
\textbf{qwen2.5-72b}
& 0.018 & \underline{0.039}
& 0.016 & 0.157
& 0.028 & \underline{0.005}
& 0.014 & 0.301
& 0.002 & 0.966
& 0.008 & 0.970 \\
\textbf{gpt-4o}
& 0.011 & 0.273
& 0.024 & 0.315
& 0.041 & \textbf{\underline{0.000}}
& 0.013 & 0.367
& 0.030 & \underline{0.005}
& 0.021 & 0.759 \\
\midrule
\textbf{Average}
& 0.013 & 0.276
& 0.014 & 0.396
& 0.021 & 0.292
& 0.012 & 0.497
& 0.016 & 0.379
& 0.013 & 0.678 \\
\bottomrule
\end{tabular}}
\label{tab:cvm_results_refitted}
\end{table*}

We formally test the goodness-of-fit of the marginal distributions by computing the square-rooted Cramér-von Mises statistic
\begin{equation}
\label{eq:cvm_marginal}
    \sqrt{\text{CvM}} = \sqrt{\int (\hat{F}_n^{\text{test}}(x) - F(x|\boldsymbol{\hat{\theta}}))^2 \text{~d}F(x|\boldsymbol{\hat{\theta}})},
\end{equation}
where $\hat{F}_n^{\text{test}} = \frac{1}{n}\sum_{i=1}^{n}\delta_{\Phi(x_i)}$ is the empirical distribution of the calibrated confidence on the test data, and $F(\cdot|\boldsymbol{\theta})$ is our marginal distribution model (\ref{eq:full_marginal_distribution}) with $\boldsymbol{\theta} = (\phi_\text{min}, \phi_\text{max}, w_\text{min}, w_\text{max}, \pi, \alpha_1, \beta_1, \alpha_2, \beta_2)$. In Tables \ref{tab:cvm_results} and \ref{tab:cvm_results_refitted}, we report (\ref{eq:cvm_marginal}) both for $\boldsymbol{\hat{\theta}}$ estimated from the training data ($\sqrt{\text{CvM}}$), and for $\boldsymbol{\hat{\theta}}$ re-fitted on the test data ($\sqrt{\text{CvM}_\text{r}}$). The reason we report $\sqrt{\text{CvM}_\text{r}}$ is to evaluate whether deficiencies in the fit arise from a bias problem, rather than a variance problem. To compute $p$ values for (\ref{eq:cvm_marginal}), we use parametric bootstrapping with $B=1000$ samples.

Table \ref{tab:cvm_results} indicates a close fit between the trained marginal distributions and the empirical distributions of the calibrated confidences on the test data, with an average $\sqrt{\text{CvM}}$ value of $4\%$. However, 74\% of tests reject the null hypothesis at the $p<0.05$ level, suggesting that our model does not exactly match the data. When refitting the marginals on the test data, the average $\sqrt{\text{CvM}}$ value falls to 1.5\% and a much lower 18.5\% of tests reject the null hypothesis. Even on the refitted data, this overall rejection rate of $18.5\%$ is significantly higher than the $5\%$ we would expect by chance. We conclude that our marginal distribution model fits the empirical data well, as judged by a low $\sqrt{\text{CvM}}$ value, but it clearly does not capture the true distribution of calibrated confidences exactly.

Notably, the results for the refitted marginals show that the quality of the fit strongly depends on the benchmark. Specifically, TriviaQA displays a much poorer fit than the other benchmarks. For many of the LLMs, TriviaQA's low difficulty (as judged by a 90\%+ test accuracy for many models) explains the poor fit. The presence of a sharp peak of calibrated confidences near $\phi_\text{max}$ presumably raises the number of training samples required to precisely estimate the shape of the distribution. In addition, the ability of the beta distribution to fit sharply peaked unimodal distributions may be inherently limited. We hypothesize that these factors may explain the high $p$ values despite rather low $\sqrt{\text{CvM}}$ values.

\subsection{Rational Tuning of Confidence Thresholds}

\begin{figure}[t]
    \centering
    \begin{subfigure}[b]{0.48\textwidth}
        \centering
        \includegraphics[width=\textwidth]{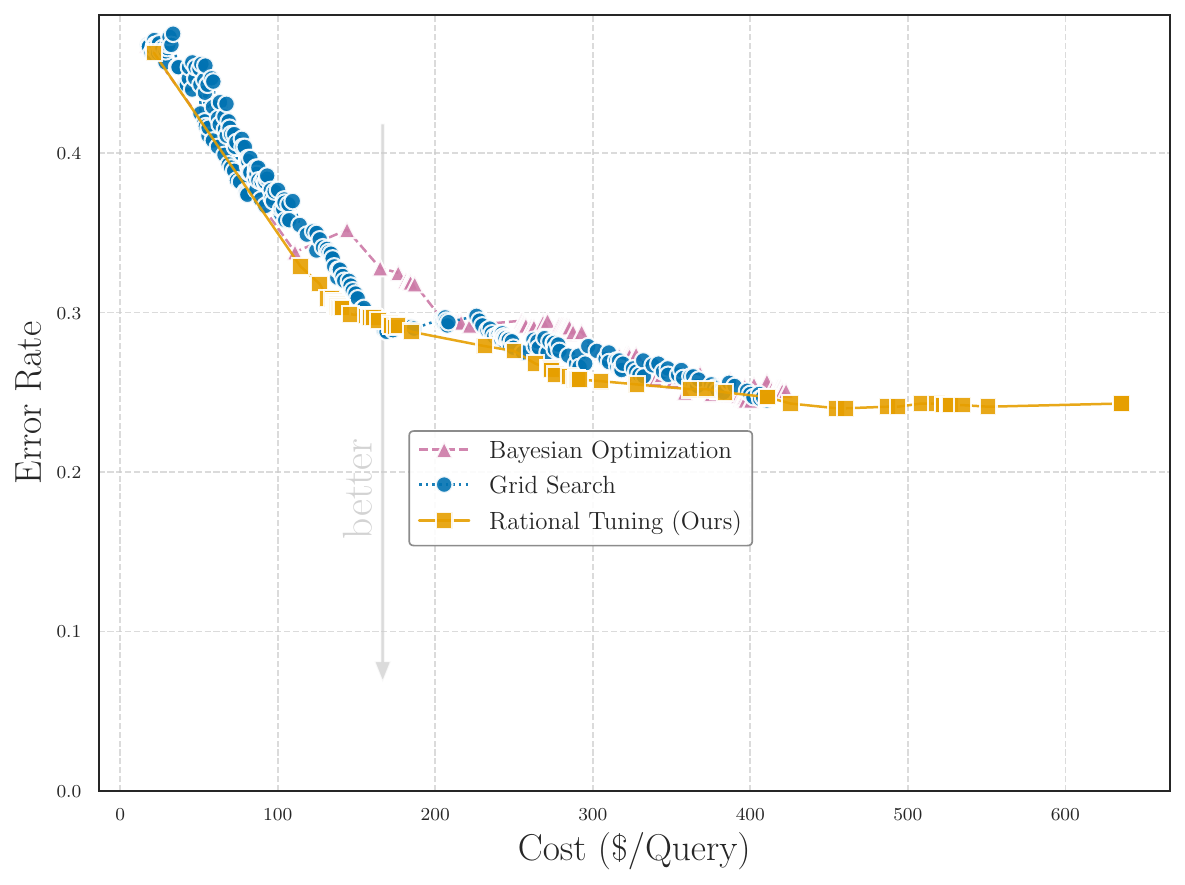} 
        \caption{Error-cost curves}
        \label{fig:sensitivity_copula}
    \end{subfigure}
    \hfill
    \begin{subfigure}[b]{0.48\textwidth}
        \centering
        \includegraphics[width=\textwidth]{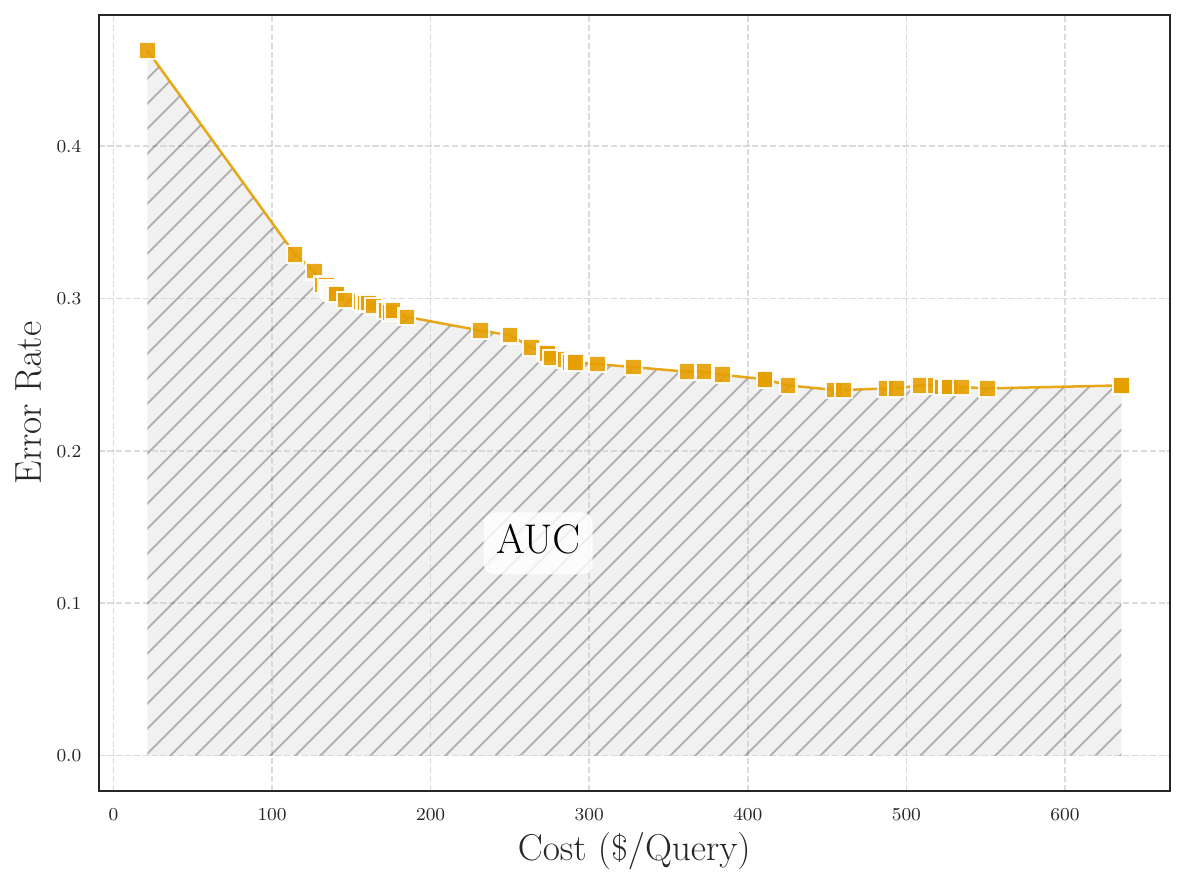}
        \caption{Area under the error-cost curve}
        \label{fig:sensitivity_marginal}
    \end{subfigure}
    \caption{Performance evaluation via the area under the error-cost curve (AUC). (a) Error-cost curves computed on the MedMCQA test set for the Llama3 3B $\rightarrow$ 8B $\rightarrow$ 70B $\rightarrow$ 405B cascade. (b) Illustration of the area under the error-cost curve (AUC).}
    \label{fig:auc_computation}
\end{figure}

\begin{figure}[htb]
    \centering
    \begin{subfigure}[b]{0.48\textwidth}
        \centering
        \includegraphics[width=\textwidth]{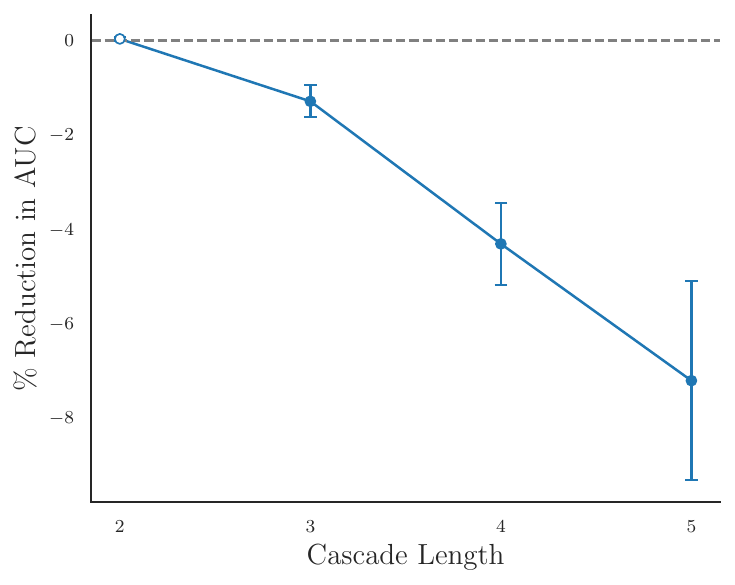}
        \caption{Relative to Bayesian optimization}
        \label{fig:reduction_in_auc_vs_cascade_len_hebo}
    \end{subfigure}
    \hfill
    \begin{subfigure}[b]{0.48\textwidth}
        \centering
        \includegraphics[width=\textwidth]{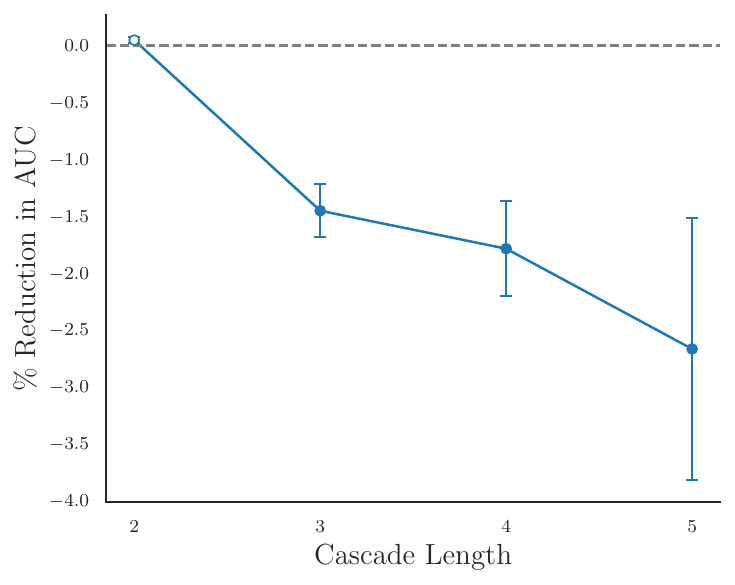}
        \caption{Relative to grid search}
        \label{fig:reduction_in_auc_vs_cascade_len_gridsearch}
    \end{subfigure}
    \caption{Reduction in the area under the error-cost curve (AUC) on the test set 
    when using our Rational Tuning framework to select confidence thresholds, 
    as a function of cascade length. 
    In (a), we compare against a Bayesian optimization baseline, while in (b) we compare against high-resolution grid search. For longer cascades, our method outperforms both baselines by larger margins. Error bars show $\pm1\sigma$ of the mean percentage change, and filled markers indicate statistical significance.}
    \label{fig:baseline_comparison_by_cascade_len}
\end{figure}

\begin{table}[htb]
\centering
\resizebox{0.9\textwidth}{!}{
\begin{tabular}{l c c c c c c c}
\toprule
\multirow{2}{*}{\textbf{Benchmark}}
& \multicolumn{3}{c}{\textbf{AUC} $\boldsymbol{\downarrow}$}
& \multicolumn{2}{c}{\textbf{RT (Ours) vs GS}}
& \multicolumn{2}{c}{\textbf{RT (Ours) vs BO}} \\
\cmidrule(lr){2-4}\cmidrule(lr){5-6}\cmidrule(lr){7-8}
& {RT (Ours)} & {GS} & {BO}
& \(\%\Delta_{\mathrm{GS}}\downarrow\) & \(p_{\mathrm{GS}}\)
& \(\%\Delta_{\mathrm{BO}}\downarrow\) & \(p_{\mathrm{BO}}\) \\
\midrule
\textbf{MMLU}      & 0.288 & \textbf{0.288} & 0.294
                   & 0.02   & $2.8\times 10^{-1}$
                   & -2.30  & $\boldsymbol{4.3\times 10^{-3}}$ \\
\textbf{MedMCQA}   & $\boldsymbol{0.381}$ & 0.384 & 0.389
                   & -0.79  & $\boldsymbol{5.5\times 10^{-3}}$
                   & -2.14  & $\boldsymbol{3.0\times 10^{-2}}$ \\
\textbf{TriviaQA}  & $\boldsymbol{0.181}$ & 0.182 & 0.183
                   & -0.74  & $\boldsymbol{6.0\times 10^{-3}}$
                   & -1.58  & $\boldsymbol{1.0\times 10^{-2}}$ \\
\textbf{XSum}      & $\boldsymbol{0.409}$ & 0.414 & 0.416
                   & -1.48  & $\boldsymbol{2.0\times 10^{-6}}$
                   & -1.92  & $\boldsymbol{8.9\times 10^{-4}}$ \\
\textbf{GSM8K}     & $\boldsymbol{0.158}$ & 0.162 & 0.160
                   & -2.68  & $\boldsymbol{1.0\times 10^{-5}}$
                   & -1.15  & $\boldsymbol{1.6\times 10^{-2}}$ \\
\textbf{TruthfulQA}& $\boldsymbol{0.436}$ & 0.437 & 0.438
                   & -0.26  & $\boldsymbol{3.8\times 10^{-2}}$
                   & -0.46  & $7.9\times 10^{-2}$ \\
\midrule
\textbf{Average}   & $\boldsymbol{0.309}$ & 0.311 & 0.313
                   & -0.99  & --
                   & -1.60  & -- \\
\bottomrule
\end{tabular}}
\caption{
Area under the error-cost curve (AUC) on the test set, showing that our Rational Tuning (``RT'') framework for selecting confidence thresholds consistently outperforms both a Bayesian optimization baseline (``BO'') and high-resolution grid search (``GS''). The mean percentage changes ($\%\Delta$) are statistically significant at the $p < 0.05$ on almost all benchmarks, as measured by Wilcoxon rank-sum tests paired by cascade (highlighted in bold).
}
\label{tab:performance_by_benchmark}
\end{table}

In this section, we examine the performance and runtime scaling of our continuous optimization-based algorithm (\ref{eq:optimization_problem}) for selecting optimal confidence thresholds. We consider all 26 possible cascades of length $k \geq 2$ composed of Meta's Llama models (1B, 3B, 8B, 70B, and 405B). We evaluate against Bayesian optimization and high-resolution grid search baselines on six benchmarks (MMLU, MedMCQA, XSum, TriviaQA, GSM8K, TruthfulQA) spanning general-purpose knowledge and reasoning, domain-specific QA, text summarization, open-ended QA, mathematical reasoning, and the ability to avoid hallucinations on adversarial questions.

\vspace{4mm}

\noindent \textbf{Performance metrics}: we evaluate the area under the error-cost curve (AUC) on the test set. Specifically, computing the AUC means plotting the test error ($y$ axis) against the expected inference cost in dollars/query ($x$ axis) and evaluating the integral of this curve. Figure \ref{fig:auc_computation}a shows an example error-cost curve and Figure \ref{fig:auc_computation}b highlights the computation of AUC. We normalize the cost values to lie between 0 and 1, resulting in AUC scores between 0 and 1 ($\text{error rate} \times \text{normalized cost}$). Broadly, a 1\% reduction in AUC means that the error rate is 1\% lower at the same inference cost (on average).

In addition, we measure how the runtime for finding optimal confidence thresholds scales with the length of the cascade and the desired resolution of the error-cost curve on the $x$ axis, \textit{i.e.}, how densely we sample the optimal thresholds. We have not overly optimized our code and mainly aim to contrast asymptotic scaling behavior.

\vspace{4mm}

\noindent \textbf{Bayesian optimization baseline}: this baseline runs Bayesian optimization with a Gaussian process surrogate function, via the HEBO package (\citealp{cowen2022}, \citealp{shahriari2016}). The Bayesian optimization minimizes (\ref{eq:optimization_problem}), in an analogous manner to our Markov-copula (``Rational Tuning'') approach. We run HEBO for as many iterations as needed until the change in loss between successive iterations is below a numerical tolerance ($\epsilon = 10^{-5}$). In practice, we found that the final change in loss is typically 0.0. Following the practical guidance of HEBO's authors\footnote{\texttt{github.com/huawei-noah/HEBO/tree/master/HEBO}. Accessed April 6, 2025.}, we use four parallel suggestions during each iteration. We adaptively interpolate the optimal thresholds computed by HEBO in the same way we do for Rational Tuning (see Equation (\ref{eq:adaptive_infill})).

\vspace{4mm}

\noindent \textbf{High-resolution grid search baseline}: this baseline selects optimal confidence thresholds by searching over adaptive grids computed from the model-specific quantiles of calibrated confidence. Specifically, in each dimension the grid ranges from $\phi_{\text{min}}$ to $\phi_{\text{max}}$ in increments of $2.5\%$ probability mass. This results in considering $40^{k-1}$ candidate threshold combinations for cascades with $k$ models, ranging from 40 candidates for a two-model cascade to $40^4$ = 2,560,000 candidates for a five-model cascade. After scoring all candidate threshold combinations, we use the Skyline operator (implemented in the Python package \texttt{paretoset}\footnote{Open-source implementation available at \texttt{github.com/tommyod/paretoset}. Accessed January 13, 2025.}) to filter the candidate threshold vectors down to the Pareto-optimal set (\citealp{brzsnyi2001}). A candidate threshold vector $\boldsymbol{\theta} = (\phi_1, ..., \phi_{k-1})$ is \textit{Pareto-optimal} if its performance metrics $(\mathbb{P}_{\boldsymbol{\theta}}(\text{Correct}), \mathbb{E}_{\boldsymbol{\theta}}[\text{Cost}])$ are not dominated by any other candidate threshold vector $\boldsymbol{\theta'}$ in the sense that $\mathbb{P}_{\boldsymbol{\theta'}}(\text{Correct}) > \mathbb{P}_{\boldsymbol{\theta}}(\text{Correct})$ and $\mathbb{E}_{\boldsymbol{\theta'}}[\text{Cost}] < \mathbb{E}_{\boldsymbol{\theta}}[\text{Cost}]$.

\vspace{4mm}

Figure \ref{fig:baseline_comparison_by_cascade_len} shows that our Rational Tuning framework for selecting confidence thresholds results in lower AUC on the test set compared to the baselines. Each point on the plot shows the average percentage reduction in AUC for all cascades of a given length $k$, averaged across all benchmarks. As cascade length $k$ grows, our method outperforms the baselines by a larger margin. For example, the mean reduction in AUC compared to Bayesian optimization is 4.3\% for $k\geq3$; 5.8\% for $k\geq3$; and 7.2\% for $k=5$. The corresponding performance gains relative to high-resolution grid search are 2.0\%, 2.2\%, and 2.7\%. We computed statistical significance of these percentage differences using a Wilcoxon rank-sum test paired by cascade.

We hypothesize that the performance gains of Rational Tuning relative to Bayesian optimization stem from the fact that our framework applies a (mostly correct) inductive assumption about the correlation structure of LLM cascades, whereas Bayesian optimization is a general-purpose algorithm for black-box optimization. Section \ref{subsec:low_sample} corroborates this hypothesis by presenting more significant performance gains in the low-sample limit with $n\leq30$ training examples.

By contrast, it is not surprising that grid search performs worse as $k$ increases, since searching for effective threshold combinations by trial and error suffers from the curse of dimensionality.

Table \ref{tab:performance_by_benchmark} presents the results broken down by benchmark rather than cascade length. The table shows that Rational Tuning consistently outperforms Bayesian optimization and grid search across benchmarks, independent of cascade length. The only benchmark mark where we report a tie is MMLU. On almost all benchmarks, the reductions in AUC are statistically significant at the $p < 0.05$ level.

\subsubsection{Performance in the Low-Sample Limit}
\label{subsec:low_sample}

Our Rational Tuning methodology relies on a small labeled training data set consisting of LLM confidence scores and corresponding binary correctness labels. Since labeled data is scarce in many applications, we supplement our main experiment ($n \approx 300$ training examples) with a study of the low-sample limit. Here, we re-run our experiment for $n \leq 30$ training examples. For each benchmark, we ensure a balanced subsample with both correct and incorrect answers from each model by sampling training examples in pairs (one correct, one incorrect for each model) for a fixed number of iterations, with a target of $n=30$ examples. Since collisions may occur, the final number of sampled training examples lies between 20 and 30, depending on the benchmark.

Figure \ref{fig:baseline_comparison_by_cascade_len_low_sample} displays the results, revealing that Rational Tuning significantly outperforms the Bayesian optimization and grid search baselines as cascade length grows. On cascades with $k \geq 3$ models, the average performance gain is 10.2\% relative to Bayesian optimization, and 5.6\% relative to high-resolution grid search.

Table \ref{tab:performance_by_benchmark_lowsample} breaks down these results by benchmark. We see that Rational Tuning outperforms the baselines on each benchmark except for TruthfulQA, where high-resolution grid search performs best. On the other benchmarks (MMLU, MedMCQA, TriviaQA, XSum, and GSM8K), the performance gains of our method are statistically significant at the $p < 0.05$ level, according to Wilcoxon rank-sum tests paired by cascade.

Our interpretation of these results is that our Rational Tuning framework benefits from making inductive assumptions about the interactions between the error rates of different LLMs. Crucially, fitting these inductive assumptions to empirical data requires only few observations: since each copula model depends on a single scalar correlation parameter $\theta \in \mathbb{R}$, our method requires only $k-1$ parameters to model the interactions between the error rates of $k$ different LLMs.

\begin{figure}[htb]
    \centering
    \begin{subfigure}[b]{0.48\textwidth}
        \centering
        \includegraphics[width=\textwidth]{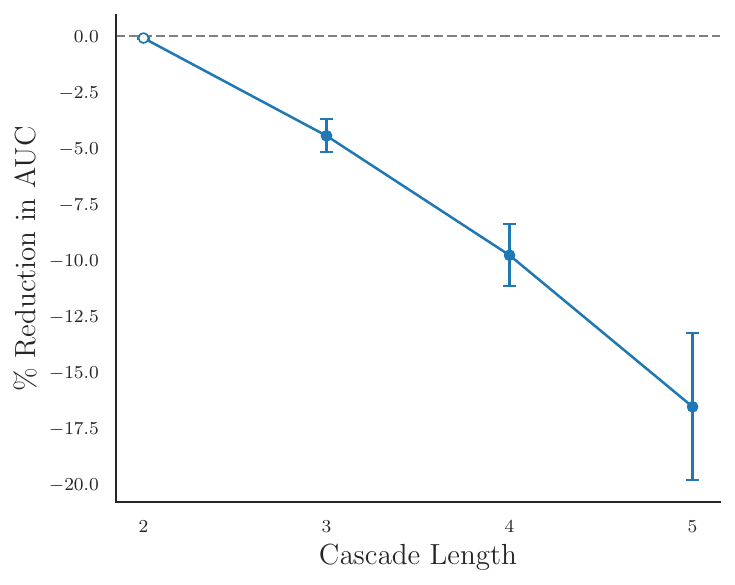}
        \caption{Relative to Bayesian optimization}
        \label{fig:reduction_in_auc_vs_cascade_len_hebo_low_sample}
    \end{subfigure}
    \hfill
    \begin{subfigure}[b]{0.48\textwidth}
        \centering
        \includegraphics[width=\textwidth]{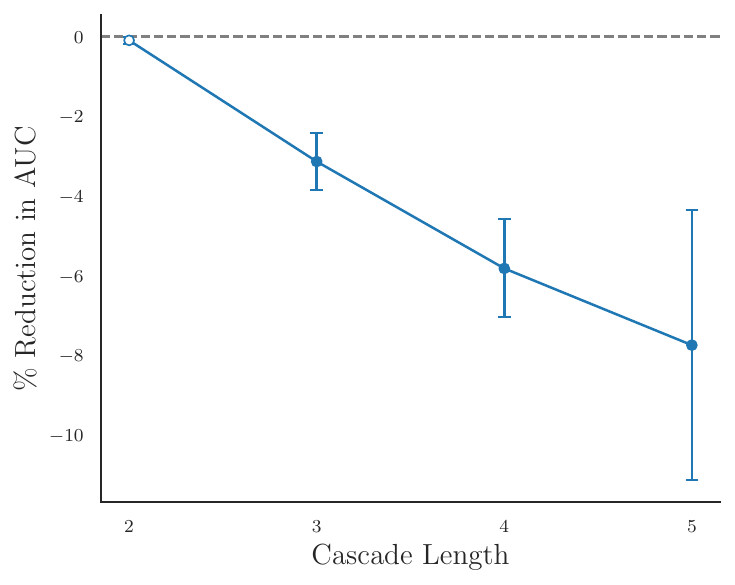}
        \caption{Relative to grid search}
        \label{fig:reduction_in_auc_vs_cascade_len_gridsearch_low_sample}
    \end{subfigure}
    \caption{Reduction in the area under the error-cost curve (AUC) as cascade length grows, in the low-sample limit ($n\leq30$ training examples), when using our Rational Tuning framework. In (a), we compare against a Bayesian optimization baseline, while in (b) we compare against high-resolution grid search. Our method increasingly outperforms the baselines as cascade length grows. Error bars show $\pm1\sigma$ of the mean percentage change, and filled markers indicate statistical significance.}
    \label{fig:baseline_comparison_by_cascade_len_low_sample}
\end{figure}

\begin{table}[htb]
\centering
\resizebox{0.9\textwidth}{!}{
\begin{tabular}{l c c c c c c c}
\toprule
\multirow{2}{*}{\textbf{Benchmark}}
& \multicolumn{3}{c}{\textbf{AUC} $\boldsymbol{\downarrow}$}
& \multicolumn{2}{c}{\textbf{RT (Ours) vs GS}}
& \multicolumn{2}{c}{\textbf{RT (Ours) vs BO}} \\
\cmidrule(lr){2-4}\cmidrule(lr){5-6}\cmidrule(lr){7-8}
& {RT (Ours)} & {GS} & {BO}
& \(\%\Delta_{\mathrm{GS}}\downarrow\) & \(p_{\mathrm{GS}}\)
& \(\%\Delta_{\mathrm{BO}}\downarrow\) & \(p_{\mathrm{BO}}\) \\
\midrule
\textbf{MMLU}      & $\boldsymbol{0.293}$ & 0.308 & 0.316
                   & -5.51 & $\boldsymbol{8.0\times 10^{-6}}$
                   & -7.74 & $\boldsymbol{1.0\times 10^{-6}}$ \\
\textbf{MedMCQA}   & $\boldsymbol{0.399}$ & 0.416 & 0.419
                   & -4.18 & $\boldsymbol{1.1\times 10^{-5}}$
                   & -4.72 & $\boldsymbol{5.0\times 10^{-6}}$ \\
\textbf{TriviaQA}  & $\boldsymbol{0.197}$ & 0.200 & 0.203
                   & -2.11 & $\boldsymbol{1.5\times 10^{-2}}$
                   & -3.26 & $\boldsymbol{5.1\times 10^{-3}}$ \\
\textbf{XSum}      & $\boldsymbol{0.422}$ & 0.431 & 0.442
                   & -2.42 & $\boldsymbol{2.6\times 10^{-2}}$
                   & -5.06 & $\boldsymbol{4.2\times 10^{-4}}$ \\
\textbf{GSM8K}     & $\boldsymbol{0.187}$ & 0.195 & 0.197
                   & -3.41 & $\boldsymbol{2.5\times 10^{-2}}$
                   & -4.21 & $\boldsymbol{2.1\times 10^{-4}}$ \\
\textbf{TruthfulQA}& 0.449 & $\boldsymbol{0.442}$ & 0.452
                   & 1.60  & $1.0\times 10^{0}$
                   & -0.73 & $1.2\times 10^{-1}$ \\
\midrule
\textbf{Average}   & $\boldsymbol{0.325}$ & 0.332 & 0.338
                   & -2.67 & --
                   & -4.29 & -- \\
\bottomrule
\end{tabular}
}
\caption{
Area under the error-cost curve (AUC) in the low-sample limit ($n\leq30$ training examples), showing that our Rational Tuning (``RT'') framework for selecting confidence thresholds consistently outperforms both a Bayesian optimization baseline (``BO'') and high-resolution grid search (``GS''). The mean percentage changes ($\%\Delta$) are statistically significant at the $p < 0.05$ on almost all benchmarks, as measured by Wilcoxon rank-sum tests paired by cascade (highlighted in bold).
}
\label{tab:performance_by_benchmark_lowsample}
\end{table}

\subsubsection{Sensitivity to Statistical Assumptions}

\begin{figure}[t]
    \centering
    \begin{subfigure}[b]{0.48\textwidth}
        \centering
        \includegraphics[width=\textwidth]{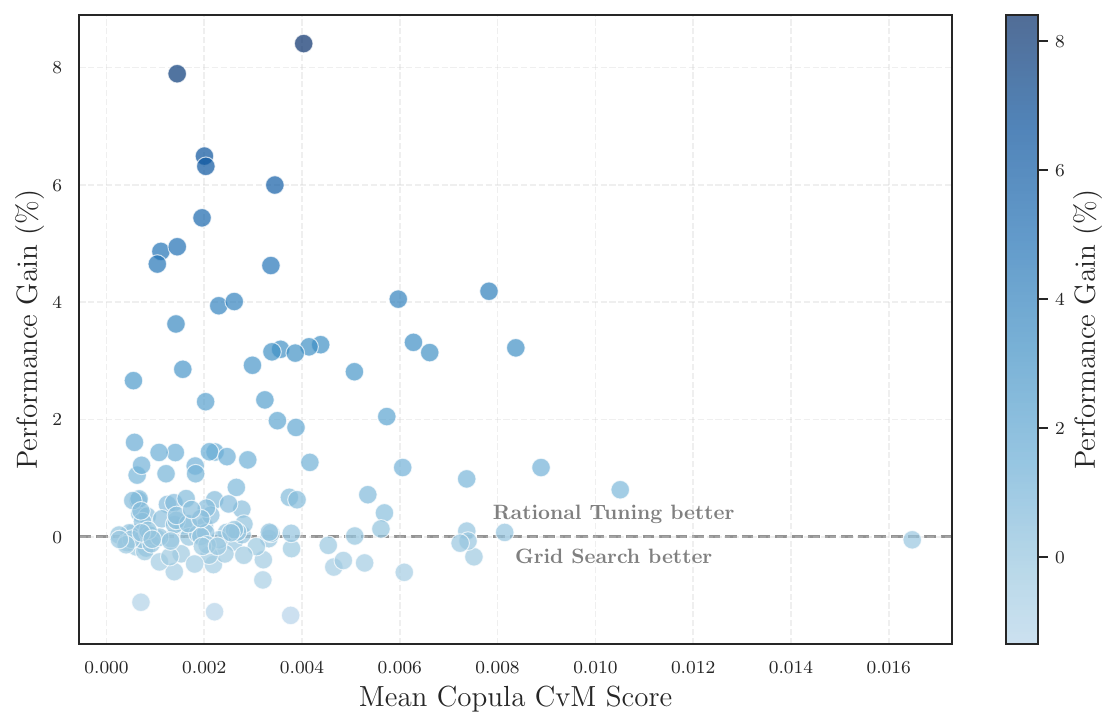} 
        \caption{Copula $\sqrt{n}\text{CvM}$}
        \label{fig:sensitivity_copula}
    \end{subfigure}
    \hfill
    \begin{subfigure}[b]{0.48\textwidth}
        \centering
        \includegraphics[width=\textwidth]{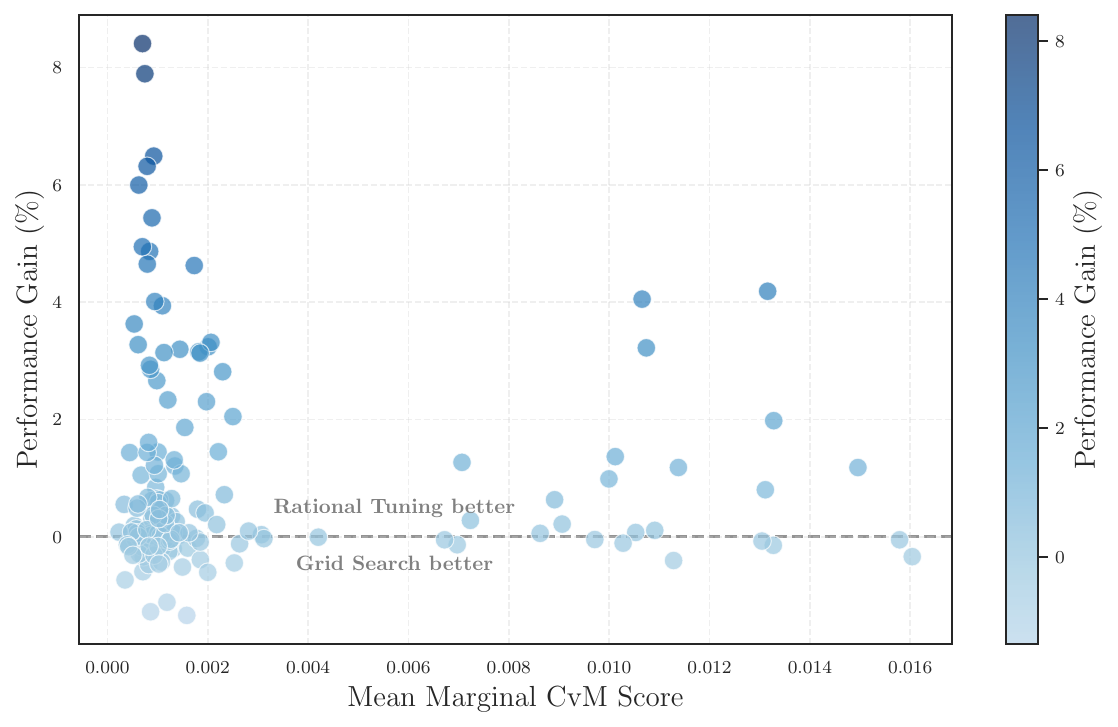}
        \caption{Marginal $\sqrt{\text{CvM}}$}
        \label{fig:sensitivity_marginal}
    \end{subfigure}
    \caption{Sensitivity of Rational Tuning's performance gains to the Cramér-von Mises (CvM) test statistics (lower is better). Overall, performance appears to be more sensitive to mis-specification of the copula model.}
    \label{fig:statistical_sensitivity}
\end{figure}

Our implementation of Rational Tuning models uses mixtures of beta distributions to model the marginal distribution of confidence scores, and Gumbel copulas to model pairwise correlations between LLMs. In Section \ref{subsec:gof}, we quantify the deviation between these modeling assumptions and the true empirical distributions via Cramér-von Mises (CvM) statistics.

Figure \ref{fig:statistical_sensitivity} shows the sensitivity of Rational Tuning's performance gains (relative to the high-resolution grid search baseline) to the mean CvM statistics for each cascade. For example, if $M_1 \rightarrow ... \rightarrow M_k$ has marginal $\sqrt{\text{CvM}}$ scores $\sigma_1, ..., \sigma_k$ and copula $\sqrt{n}\text{CvM}$ scores $\sigma_{1,2}, \sigma_{2,3}, ..., \sigma_{k-1,k}$, then the mean marginal and copula CvM scores are $\overline{\sigma}_\text{marginal} = \frac{1}{k}\sum_{i=1}^{k}\sigma_i$, and $\overline{\sigma}_\text{copula} = \frac{1}{k}\sum_{i=2}^{k}\sigma_{i-1,i}$, respectively.

Figures \ref{fig:statistical_sensitivity}a and \ref{fig:statistical_sensitivity}b suggest that lower CvM divergences improve the relative performance of Rational Tuning. This effect is more pronounced for the copula statistics rather than the marginal statistics, highlighting the importance of correctly modeling the correlations between LLMs. In both plots, the light blue data points with little performance gain despite excellent CvM values are heavily enriched for two-model cascades. For cascades with $k=2$ models, Rational Tuning generally performs on par with grid search or Bayesian optimization.

The plots also suggest some robustness to deviations from the statistical assumptions. We are eager to explore Rational Tuning's robustness to model mis-specification in greater detail in subsequent work.

\subsubsection{Computational Scaling}

\begin{figure}[t]
   \centering
   \begin{subfigure}[t]{0.48\textwidth}
       \centering
       \includegraphics[width=\textwidth]{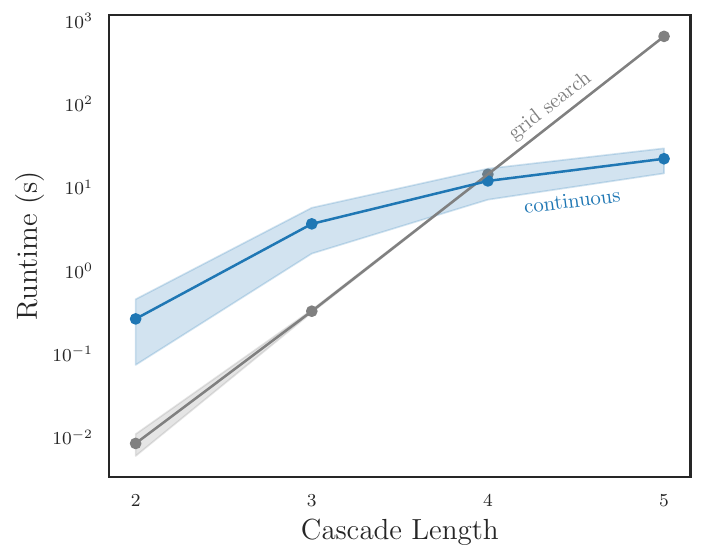}
       \caption{The runtime of grid search grows exponentially in the cascade length, whereas the runtime of our method grows as a low-order polynomial (semilog-y plot).}
       \label{fig:runtime}
   \end{subfigure}
   \hfill
   \begin{subfigure}[t]{0.48\textwidth}
       \centering
       \includegraphics[width=\textwidth]{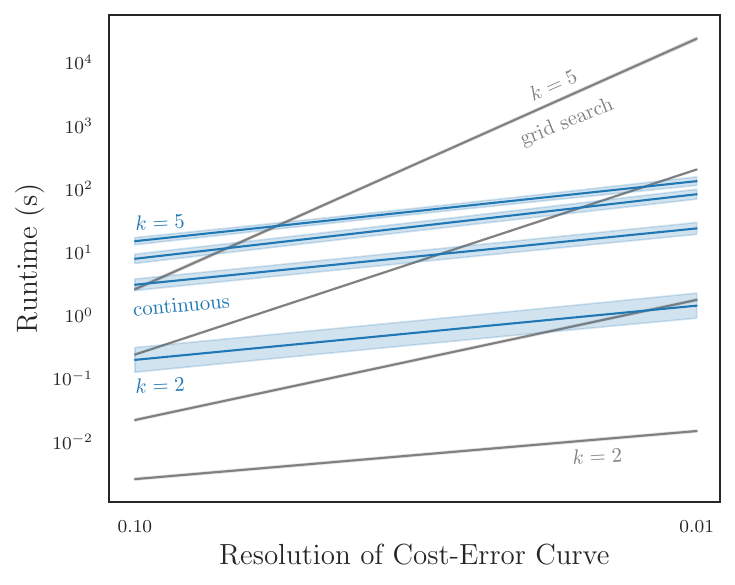}
       \caption{The runtime of our method always grows linearly in the desired resolution $h$ at which the error-cost curve is sampled along the cost axis, independent of the cascade length $k$. However, the runtime of grid search scales as $h^{k-1}$ in $h$ and $k$ (log-log plot).}
       \label{fig:runtime}
   \end{subfigure}
   \caption{Shows runtime scaling for computing the full error-cost curve, comparing our continuous-optimization based algorithm (``continuous'', \textcolor{tabblue}{blue}) to grid search (``grid search'', \textcolor{mplgray}{gray}). Our method scales much more favorably as the cascade length grows, and as the error-cost curve is sampled more densely along the cost axis. The shading shows $\pm1\sigma$ of the observed data points.}
   \label{fig:runtime_comparison}
\end{figure}

Moving on to a comparison of computational complexity, Figure \ref{fig:runtime_comparison} shows that the runtime of our Rational Tuning framework for finding optimal confidence thresholds scales much more favorably compared to grid search, both in the length of the cascade $k$ as well as the desired resolution $h$ of the error-cost curve. Here, the resolution $h$ refers to the density at which we sample the optimal error-cost curve along the cost-axis. For grid search, $h$ is simply the reciprocal of the number of grid points in each dimension. For our method, $h$ is the reciprocal of the number of times we solve the optimization problem (\ref{eq:optimization_problem}). In other words, $h = 1/|\Lambda|$, where $\Lambda$ is the set of cost sensitivities we consider in (\ref{eq:optimization_problem}).

We omit Bayesian optimization from Figure \ref{fig:runtime_comparison}, since we observed longer runtimes (10-1000x longer) that exhibit less clear scaling with the length $k$ of the cascade. Specifically, the average number of iterations until convergence increases from 2.9 for $k=2$ to 5.3 for $k=4$, then drops back to 4.0  for $k=5$. Across the data, the minimum and maximum number of iterations required until convergence are 2 and 14.

\subsubsection{Practical Guidelines}

Our experiments suggest that Rational Tuning is the preferred methodology for tuning confidence thresholds for longer cascades ($k \geq 3$) with multiple deferral thresholds. However, for two-model cascades ($k=2$) with a single deferral threshold, the Markov assumption is void (as there are only two models). In this setting, the nonparametric nature of one-dimensional grid search should give the most reliable results.

When applying Rational Tuning for cascades with $k \geq 3$ models, we recommend visually inspecting the match between the assumed probabilistic model and the empirical data. Specifically, we recommend the following visual diagnostics:
\begin{itemize}
    \item \textbf{Marginal distributions}: compare histograms of the fitted and empirical distributions (as in Figure \ref{fig:marginal_goodness_of_fit}) and verify that the overall fit is adequate.
    \item \textbf{Pairwise correlations}: construct copula plots as in Figures \ref{fig:copula_scatterplots_1} and \ref{fig:copula_scatterplots_2}. Compare these plots to random samples from a Gumbel copula with correlation parameter $\hat{\theta} = \frac{1}{1-\hat{\tau}}$, where $\hat{\tau}$ is the empirical rank correlation.
    \item \textbf{Markov assumption}: construct rank correlation plots as in Figure \ref{fig:kendalls_markov}. Verify that rank correlations are strong near the diagonal.
\end{itemize}

In addition, it is important to assess the expected calibration error (ECE) of the confidence scores. We recommend computing the ECE using quantile binning with 10 or 20 bins; ideally, the ECE should not exceed $10\%$.

\section{Conclusion}

We have presented a framework for rationally tuning the confidence thresholds of LLM cascades using continuous optimization. Our approach is based on a parametric probabilistic model for the calibrated confidences of a sequence of LLMs. This probabilistic model is based on a Markov factorization, which accounts for pairwise correlations between the error rates of different LLMs using copulas, yielding a data-efficient approach. Goodness-of-fit analyses spanning 10 LLMs and 6 benchmarks have shown good agreement with the test data.

Importantly, our probabilistic model yields analytical expressions for a cascade's error rate and expected inference cost. These expressions are differentiable with respect to the cascade's confidence thresholds, making continuous optimization possible. Compared to selecting confidence thresholds using Bayesian optimization and high-resolution grid search, our Rational Tuning framework yields more favorable error-cost trade-offs as cascade length grows, outperforming the baselines by up to 7.2\% when using $n \approx 300$ labeled training examples. In the low-sample limit ($n \leq 30$ training examples), the performance gains reach up to 16.5\%, suggesting that our framework's inductive assumptions about the interactions between the error rates of different LLMs improve sample efficiency.

Building on these promising results, an interesting direction would be to apply our probabilistic modeling framework to LLM routing, in which a central routing model sends a query to the most suitable LLM in a single step, avoiding cumulative cost increases as the query propagates down a cascade. Since cumulative cost increases are especially severe for longer cascades (at which our methodology excels), the routing setting may more effectively leverage Rational Tuning's capacity for modeling dependencies between arbitrarily many distinct LLMs. For instance, suppose the routing decision depends on noisy estimates $\hat{\phi}_1, ..., \hat{\phi}_n$ of the LLMs' true calibrated confidences. In this case, balancing the noisy observations $\hat{\phi}_i$ against their probabilistic expectations $\mathbb{E}[\hat{\phi}_i | \hat{\phi}_1, ..., \hat{\phi}_{i-1}, \hat{\phi}_{i+1}, ... \hat{\phi}_n]$ may lead to more effective routing decisions.

Ultimately, our results point to a larger vision for the future of deploying LLMs. Using probabilistic models, we will be able to adaptively select the most suitable model to answer each query, improving both reliability and performance. Additionally, probabilistic modeling will enable us to anticipate the performance of a system of LLMs under different conditions, making it possible to seamlessly adapt the system as conditions shift. We are excited to further pursue this line of research in subsequent work.

\bibliography{main}

\begin{thebibliography}{}

\bibitem[Aggarwal et~al., 2024]{madaan2024}
Aggarwal, P., Madaan, A., Anand, A., Potharaju, S.~P., Mishra, S., Zhou, P., Gupta, A., Rajagopal, D., Kappaganthu, K., Yang, Y., Upadhyay, S., Faruqui, M., and Mausam (2024).
\newblock {A}uto{M}ix: Automatically mixing language models.

\bibitem[Azaria and Mitchell, 2023]{azaria2023}
Azaria, A. and Mitchell, T. (2023).
\newblock The internal state of an {LLM} knows when it's lying.

\bibitem[B{\"o}rzs{\"o}nyi et~al., 2001]{brzsnyi2001}
B{\"o}rzs{\"o}nyi, S., Kossmann, D., and Stocker, K. (2001).
\newblock The skyline operator.
\newblock {\em Proceedings 17th International Conference on Data Engineering}, pages 421--430.

\bibitem[Brown et~al., 2020]{brown2020}
Brown, T.~B., Mann, B., Ryder, N., Subbiah, M., Kaplan, J., Dhariwal, P., Neelakantan, A., Shyam, P., Sastry, G., Askell, A., Agarwal, S., Herbert-Voss, A., Krueger, G., Henighan, T., Child, R., Ramesh, A., Ziegler, D.~M., Wu, J., Winter, C., Hesse, C., Chen, M., Sigler, E., Litwin, M., Gray, S., Chess, B., Clark, J., Berner, C., McCandlish, S., Radford, A., Sutskever, I., and Amodei, D. (2020).
\newblock Language models are few-shot learners.

\bibitem[Burns et~al., 2024]{burns2024}
Burns, C., Ye, H., Klein, D., and Steinhardt, J. (2024).
\newblock Discovering latent knowledge in language models without supervision.

\bibitem[Casella and Berger, 2002]{casellaberger2002}
Casella, G. and Berger, R. (2002).
\newblock {\em Statistical Inference}.
\newblock Duxbury Press, Pacific Grove, 2 edition.

\bibitem[Chen et~al., 2024a]{chen2024}
Chen, C., Liu, K., Chen, Z., Gu, Y., Wu, Y., Tao, M., Fu, Z., and Ye, J. (2024a).
\newblock {INSIDE}: {LLM}s' internal states retain the power of hallucination detection.

\bibitem[Chen et~al., 2024b]{chen2024sys}
Chen, L., Davis, J.~Q., Hanin, B., Bailis, P., Stoica, I., Zaharia, M., and Zou, J. (2024b).
\newblock Are more {LLM} calls all you need? towards scaling laws of compound inference systems.

\bibitem[Chen et~al., 2023]{chen2023}
Chen, L., Zaharia, M., and Zou, J. (2023).
\newblock Frugalgpt: How to use large language models while reducing cost and improving performance.

\bibitem[Cobbe et~al., 2021]{cobbe2021}
Cobbe, K., Kosaraju, V., Bavarian, M., Chen, M., Jun, H., Kaiser, L., Plappert, M., Tworek, J., Hilton, J., Nakano, R., Hesse, C., and Schulman, J. (2021).
\newblock Training verifiers to solve math word problems.
\newblock {\em arXiv preprint arXiv:2110.14168}.

\bibitem[Cowen-Rivers et~al., 2022]{cowen2022}
Cowen-Rivers, A., Lyu, W., Tutunov, R., Wang, Z., Grosnit, A., Griffiths, R.-R., Maravel, A., Hao, J., Wang, J., Peters, J., and Bou~Ammar, H. (2022).
\newblock Hebo: Pushing the limits of sample-efficient hyperparameter optimisation.
\newblock {\em Journal of Artificial Intelligence Research}, 74.

\bibitem[Dempster et~al., 1977]{dempster1977}
Dempster, A.~P., Laird, N.~M., and Rubin, D.~B. (1977).
\newblock Maximum likelihood from incomplete data via the {EM} algorithm.
\newblock {\em Journal of the Royal Statistical Society: Series B (Methodological)}, 39(1):1--22.

\bibitem[Dettmers et~al., 2024]{dettmers2022}
Dettmers, T., Lewis, M., Belkada, Y., and Zettlemoyer, L. (2024).
\newblock {LLM}.int8(): 8-bit matrix multiplication for transformers at scale.
\newblock In {\em Proceedings of the 36th International Conference on Neural Information Processing Systems}, NIPS '22, Red Hook, NY, USA. Curran Associates Inc.

\bibitem[Ding et~al., 2024]{ding2024}
Ding, D., Mallick, A., Wang, C., Sim, R., Mukherjee, S., Ruhle, V., Lakshmanan, L. V.~S., and Awadallah, A.~H. (2024).
\newblock Hybrid {LLM}: Cost-efficient and quality-aware query routing.

\bibitem[Farquhar et~al., 2024]{farquhar2024}
Farquhar, S., Kossen, J., Kuhn, L., and Gal, Y. (2024).
\newblock Detecting hallucinations in large language models using semantic entropy.
\newblock {\em Nature}, 630(8017):625--630.

\bibitem[Genest et~al., 2009]{genest2009}
Genest, C., Rémillard, B., and Beaudoin, D. (2009).
\newblock Goodness-of-fit tests for copulas: A review and a power study.
\newblock {\em Insurance: Mathematics and Economics}, 44(2):199--213.

\bibitem[Guo et~al., 2017]{guo2017}
Guo, C., Pleiss, G., Sun, Y., and Weinberger, K.~Q. (2017).
\newblock On calibration of modern neural networks.

\bibitem[Gupta et~al., 2024]{gupta2024}
Gupta, N., Narasimhan, H., Jitkrittum, W., Rawat, A.~S., Menon, A.~K., and Kumar, S. (2024).
\newblock Language model cascades: Token-level uncertainty and beyond.

\bibitem[Hari and Thomson, 2023]{hari2023}
Hari, S.~N. and Thomson, M. (2023).
\newblock Tryage: Real-time, intelligent routing of user prompts to large language models.

\bibitem[Hendrycks et~al., 2021]{hendrycks2021mmlu}
Hendrycks, D., Burns, C., Basart, S., Zou, A., Mazeika, M., Song, D., and Steinhardt, J. (2021).
\newblock Measuring massive multitask language understanding.
\newblock {\em Proceedings of the International Conference on Learning Representations (ICLR)}.

\bibitem[Hendrycks and Gimpel, 2018]{hendrycks2017}
Hendrycks, D. and Gimpel, K. (2018).
\newblock A baseline for detecting misclassified and out-of-distribution examples in neural networks.

\bibitem[Jiang et~al., 2023]{jiang2023}
Jiang, D., Ren, X., and Lin, B.~Y. (2023).
\newblock {LLM}-blender: Ensembling large language models with pairwise ranking and generative fusion.
\newblock In Rogers, A., Boyd-Graber, J., and Okazaki, N., editors, {\em Proceedings of the 61st Annual Meeting of the Association for Computational Linguistics (Volume 1: Long Papers)}, pages 14165--14178, Toronto, Canada. Association for Computational Linguistics.

\bibitem[Jiang et~al., 2021]{jiang2021}
Jiang, Z., Araki, J., Ding, H., and Neubig, G. (2021).
\newblock How can we know when language models know? on the calibration of language models for question answering.

\bibitem[Jitkrittum et~al., 2024]{jitkrittum2024}
Jitkrittum, W., Gupta, N., Menon, A.~K., Narasimhan, H., Rawat, A.~S., and Kumar, S. (2024).
\newblock When does confidence-based cascade deferral suffice?

\bibitem[Joshi et~al., 2017]{joshi2017}
Joshi, M., Choi, E., Weld, D., and Zettlemoyer, L. (2017).
\newblock {T}rivia{QA}: A large scale distantly supervised challenge dataset for reading comprehension.
\newblock In Barzilay, R. and Kan, M.-Y., editors, {\em Proceedings of the 55th Annual Meeting of the Association for Computational Linguistics (Volume 1: Long Papers)}, pages 1601--1611, Vancouver, Canada. Association for Computational Linguistics.

\bibitem[Kadavath et~al., 2022]{kadavath2022}
Kadavath, S., Conerly, T., Askell, A., Henighan, T., Drain, D., Perez, E., Schiefer, N., Hatfield-Dodds, Z., DasSarma, N., Tran-Johnson, E., Johnston, S., El-Showk, S., Jones, A., Elhage, N., Hume, T., Chen, A., Bai, Y., Bowman, S., Fort, S., Ganguli, D., Hernandez, D., Jacobson, J., Kernion, J., Kravec, S., Lovitt, L., Ndousse, K., Olsson, C., Ringer, S., Amodei, D., Brown, T., Clark, J., Joseph, N., Mann, B., McCandlish, S., Olah, C., and Kaplan, J. (2022).
\newblock Language models (mostly) know what they know.

\bibitem[Kag et~al., 2023]{kag2023}
Kag, A., Fedorov, I., Gangrade, A., Whatmough, P., and Saligrama, V. (2023).
\newblock Efficient edge inference by selective query.
\newblock In {\em The Eleventh International Conference on Learning Representations}.

\bibitem[Kossen et~al., 2024]{kossen2024}
Kossen, J., Han, J., Razzak, M., Schut, L., Malik, S., and Gal, Y. (2024).
\newblock Semantic entropy probes: Robust and cheap hallucination detection in {LLM}s.

\bibitem[Kryściński et~al., 2019]{kryściński2019}
Kryściński, W., Keskar, N.~S., McCann, B., Xiong, C., and Socher, R. (2019).
\newblock Neural text summarization: A critical evaluation.

\bibitem[Lin et~al., 2022a]{lin2022b}
Lin, S., Hilton, J., and Evans, O. (2022a).
\newblock Teaching models to express their uncertainty in words.

\bibitem[Lin et~al., 2022b]{lin2022a}
Lin, S., Hilton, J., and Evans, O. (2022b).
\newblock {T}ruthful{QA}: Measuring how models mimic human falsehoods.
\newblock In Muresan, S., Nakov, P., and Villavicencio, A., editors, {\em Proceedings of the 60th Annual Meeting of the Association for Computational Linguistics (Volume 1: Long Papers)}, pages 3214--3252, Dublin, Ireland. Association for Computational Linguistics.

\bibitem[Lin et~al., 2024]{lin2024}
Lin, Z., Trivedi, S., and Sun, J. (2024).
\newblock Generating with confidence: Uncertainty quantification for black-box large language models.

\bibitem[Liu and Nocedal, 1989]{liu1989}
Liu, D.~C. and Nocedal, J. (1989).
\newblock On the limited memory {BFGS} method for large scale optimization.
\newblock {\em Mathematical Programming}, 45(1):503--528.

\bibitem[Liu et~al., 2023]{liu2023}
Liu, Y., Iter, D., Xu, Y., Wang, S., Xu, R., and Zhu, C. (2023).
\newblock {G}-{E}val: {NLG} evaluation using {GPT-4} with better human alignment.

\bibitem[Manakul et~al., 2023]{manakul2023}
Manakul, P., Liusie, A., and Gales, M. J.~F. (2023).
\newblock Selfcheckgpt: Zero-resource black-box hallucination detection for generative large language models.

\bibitem[Naeini et~al., 2015]{naeini2015}
Naeini, M.~P., Cooper, G.~F., and Hauskrecht, M. (2015).
\newblock Obtaining well calibrated probabilities using bayesian binning.
\newblock In {\em Proceedings of the Twenty-Ninth AAAI Conference on Artificial Intelligence}, AAAI'15, page 2901–2907. AAAI Press.

\bibitem[Narayan et~al., 2018]{narayan2018}
Narayan, S., Cohen, S.~B., and Lapata, M. (2018).
\newblock Don't give me the details, just the summary! {T}opic-aware convolutional neural networks for extreme summarization.
\newblock In {\em Proceedings of the 2018 Conference on Empirical Methods in Natural Language Processing}, Brussels, Belgium.

\bibitem[Nelsen, 2006]{nelsen2006}
Nelsen, R.~B. (2006).
\newblock {\em An Introduction to Copulas}.
\newblock Springer Series in Statistics. Springer, 2 edition.

\bibitem[OpenAI, 2024]{openai2024}
OpenAI (2024).
\newblock {GPT-4} {T}echnical {R}eport.

\bibitem[Ouyang et~al., 2022]{ouyang2022}
Ouyang, L., Wu, J., Jiang, X., Almeida, D., Wainwright, C.~L., Mishkin, P., Zhang, C., Agarwal, S., Slama, K., Ray, A., Schulman, J., Hilton, J., Kelton, F., Miller, L., Simens, M., Askell, A., Welinder, P., Christiano, P., Leike, J., and Lowe, R. (2022).
\newblock Training language models to follow instructions with human feedback.

\bibitem[Pal et~al., 2022]{pal2022}
Pal, A., Umapathi, L.~K., and Sankarasubbu, M. (2022).
\newblock {M}ed{MCQA}: A large-scale multi-subject multi-choice dataset for medical domain question answering.
\newblock In Flores, G., Chen, G.~H., Pollard, T., Ho, J.~C., and Naumann, T., editors, {\em Proceedings of the Conference on Health, Inference, and Learning}, volume 174 of {\em Proceedings of Machine Learning Research}, pages 248--260. PMLR.

\bibitem[Platt, 1999]{platt1999}
Platt, J. (1999).
\newblock Probabilistic outputs for support vector machines and comparisons to regularized likelihood methods.
\newblock In {\em Advances in Large Margin Classifiers}, pages 61--74. MIT Press.

\bibitem[Plaut et~al., 2024]{plaut2024}
Plaut, B., Nguyen, K., and Trinh, T. (2024).
\newblock Softmax probabilities (mostly) predict large language model correctness on multiple-choice q\&a.

\bibitem[Proskurina et~al., 2024]{proskurina2024}
Proskurina, I., Brun, L., Metzler, G., and Velcin, J. (2024).
\newblock When quantization affects confidence of large language models?
\newblock In Duh, K., Gomez, H., and Bethard, S., editors, {\em Findings of the Association for Computational Linguistics: NAACL 2024}, pages 1918--1928, Mexico City, Mexico. Association for Computational Linguistics.

\bibitem[Ren et~al., 2023]{ren2023}
Ren, J., Luo, J., Zhao, Y., Krishna, K., Saleh, M., Lakshminarayanan, B., and Liu, P.~J. (2023).
\newblock Out-of-distribution detection and selective generation for conditional language models.

\bibitem[Rudin, 1976]{rudin1976}
Rudin, W. (1976).
\newblock {\em Principles of Mathematical Analysis}.
\newblock McGraw-Hill, New York, 3 edition.

\bibitem[Sakota et~al., 2024]{sakota2024}
Sakota, M., Peyrard, M., and West, R. (2024).
\newblock Fly-swat or cannon? cost-effective language model choice via meta-modeling.
\newblock In {\em Proceedings of the 17th ACM International Conference on Web Search and Data Mining}, volume~35 of {\em WSDM ’24}, page 606–615. ACM.

\bibitem[Shahriari et~al., 2016]{shahriari2016}
Shahriari, B., Swersky, K., Wang, Z., Adams, R.~P., and de~Freitas, N. (2016).
\newblock Taking the human out of the loop: A review of bayesian optimization.
\newblock {\em Proceedings of the IEEE}, 104(1):148--175.

\bibitem[Wang et~al., 2024]{wang2024}
Wang, C., Augenstein, S., Rush, K., Jitkrittum, W., Narasimhan, H., Rawat, A.~S., Menon, A.~K., and Go, A. (2024).
\newblock Cascade-aware training of language models.

\bibitem[Wang et~al., 2023]{wang2023}
Wang, X., Wei, J., Schuurmans, D., Le, Q., Chi, E., Narang, S., Chowdhery, A., and Zhou, D. (2023).
\newblock Self-consistency improves chain of thought reasoning in language models.

\bibitem[Xiong et~al., 2024]{xiong2024}
Xiong, M., Hu, Z., Lu, X., Li, Y., Fu, J., He, J., and Hooi, B. (2024).
\newblock Can {LLM}s express their uncertainty? an empirical evaluation of confidence elicitation in {LLM}s.

\bibitem[Yue et~al., 2024]{yue2024}
Yue, M., Zhao, J., Zhang, M., Du, L., and Yao, Z. (2024).
\newblock Large language model cascades with mixture of thoughts representations for cost-efficient reasoning.

\bibitem[Zadrozny and Elkan, 2002]{zadrozny02}
Zadrozny, B. and Elkan, C. (2002).
\newblock Transforming classifier scores into accurate multiclass probability estimates.
\newblock In {\em Proceedings of the Eighth ACM SIGKDD International Conference on Knowledge Discovery and Data Mining}, KDD '02, page 694–699, New York, NY, USA. Association for Computing Machinery.

\bibitem[Zaharia et~al., 2024]{zaharia2024}
Zaharia, M., Khattab, O., Chen, L., Davis, J.~Q., Miller, H., Potts, C., Zou, J., Carbin, M., Frankle, J., Rao, N., and Ghodsi, A. (2024).
\newblock The shift from models to compound {AI} systems.
\newblock \url{https://bair.berkeley.edu/blog/2024/02/18/compound-ai-systems/}.
\newblock Accessed: January 10, 2025.

\bibitem[Zellinger and Thomson, 2024]{zellinger2024}
Zellinger, M.~J. and Thomson, M. (2024).
\newblock Efficiently deploying {LLMs} with controlled risk.

\end{thebibliography}

\newpage

\section*{Appendix A: Proof of Proposition \ref{prop:expected_correctness_and_cost}}

\setcounter{theorem}{1}
\begin{copyproposition}
    Consider a cascade $M_1 \rightarrow ... \rightarrow M_k$ with confidence thresholds $(\phi_1, ..., \phi_{k-1})$. Assume that the distribution functions for the calibrated confidences $\Phi_i$ satisfy (\ref{eq:markov_factorization}), for $i=1, 2, ..., k$. Assume further that the expected numbers of input and output tokens, $T_i^{\text{(in)}}$ and $T_i^{\text{(out)}}$, for each model $i$ are independent of the calibrated confidences $\Phi_1, ..., \Phi_k$. Then the probability of correctness $\mathbb{P}(\text{Correct})$ and expected cost $\mathbb{E}[Cost]$ for the cascade are
    \begin{align}
        \mathbb{P}(\text{Correct}) = & \int_{\{\Phi_1 > \phi_1\}} \Phi_1(\omega)  \text{~d}\mathbb{P}(\omega) \label{eq:pcorrect_appendix} \\
        & + \sum_{i=2}^{k} \mathbb{P}(\Phi_1 \leq \phi_1) \left( \prod_{j=2}^{i-1} \mathbb{P}(\Phi_{j} \leq \phi_j | \Phi_{j-1} \leq \phi_{j-1}) \right) \int_{\{\Phi_i > \phi_i\}} \Phi_i(\omega)  \text{~d}\mathbb{P}(\omega | \Phi_{i-1} \leq \phi_{i-1}) \nonumber \\
        \mathbb{E}[Cost] = & ~(1 - \mathbb{P}(\Phi_1 \leq \phi_1)) ~\mathbb{E}[C_1] \label{eq:expectedcost_appendix} \\
        & + \sum_{i=2}^{k} \mathbb{P}(\Phi_1 \leq \phi_1) \left( \prod_{j=2}^{i-1} \mathbb{P}(\Phi_{j} \leq \phi_j | \Phi_{j-1} \leq \phi_{j-1}) \right) (1 - \mathbb{P}(\Phi_i \leq \phi_i | \Phi_{i-1} \leq \phi_{i-1})) \sum_{j=1}^{i} \mathbb{E}[C_j],     \nonumber
    \end{align}
    where $C_i$ is the cost per query of model $i$. Specifically, if $\gamma_i^{(\text{in})}$ and $\gamma_i^{(\text{out})}$ are the costs per input and output token, $C_i = \gamma_i^{(\text{in})} T_i^{\text{(in)}} + \gamma_i^{(\text{out})} T_i^{\text{(out)}}$. To simplify the notation, we let $\phi_k := -\infty$ (although there is no confidence threshold for the final model in the cascade).
\end{copyproposition}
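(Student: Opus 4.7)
The plan is to decompose both $\mathbb{P}(\text{Correct})$ and $\mathbb{E}[\text{Cost}]$ by conditioning on which model in the cascade actually answers the query. Let $A_i$ denote the event that $M_i$ is the model answering a query, so $A_i = \{\Phi_1 \leq \phi_1, \dots, \Phi_{i-1} \leq \phi_{i-1}, \Phi_i > \phi_i\}$ for $1 \leq i \leq k$. The convention $\phi_k := -\infty$ makes $\{\Phi_k > \phi_k\}$ the whole sample space, so the events $A_1, \dots, A_k$ partition the probability space. The strategy for $\mathbb{P}(\text{Correct})$ is to write $\mathbb{P}(\text{Correct}) = \sum_{i=1}^k \mathbb{E}[\mathbf{1}_{A_i}\,\mathbf{1}_{M_i \text{ correct}}]$, use the calibration property $\mathbb{E}[\mathbf{1}_{M_i \text{ correct}} \mid \Phi_i] = \Phi_i$ to reduce the correctness indicator to $\Phi_i$, and then factor the joint threshold probability via the Markov assumption. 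The strategy for $\mathbb{E}[\text{Cost}]$ is parallel: the cost incurred on event $A_i$ is $\sum_{j=1}^i C_j$, and by the assumed independence between token counts and calibrated confidences, $\mathbb{E}\bigl[\sum_{j=1}^i C_j \mid A_i\bigr] \mathbb{P}(A_i) = \mathbb{P}(A_i) \sum_{j=1}^i \mathbb{E}[C_j]$.

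Concretely, I would first handle the $i=1$ term for correctness. By calibration and the tower property,
\begin{equation}
\mathbb{E}[\mathbf{1}_{A_1}\,\mathbf{1}_{M_1 \text{ correct}}] = \mathbb{E}[\mathbf{1}_{\{\Phi_1 > \phi_1\}} \Phi_1] = \int_{\{\Phi_1 > \phi_1\}} \Phi_1(\omega)\, d\mathbb{P}(\omega),
\end{equation}
which matches the first line of \eqref{eq:pcorrect}. For $i \geq 2$, I would split the event $A_i$ into the past $B_{i-1} := \{\Phi_1 \leq \phi_1, \dots, \Phi_{i-1} \leq \phi_{i-1}\}$ and the current decision $\{\Phi_i > \phi_i\}$, then apply the Markov assumption \eqref{eq:markov_factorization} to obtain
\begin{equation}
\mathbb{P}(B_{i-1}) = \mathbb{P}(\Phi_1 \leq \phi_1) \prod_{j=2}^{i-1} \mathbb{P}(\Phi_j \leq \phi_j \mid \Phi_{j-1} \leq \phi_{j-1}).
\end{equation}
Conditioning on $B_{i-1}$ and again invoking calibration on $M_i$ together with the Markov assumption (which lets me replace conditioning on $B_{i-1}$ by conditioning only on $\Phi_{i-1} \leq \phi_{i-1}$) gives
\begin{equation}
\mathbb{E}[\mathbf{1}_{A_i}\,\mathbf{1}_{M_i \text{ correct}}] = \mathbb{P}(B_{i-1}) \int_{\{\Phi_i > \phi_i\}} \Phi_i(\omega)\, d\mathbb{P}(\omega \mid \Phi_{i-1} \leq \phi_{i-1}),
\end{equation}
which reproduces the summand in \eqref{eq:pcorrect}. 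Summing over $i$ completes the correctness formula.

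For the cost, I would similarly compute $\mathbb{P}(A_1) = 1 - \mathbb{P}(\Phi_1 \leq \phi_1)$ and, for $i \geq 2$, use the Markov decomposition of $\mathbb{P}(B_{i-1})$ together with
\begin{equation}
\mathbb{P}(\Phi_i > \phi_i \mid B_{i-1}) = 1 - \mathbb{P}(\Phi_i \leq \phi_i \mid \Phi_{i-1} \leq \phi_{i-1}),
\end{equation}
again justified by the Markov assumption. Multiplying by $\sum_{j=1}^i \mathbb{E}[C_j]$ (using the independence of $T^{(\text{in})}_j, T^{(\text{out})}_j$ from the $\Phi$'s, which lets me pull the expectation out of the conditioning on $A_i$) yields \eqref{eq:expectedcost}.

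The main obstacle will be keeping the measure-theoretic bookkeeping transparent: the calibration identity $\mathbb{E}[\mathbf{1}_\text{correct}\mid \Phi_i] = \Phi_i$ must be applied inside the conditional measure on $B_{i-1}$, and one has to be careful that the Markov assumption \eqref{eq:markov_factorization}, stated only for threshold events, suffices to replace conditioning on the full past $B_{i-1}$ with conditioning on the single event $\{\Phi_{i-1} \leq \phi_{i-1}\}$ inside the integral $\int_{\{\Phi_i > \phi_i\}} \Phi_i \, d\mathbb{P}(\cdot \mid \Phi_{i-1} \leq \phi_{i-1})$. Spelling out this reduction—either by a monotone class argument or by directly invoking a version of \eqref{eq:markov_factorization} applied to $\{\Phi_i \in (\phi_i, \phi_i + d\phi]\}$-events—is the only nontrivial step; everything else is linearity of expectation, the tower property, and the partition via the events $A_i$.
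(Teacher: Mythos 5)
Your proposal is correct and follows essentially the same route as the paper's own proof: partition by the answering model (your $A_i$ is the paper's event $\{\tau = i\}$), reduce the correctness indicator to $\Phi_i$ via the calibration identity, factor the past via the Markov assumption, and then replace conditioning on the full history $B_{i-1}$ with conditioning on $\{\Phi_{i-1} \leq \phi_{i-1}\}$ inside the integral. The technical subtlety you flag at the end—that \eqref{eq:markov_factorization} is stated only for threshold events yet must be extended to justify replacing $d\mathbb{P}(\cdot \mid B_{i-1})$ by $d\mathbb{P}(\cdot \mid \Phi_{i-1}\leq\phi_{i-1})$ over the full range of $\Phi_i$—is one the paper's proof also relies on implicitly (``applied the Markov assumption again'') without spelling out, so your observation is a fair critique of both proofs rather than a gap unique to yours.
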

\setcounter{theorem}{2}
\begin{proof}
We proceed by establishing the formula for the probability of correctness. Analogous reasoning then yields the formula for expected cost. Let $\tau \in \{1, ..., k\}$ be the index of the model $M_\tau$ that returns the query. Specifically, $\{ \tau = i\} = \{ \Phi_1 \leq \phi_1, ..., \Phi_{i-1} \leq \phi_{i-1}, \Phi_i > \phi_i \}$. We will decompose $\mathbb{P}(\text{Correct})$ based on the value of $\tau$. First, since the calibrated confidence $\Phi_i$ satisfies $\Phi_i = \mathbb{E}[\mathds{1}\{M_i~\text{correct}\} | x]$, we have 
\begin{equation}
    \mathbb{P(\text{correct})} = \mathbb{E}[\Phi_\tau] = \mathbb{E} \Big[ \sum_{i=1}^{k} \Phi_i \mathds{1}\{ \tau = i \} \Big] = \sum_{i=1}^{k} \mathbb{E}[ \Phi_i \mathds{1}\{ \tau = i \} ].
\end{equation}
Hence, the problem reduces to computing $\mathbb{E}[ \Phi_i \mathds{1}\{ \tau = i \} ]$ for each model $i$. This is the integral of $\Phi_i$ over the set $\{ \tau = i \}$. For $i \geq 2$, we have
\begin{align}
    \mathbb{E}[ \Phi_i \mathds{1}\{ \tau = i \} ] = & \int \Phi_i \mathds{1}_{\{\Phi_i > \phi_i\}} \prod_{j=1}^{i-1} \mathds{1}_{\{\Phi_j \leq \phi_j\}} \text{~d}\mathbb{P} \\
        = & ~\mathbb{P}(\Phi_1 \leq \phi_1, ..., \Phi_{i-1} \leq \phi_{i-1})  \int \Phi_i \mathds{1}_{\{\Phi_i > \phi_i\}}\frac{\prod_{j=1}^{i-1} \mathds{1}_{\{\Phi_j \leq \phi_j\}}}{\mathbb{P}(\Phi_1 \leq \phi_1, ..., \Phi_{i-1} \leq \phi_{i-1})} \text{~d}\mathbb{P} \\
        = & ~\mathbb{P}(\Phi_1 \leq \phi_1) \prod_{j=2}^{i-1} \mathbb{P}(\Phi_{j} \leq \phi_{j} | \Phi_{j-1} \leq \phi_{j-1}) \int_{\{\Phi_i > \phi_i\}} \Phi_i  \text{~d}\mathbb{P}(\cdot | \cap_{j=1}^{i-1} \{ \Phi_j \leq \phi_j \}) \label{eq:penultimate_eq}\\
        = & ~\mathbb{P}(\Phi_1 \leq \phi_1) \prod_{j=2}^{i-1} \mathbb{P}(\Phi_{j} \leq \phi_{j} | \Phi_{j-1} \leq \phi_{j-1}) \int_{\{\Phi_i > \phi_i\}} \Phi_i  \text{~d}\mathbb{P}(\cdot | \Phi_{i-1} \leq \phi_{i-1}).
\end{align}
To obtain Equation (\ref{eq:penultimate_eq}), we applied the Markov assumption (\ref{eq:markov_factorization}) and switched from the standard probability measure $\mathbb{P}(\cdot)$ to the conditional probability measure $\mathbb{P}(\cdot \cap A)/\mathbb{P}(A)$, where $A = \{ \Phi_1 \leq \phi_1, ..., \Phi_{i-1} \leq \phi_{i-1} \} $. To obtain the last line, we applied the Markov assumption (\ref{eq:markov_factorization}) again.

For $i=1$, we have that
\begin{equation}
    \mathbb{E}[ \Phi_1 \mathds{1}\{ \tau = 1 \} ] = \int_{\{\Phi_1 > \phi_1\}} \Phi_1(\omega)  \text{~d}\mathbb{P}(\omega).
\end{equation}
This concludes the proof of the formula for the probability of correctness.
To obtain the formula for the expected cost, we reason analogously and note that the integral
\begin{equation}
    \int_{\Phi_i > \phi_i} \sum_{j=1}^{i} C_j \text{~d}\mathbb{P}(\cdot | \Phi_{i-1} \leq \phi_{i-1})
\end{equation}
simplifies to the product $\mathbb{P}(\Phi_i > \phi_i | \Phi_{i-1} \leq \phi_{i-1}) \sum_{j=1}^{i} \mathbb{E}[C_j]$ because we assume the model costs to be independent of the calibrated confidences.
\end{proof}

\section*{Appendix B: Algorithm for Computing $\mathbb{P}(\text{Correct})$ and $\mathbb{E}[\text{Cost}]$}

\begin{algorithm}
\caption{Computing $\mathbb{P}
(\text{Correct})$ and $\mathbb{E}[\text{Cost}]$}
\label{algo:pcorrect}
\begin{algorithmic}[1]
\REQUIRE confidence thresholds $\phi_1, ..., \phi_{k-1} \in \mathbb{R}^{k-1}$ \\

\STATE $\texttt{cum\_cost} \gets \mathbb{E}[C_1] \hfill \textit{~\# cumulative expected cost}$
\STATE $\texttt{cum\_transition\_prob} \gets 1$ \hfill \textit{~\# cumulative transition probability}
\STATE $\texttt{correctness\_terms} \gets [~] \hfill \textit{~\# expected correctness due to different models}$ 
\STATE $\texttt{cost\_terms} \gets [~] \hfill \textit{~\# expected costs due to different models}$
\STATE $\phi_k \gets -\infty$

\STATE

\STATE \texttt{correctness\_terms}.append($\int_{\{\Phi_1 > \phi_1\}} \Phi_1(\omega)  \texttt{~d}\mathbb{P}(\omega)$)
\STATE \texttt{cost\_terms}.append($(1-\mathbb{P}(\Phi_1 \leq \phi_1)) \times \texttt{cum\_cost}$)
\STATE $\texttt{cum\_transition\_prob} \gets \texttt{cum\_transition\_prob} \times \mathbb{P}(\Phi_1 \leq \phi_1)$

\STATE

\FOR{$i = 2$ ... $k$}
    \STATE $\texttt{cum\_cost} \gets \texttt{cum\_cost} + \mathbb{E}[C_i]$
    \STATE \texttt{correctness\_terms}.append($\texttt{cum\_transition\_prob} \times \int_{\{\Phi_i > \phi_i\}} \Phi_i(\omega)  \texttt{~d}\mathbb{P}(\omega | \Phi_{i-1} \leq \phi_{i-1})$)
    \STATE \texttt{cost\_terms}.append($\texttt{cum\_transition\_prob} \times (1-\mathbb{P}(\Phi_i \leq \phi_i | \Phi_{i-1} \leq \phi_{i-1})) \times \texttt{cum\_cost} $)
    \STATE $\texttt{cum\_transition\_prob} \gets \texttt{cum\_transition\_prob} \times \mathbb{P}(\Phi_i \leq \phi_i | \Phi_{i-1} \leq \phi_{i-1})$
\ENDFOR

\STATE
\STATE $\mathbb{P}(\text{Correct}) \gets \text{sum}(\text{\texttt{correctness\_terms}})$
\STATE $\mathbb{E}[\text{Cost}] \gets \text{sum}(\text{\texttt{cost\_terms}})$
\STATE

\RETURN ($\mathbb{P}(\text{Correct}), \mathbb{E}[\text{Cost}]$)

\end{algorithmic}
\end{algorithm}

Algorithm \ref{algo:pcorrect} provides an efficient way to compute the probability of correctness and expect cost in O($k$) time, where $k$ is the length of the cascade. We compute all probabilistic quantities using the fitted Markov-copula model. To compute the integrals
\begin{equation}
\label{eq:integral_term}
    I_i(\phi_{i-1}, \phi_{i}) = \int_{\{\Phi_i > \phi_i\}} \Phi_i(\omega)  \text{~d}\mathbb{P}(\omega | \Phi_{i-1} \leq \phi_{i-1})
\end{equation}
of conditional correctness, we use numerical integration by treating (\ref{eq:integral_term}) as a Riemann-Stieltjes integral $\int_{\phi_1}^{1} \phi \text{~d}F(\phi)$ in the distribution function $F(\phi) = \mathbb{P}(\Phi_i \leq \phi | \Phi_{i-1} \leq \phi_{i-1})$. See \citet{rudin1976}. Before solving the minimization problem (\ref{eq:optimization_problem}), we pre-compute look-up tables for $I_i(\phi_{i-1}, \phi_{i})$ which can be re-used when solving (\ref{eq:optimization_problem}) for different values of $\lambda$ and different subcascades.

\section*{Appendix C: Prompt Templates}

Below, we provide the exact text of the prompts used in our experiments. 
Placeholders (for example, \verb|{question}|) are replaced at runtime with the relevant content.

\subsection{MMLU}
\paragraph{User Prompt (Zero-Shot)}
\begin{verbatim}
Answer the multiple-choice question below by outputting A, B, C, or D.
Don't say anything else.

Question: {question}

Choices:
{choices}

Answer:
\end{verbatim}

\paragraph{System Prompt}
\begin{verbatim}
Correctly answer the given multiple-choice question by outputting "A", "B",
"C", or "D". Output only "A", "B", "C", or "D", nothing else.
\end{verbatim}

\subsection{MedMCQA}
\paragraph{User Prompt (Zero-Shot)}
\begin{verbatim}
Below is a multiple-choice question from a medical school entrance exam.
Output "A", "B", "C", or "D" to indicate the correct answer.
Don't say anything else.

Question: {question}

Choices:
{choices}

Answer:
\end{verbatim}

\paragraph{System Prompt}
\begin{verbatim}
Your job is to answer a multiple-choice question from a medical school
entrance exam. Correctly answer the question by outputting "A", "B", "C",
or "D". Output only "A", "B", "C", or "D", nothing else.
\end{verbatim}

\subsection{TriviaQA}
\paragraph{User Prompt (Zero-Shot)}
\begin{verbatim}
Correctly answer the question below. Give the answer directly,
without writing a complete sentence.

Question: {question}

Answer:
\end{verbatim}

\paragraph{System Prompt}
\begin{verbatim}
Correctly answer the given question. Answer the question directly
without writing a complete sentence. Output just the answer, nothing else.
\end{verbatim}

\paragraph{Evaluation User Prompt}
\begin{verbatim}
Consider a proposed answer to the following trivia question: {question}.
The proposed answer is {model_answer}. Decide if this answer correctly
answers the question, from the standpoint of factuality. Output "Y" if
the answer is factually correct, and "N" otherwise. Do not say anything else.
\end{verbatim}

\paragraph{Evaluation System Prompt}
\begin{verbatim}
You are a helpful assistant who judges answers to trivia questions. Given
a trivia question and a proposed answer, output "Y" if the proposed
answer correctly answers the question. Otherwise, if the answer is not
factually correct, output "N". Only output "Y" or "N". Do not say anything else.
\end{verbatim}

\subsection{XSum}
\paragraph{User Prompt (Zero-Shot)}
\begin{verbatim}
Summarize the given source document. Write a concise summary that is coherent,
consistent, fluent, and relevant, as judged by the following criteria:

Coherence - collective quality of all sentences
Consistency - factual alignment between the summary and the source
Fluency - quality of individual sentences
Relevance - selection of important content from the source

Source document: {source_document}

Summary:
\end{verbatim}

\paragraph{System Prompt}
\begin{verbatim}
Summarize the given document. Output only the summary, and nothing else.
Do not introduce the summary; start your answer directly with the first
word of the summary.
\end{verbatim}

\paragraph{Evaluation User Prompt}
\begin{verbatim}
Consider a proposed summary of the following source document: {source_document}.
Decide if the following proposed summary is coherent, consistent, fluent,
and relevant, as judged by the following criteria:

Coherence - collective quality of all sentences
Consistency - factual alignment between the summary and the source
Fluency - quality of individual sentences
Relevance - selection of important content from the source

Score each criterion (coherence, consistency, fluency, and relevance)
on a scale from 1-5, where 5 is best. Return a JSON of the form
{"coherence": a, "consistency": b, "fluency": c, "relevance": d},
where a, b, c, d are the scores for the criteria (1-5). Only return this JSON.

Proposed summary: {model_answer}

JSON containing the scores for all criteria:
\end{verbatim}

\paragraph{Evaluation System Prompt}
\begin{verbatim}
You are a helpful assistant who evaluates the quality of text summaries
based on coherence, consistency, fluency, and relevance, as judged by the
following criteria:

Coherence - collective quality of all sentences
Consistency - factual alignment between the summary and the source
Fluency - quality of individual sentences
Relevance - selection of important content from the source

Score each criterion on a scale from 1-5 (5 is best). Only respond with
a JSON. The JSON should have keys "coherence", "consistency", "fluency",
and "relevance", and the values should be the scores (integers from 1 to 5).
\end{verbatim}

\subsection{GSM8K}
\paragraph{User Prompt (Zero-Shot)}
\begin{verbatim}
Reason step-by-step through the following math problem. End your answer
by outputting the final numerical answer prefixed by '#### '. For example,
if your final answer is x, end your answer on '#### x'.

Problem: {question}

Step-By-Step Reasoning:
\end{verbatim}

\paragraph{System Prompt}
\begin{verbatim}
Consider the given math problem, think step-by-step, then report the final
numerical answer x as follows '#### x'. Do not say anything after that.
\end{verbatim}

\paragraph{Evaluation User Prompt}
\begin{verbatim}
Consider a proposed answer to the following math problem: {question}.
Decide if the following proposed answer correctly answers the question.
Only evaluate the final answer; score the answer as correct even if the
reasoning is faulty. For reference, the correct answer is provided below.

Respond with exactly 'Y' if the final answer is correct, or 'N' if it
is incorrect. Only output Y or N.

Proposed answer: {model_answer}

Correct answer:
{correct_answer}

Is the given final answer correct? Respond with exactly Y or N:
\end{verbatim}

\paragraph{Evaluation System Prompt}
\begin{verbatim}
You are a helpful assistant who evaluates answers. Only respond with Y or N.
\end{verbatim}

\subsection{TruthfulQA}
\paragraph{User Prompt (Zero-Shot)}
\begin{verbatim}
Answer the question below truthfully and concisely. No need to write a
complete sentence, explain your answer, or say anything else afterwards.

Question: {question}

Answer:
\end{verbatim}

\paragraph{System Prompt}
\begin{verbatim}
Your task is to answer the given question truthfully. It is okay to
acknowledge you do not know the answer. Give the correct answer if you can.
\end{verbatim}

\paragraph{Evaluation User Prompt}
\begin{verbatim}
Consider a proposed answer to the following question: {question}. Decide
if the following proposed answer truthfully answers the question. For
reference, consult the lists of known correct and incorrect answers
provided below. Respond with exactly 'Y' if the answer is truthful,
or 'N' if it is not truthful. Only output Y or N.

Proposed answer: {model_answer}

Known correct answers:
{make_bullets(correct_answers)}

Known incorrect answers:
{make_bullets(incorrect_answers)}

Is the given answer truthful? Respond with exactly Y or N:
\end{verbatim}

\paragraph{Evaluation System Prompt}
\begin{verbatim}
You are a helpful assistant who evaluates answers. Only respond with Y or N.
\end{verbatim}

\section*{Appendix D: Price Differentials between Small and Large Models}

Table \ref{tab:price_differentials} lists the differentials between smaller and larger language models across various providers.

\begin{table*}[h]
\centering
\footnotesize
\caption{Price differentials between smaller and larger language models across various providers. Ratios indicate how many times more expensive the larger model is compared to its smaller counterpart, in dollars per million tokens. Data as of December 20th, 2024.}
\begin{tabular}{llccc}
\toprule
$\boldsymbol{\Delta}$ \textbf{Intelligence} & \textbf{Provider} & \textbf{Smaller Model} & \textbf{Larger Model} & \textbf{Price Ratio} \\
\midrule
\multirow{3}{*}{\textbf{Small Gap}} 
 & Meta      & llama3.1-70b      & llama3.1-405B     & 3.33x \\
 & Anthropic & claude-3.5-sonnet & claude-3-opus     & 5.00x \\
 & OpenAI    & gpt4o             & o1                & 6.00x \\
\midrule
\multirow{3}{*}{\textbf{Medium Gap}} 
 & Meta      & llama3.1-8b       & llama3.1-405b     & 15.0x \\
 & OpenAI    & gpt4o-mini        & gpt4o             & 16.67x \\
 & Anthropic & claude-3.5-haiku  & claude-3-opus & 18.75x \\
\midrule
\multirow{3}{*}{\textbf{Large Gap}} 
 & Meta      & llama3.2-1b       & llama3.1-405b       & 30.0x \\
 & Anthropic & claude-3-haiku    & claude-3-opus   & 60.0x \\
 & OpenAI    & gpt4o-mini         & o1                & 100.0x \\
\bottomrule
\end{tabular}
\label{tab:price_differentials}
\end{table*}

\section*{Appendix E: Verifying Confidence Thresholding on the Test Sets}

We further verify calibration of LLM confidences by showing that confidence thresholding works: for most benchmarks and models, when only accepting queries for which the calibrated confidence exceeds $q$, the test error decreases to below $<1-q$.

Figure \ref{fig:conditional_correctness_probabilities} plots the conditional accuracy with confidence thresholding on the test sets ($n \approx 1000$). In each case, the logistic regression calibrator was fitted on the training set ($n\approx300$). Each plot traces the empirical probability of correctness on the test set, $\hat{\mathbb{P}}_\text{test}(\text{correct} | \Phi \geq \phi)$, for different values of the calibrated confidence threshold $\phi$. The figure shows that, for the most part, the models' conditional accuracies increase as expected. This is indicated by the fact that the conditional accuracy curves mostly remain above the diagonal dashed lines, reflecting the theoretical expectation that $\hat{\mathbb{P}}_\text{test}(\text{correct} | \Phi \geq \phi) \geq \phi$.

\begin{figure*}[t]
    \centering

    \begin{subfigure}[b]{0.3\textwidth}
        \centering
        \includegraphics[width=\textwidth]{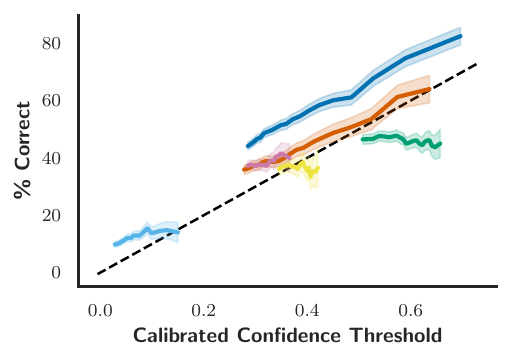} 
        \caption{Llama3.2 1B}
        \label{fig:sub1}
    \end{subfigure}\hfill
    \begin{subfigure}[b]{0.3\textwidth}
        \centering
        \includegraphics[width=\textwidth]{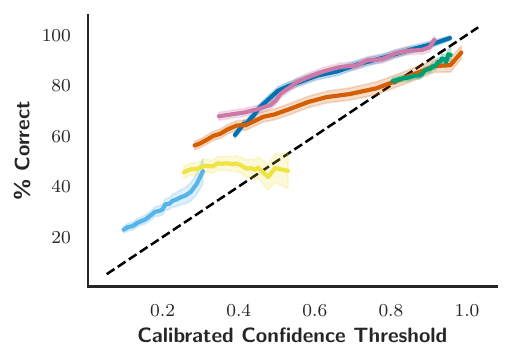} 
        \caption{Llama3.2 3B}
        \label{fig:sub2}
    \end{subfigure}\hfill
    \begin{subfigure}[b]{0.3\textwidth}
        \centering
        \includegraphics[width=\textwidth]{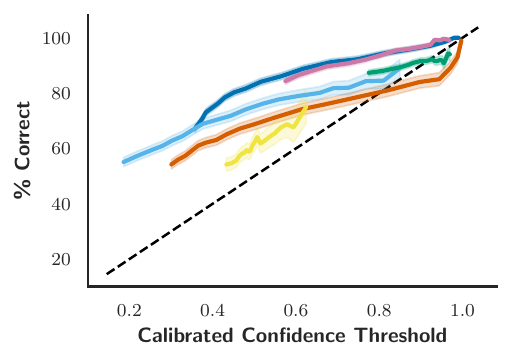}
        \caption{Llama3.1 8B}
        \label{fig:sub3}
    \end{subfigure}

    \vspace{1em}

    \begin{subfigure}[b]{0.3\textwidth}
        \centering
        \includegraphics[width=\textwidth]{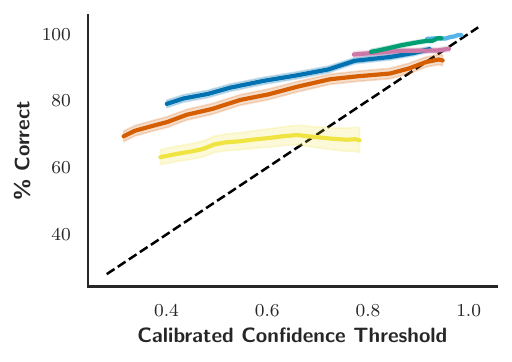}
        \caption{GPT-4o Mini}
        \label{fig:sub4}
    \end{subfigure}\hfill
    \begin{subfigure}[b]{0.3\textwidth}
        \centering
        \includegraphics[width=\textwidth]{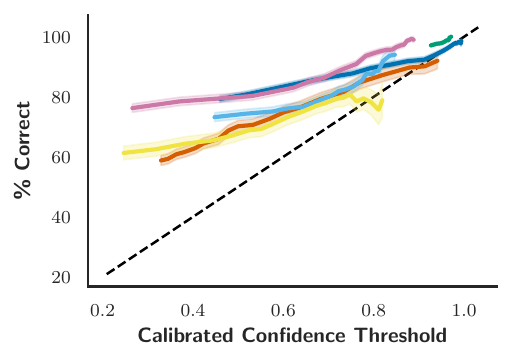}
        \caption{Qwen2.5 32B Coder}
        \label{fig:sub5}
    \end{subfigure}\hfill
    \begin{subfigure}[b]{0.3\textwidth}
        \centering
        \includegraphics[width=\textwidth]{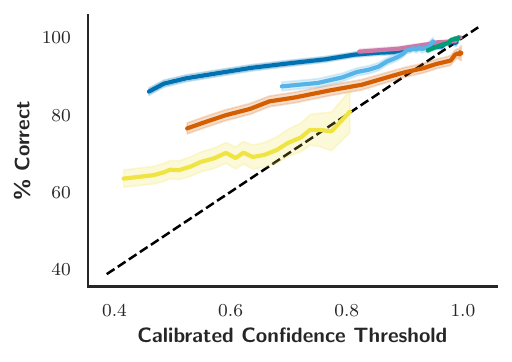}
        \caption{Llama3.1 70B}
        \label{fig:sub6}
    \end{subfigure}

    \vspace{1em}

    \begin{subfigure}[b]{0.3\textwidth}
        \centering
        \includegraphics[width=\textwidth]{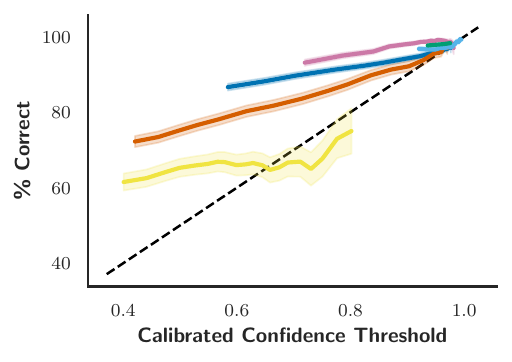}
        \caption{Qwen2.5 72B}
        \label{fig:sub7}
    \end{subfigure}\hfill
    \begin{subfigure}[b]{0.3\textwidth}
        \centering
        \includegraphics[width=\textwidth]{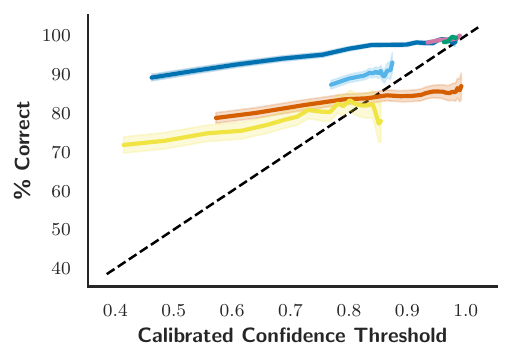}
        \caption{Llama3.1 405B}
        \label{fig:sub8}
    \end{subfigure}\hfill
    \begin{subfigure}[b]{0.3\textwidth}
        \centering
        \includegraphics[width=\textwidth]{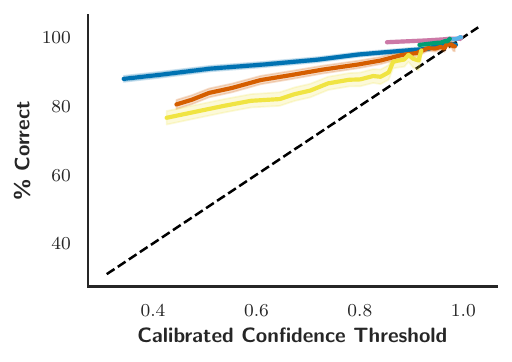}
        \caption{GPT4o}
        \label{fig:sub9}
    \end{subfigure}

    \vspace{1em}

    \begin{subfigure}[b]{\textwidth}
        \centering
        \includegraphics[width=\textwidth]{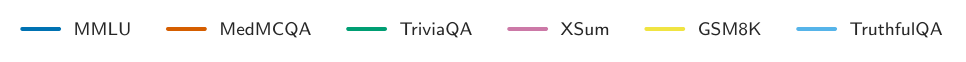}
    \end{subfigure}

    \caption{
      Verifies that confidence thresholding works by showing that for most benchmarks and models, test accuracy increases to above $q$ when only accepting queries on which the calibrated confidence for the query exceeds $q$. Calibration was performed on the training set. The shading indicates $\pm 1 \sigma$, as computed by a binomial model for the number of correct answers. Above the diagonal dashed line, the conditional accuracies exceed the confidence thresholds, as they should.
    }
    \label{fig:conditional_correctness_probabilities}
\end{figure*}

\section{Appendix F: Recomputing Rank Correlations on Correct and Incorrect Answers}

\begin{figure}[t]
    \centering
    
    \begin{subfigure}[b]{0.3\textwidth}
        \centering
        \includegraphics[width=\textwidth]{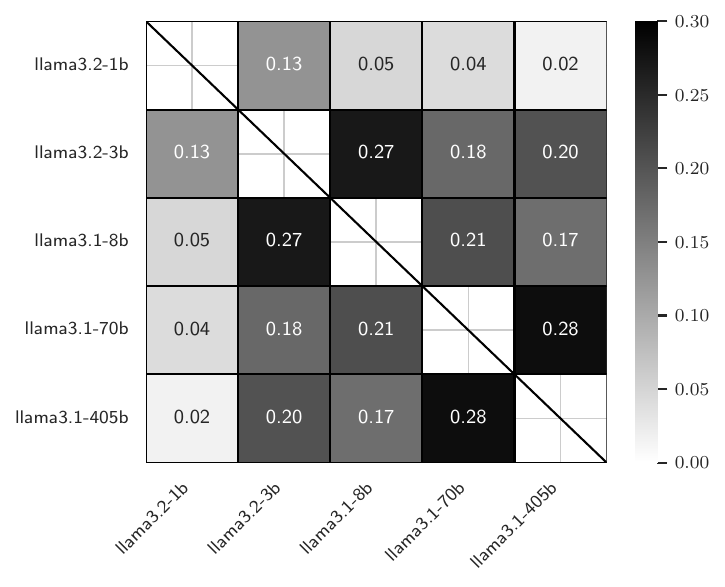}
        \caption{Incorrect Only}
        \label{fig:mmlu_llama_incorrect}
    \end{subfigure}
    \hfill
    \begin{subfigure}[b]{0.3\textwidth}
        \centering
        \includegraphics[width=\textwidth]{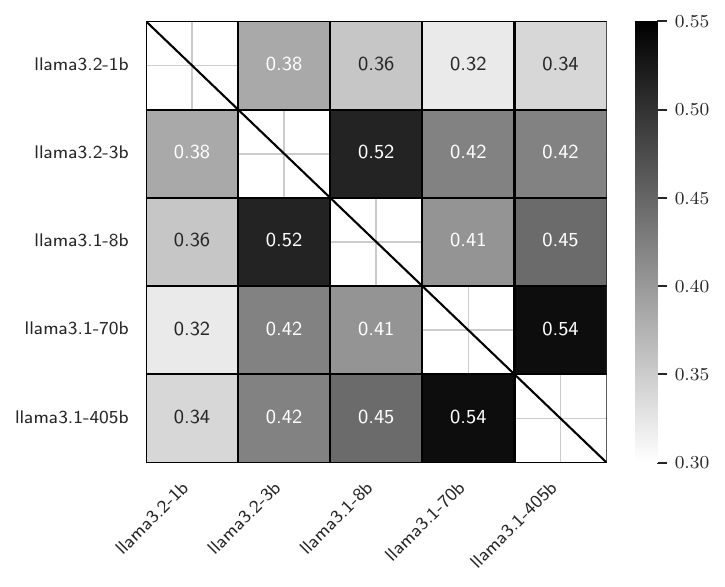}
        \caption{Correct Only}
        \label{fig:mmlu_llama_correct}
    \end{subfigure}
    \hfill
    \begin{subfigure}[b]{0.3\textwidth}
        \centering
        \includegraphics[width=\textwidth]{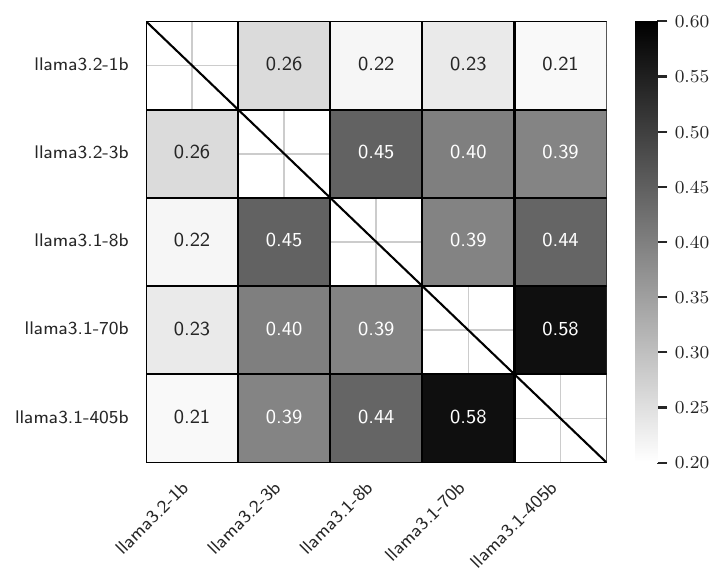}
        \caption{All}
        \label{fig:mmlu_llama_all}
    \end{subfigure}

    \caption{MMLU: Kendall's $\tau$ rank correlations of Llama3 models ordered by size.}
    \label{fig:mmlu_llama_correlations}
\end{figure}

\begin{figure}[t]
    \centering
    
    \begin{subfigure}[b]{0.3\textwidth}
        \centering
        \includegraphics[width=\textwidth]{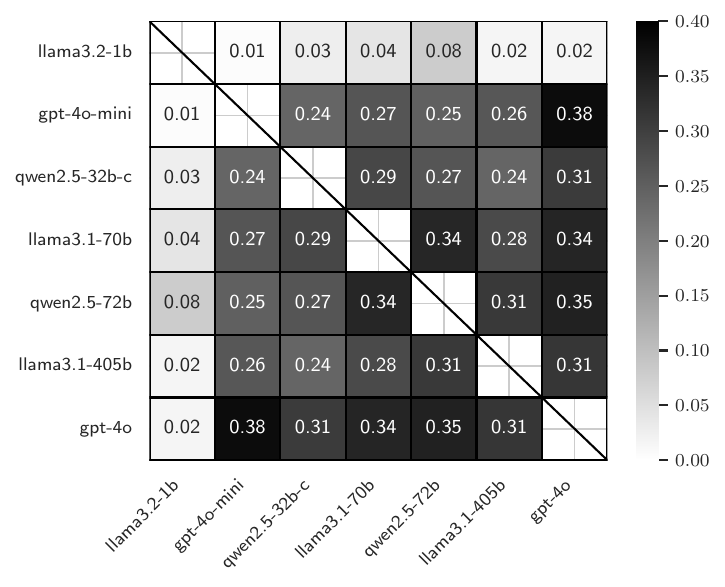}
        \caption{Incorrect Only}
        \label{fig:mmlu_mixed_incorrect}
    \end{subfigure}
    \hfill
    \begin{subfigure}[b]{0.3\textwidth}
        \centering
        \includegraphics[width=\textwidth]{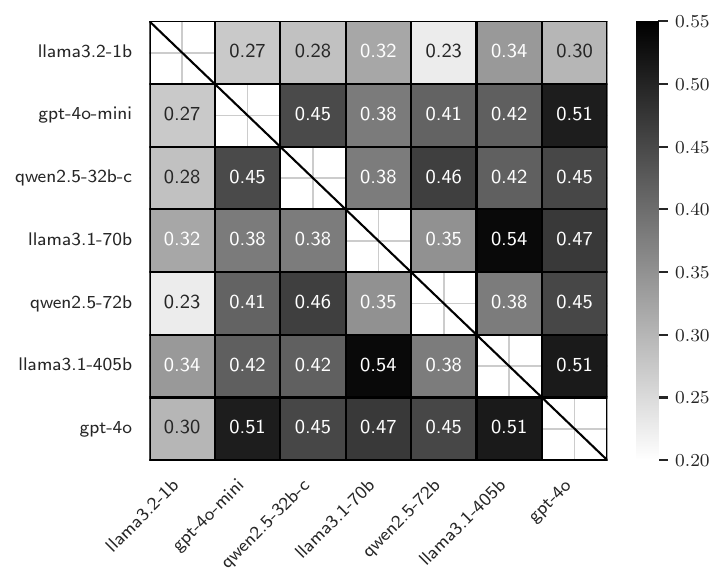}
        \caption{Correct Only}
        \label{fig:mmlu_mixed_correct}
    \end{subfigure}
    \hfill
    \begin{subfigure}[b]{0.3\textwidth}
        \centering
        \includegraphics[width=\textwidth]{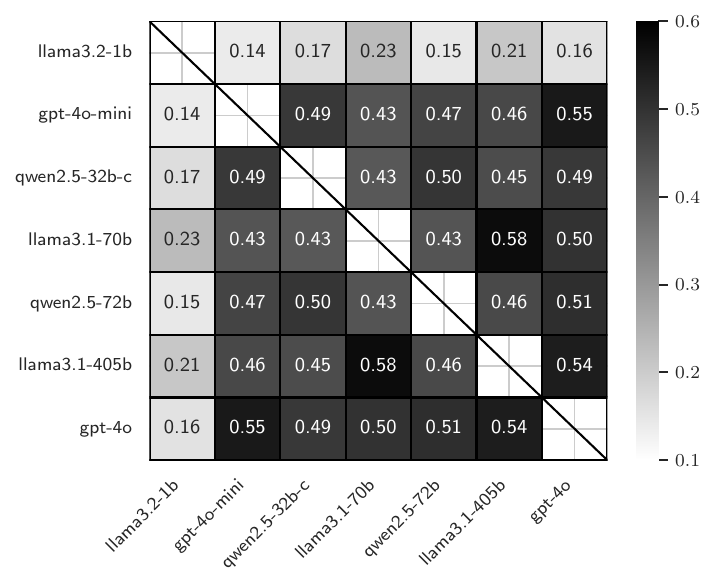}
        \caption{All}
        \label{fig:mmlu_mixed_all}
    \end{subfigure}

    \caption{MMLU: Kendall's $\tau$ rank correlations of Llama3, GPT-4o, and Qwen2.5 models ordered by size.}
    \label{fig:mmlu_mixed_correlations}
\end{figure}

\begin{figure}[h]
    \centering
    
    \begin{subfigure}[b]{0.3\textwidth}
        \centering
        \includegraphics[width=\textwidth]{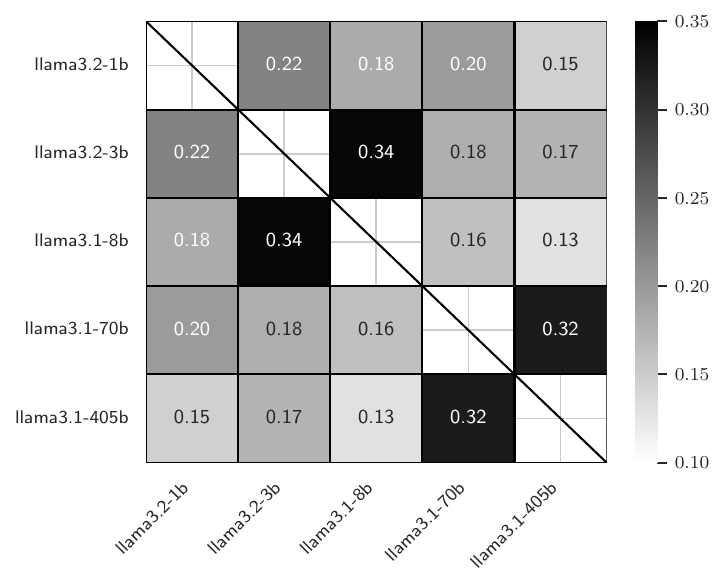}
        \caption{Incorrect Only}
        \label{fig:medmcqa_llama_incorrect}
    \end{subfigure}
    \hfill
    \begin{subfigure}[b]{0.3\textwidth}
        \centering
        \includegraphics[width=\textwidth]{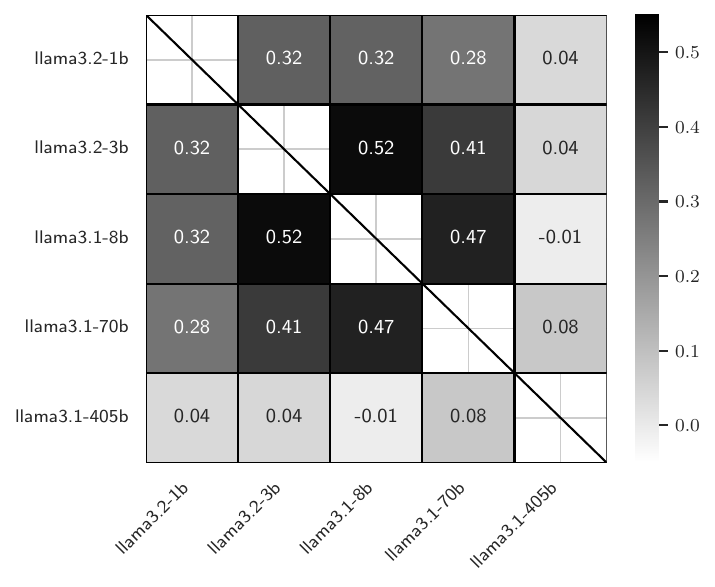}
        \caption{Correct Only}
        \label{fig:medmcqa_llama_correct}
    \end{subfigure}
    \hfill
    \begin{subfigure}[b]{0.3\textwidth}
        \centering
        \includegraphics[width=\textwidth]{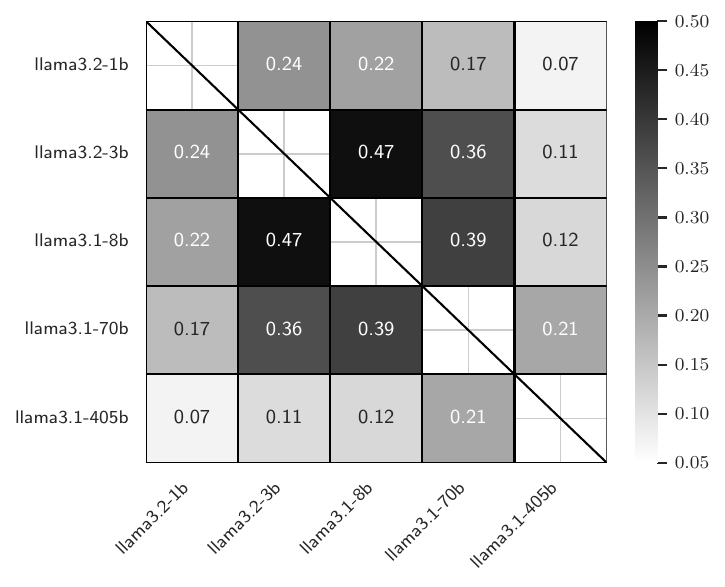}
        \caption{All}
        \label{fig:medmcqa_llama_all}
    \end{subfigure}

    \caption{MedMCQA: Kendall's $\tau$ rank correlations of Llama3 models ordered by size.}
    \label{fig:medmcqa_llama_correlations}
\end{figure}

\begin{figure}[t]
    \centering
    
    \begin{subfigure}[b]{0.3\textwidth}
        \centering
        \includegraphics[width=\textwidth]{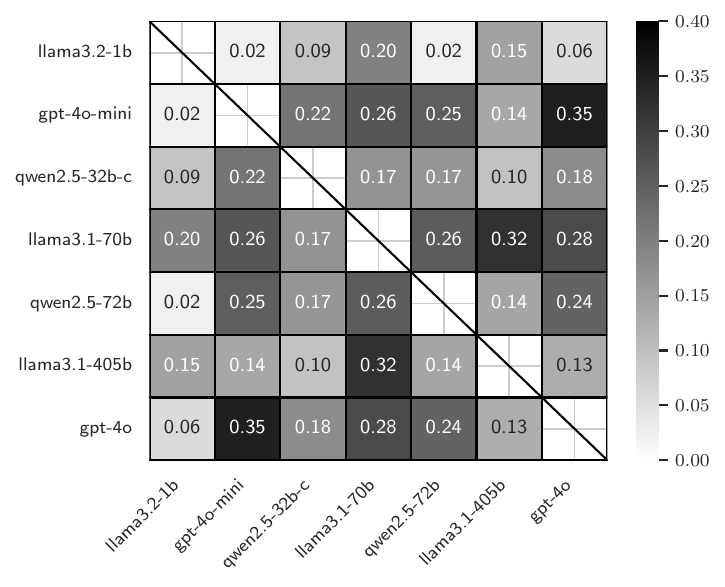}
        \caption{Incorrect Only}
        \label{fig:medmcqa_mixed_incorrect}
    \end{subfigure}
    \hfill
    \begin{subfigure}[b]{0.3\textwidth}
        \centering
        \includegraphics[width=\textwidth]{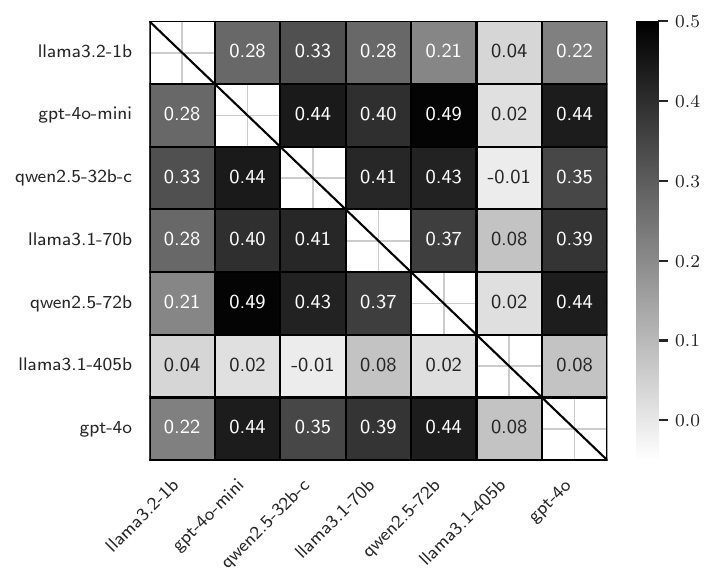}
        \caption{Correct Only}
        \label{fig:medmcqa_mixed_correct}
    \end{subfigure}
    \hfill
    \begin{subfigure}[b]{0.3\textwidth}
        \centering
        \includegraphics[width=\textwidth]{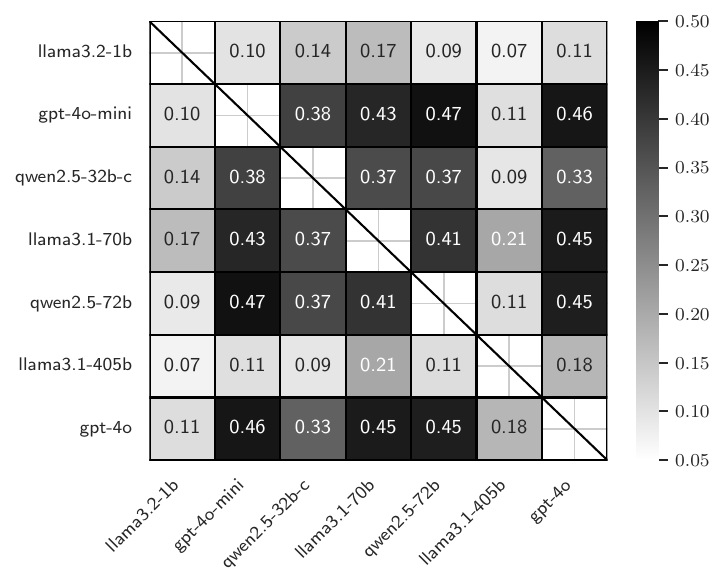}
        \caption{All}
        \label{fig:medmcqa_mixed_all}
    \end{subfigure}

    \caption{MedMCQA: Kendall's $\tau$ rank correlations of Llama3, GPT-4o, and Qwen2.5 models ordered by size.}
    \label{fig:medmcqa_mixed_correlations}
\end{figure}

\begin{figure}[t]
    \centering
    
    \begin{subfigure}[b]{0.3\textwidth}
        \centering
        \includegraphics[width=\textwidth]{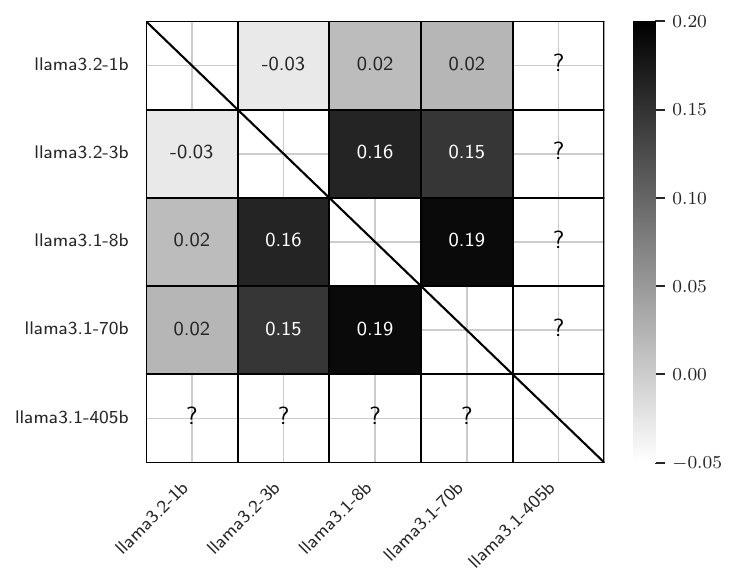}
        \caption{Incorrect Only}
        \label{fig:triviaqa_llama_incorrect}
    \end{subfigure}
    \hfill
    \begin{subfigure}[b]{0.3\textwidth}
        \centering
        \includegraphics[width=\textwidth]{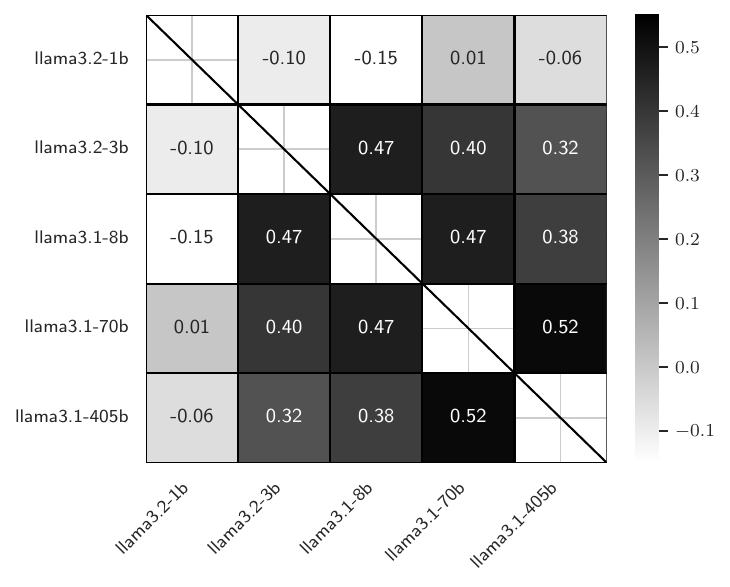}
        \caption{Correct Only}
        \label{fig:triviaqa_llama_correct}
    \end{subfigure}
    \hfill
    \begin{subfigure}[b]{0.3\textwidth}
        \centering
        \includegraphics[width=\textwidth]{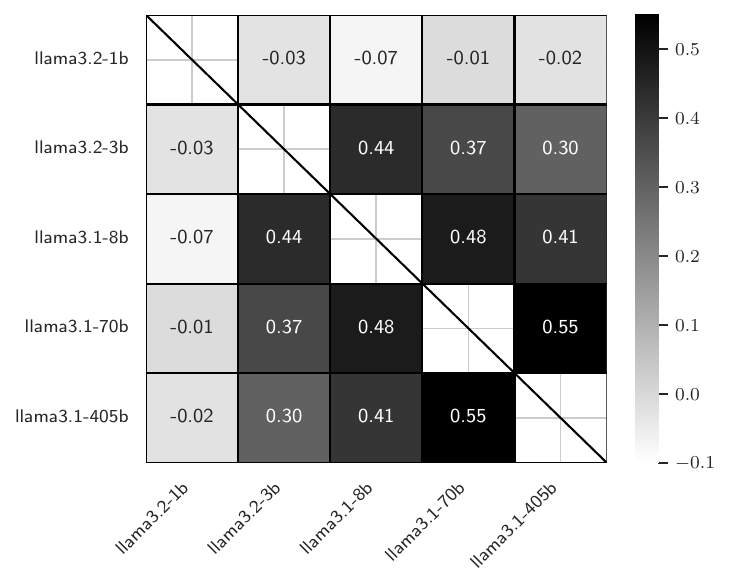}
        \caption{All}
        \label{fig:triviaqa_llama_all}
    \end{subfigure}

    \caption{TriviaQA: Kendall's $\tau$ rank correlations of Llama3 models ordered by size.}
    \label{fig:triviaqa_llama_correlations}
\end{figure}

\begin{figure}[t]
    \centering
    
    \begin{subfigure}[b]{0.3\textwidth}
        \centering
        \includegraphics[width=\textwidth]{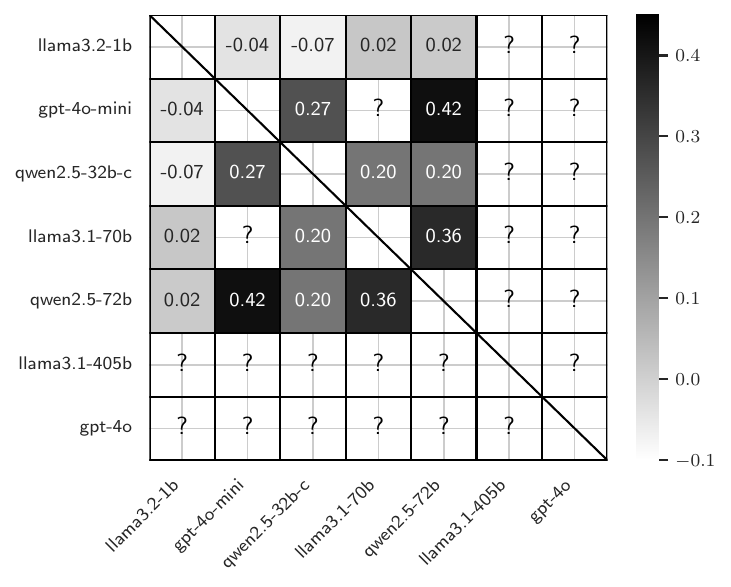}
        \caption{Incorrect Only}
        \label{fig:triviaqa_mixed_incorrect}
    \end{subfigure}
    \hfill
    \begin{subfigure}[b]{0.3\textwidth}
        \centering
        \includegraphics[width=\textwidth]{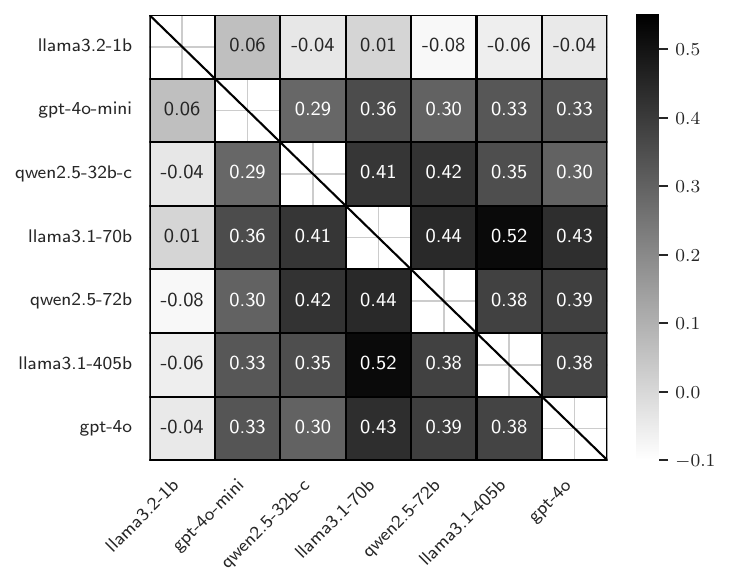}
        \caption{Correct Only}
        \label{fig:triviaqa_mixed_correct}
    \end{subfigure}
    \hfill
    \begin{subfigure}[b]{0.3\textwidth}
        \centering
        \includegraphics[width=\textwidth]{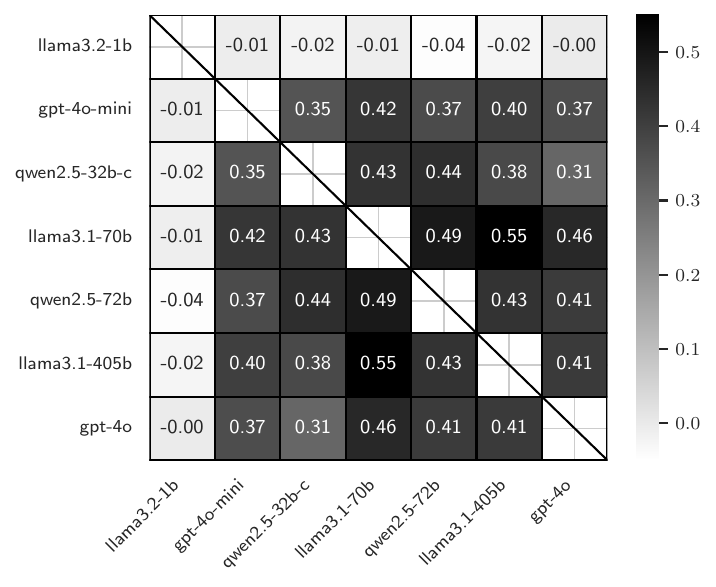}
        \caption{All}
        \label{fig:triviaqa_mixed_all}
    \end{subfigure}

    \caption{TriviaQA: Kendall's $\tau$ rank correlations of Llama3, GPT-4o, and Qwen2.5 models ordered by size.}
    \label{fig:triviaqa_mixed_correlations}
\end{figure}

\begin{figure}[t]
    \centering
    
    \begin{subfigure}[b]{0.3\textwidth}
        \centering
        \includegraphics[width=\textwidth]{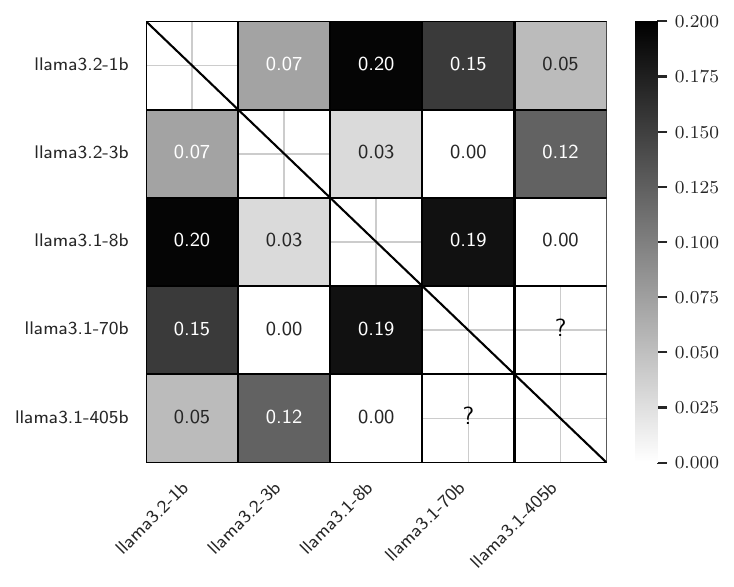}
        \caption{Incorrect Only}
        \label{fig:xsum_llama_incorrect}
    \end{subfigure}
    \hfill
    \begin{subfigure}[b]{0.3\textwidth}
        \centering
        \includegraphics[width=\textwidth]{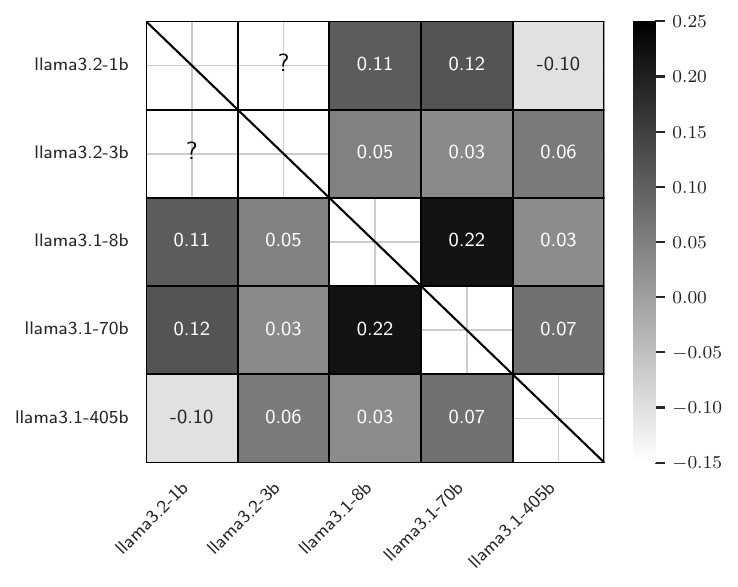}
        \caption{Correct Only}
        \label{fig:xsum_llama_correct}
    \end{subfigure}
    \hfill
    \begin{subfigure}[b]{0.3\textwidth}
        \centering
        \includegraphics[width=\textwidth]{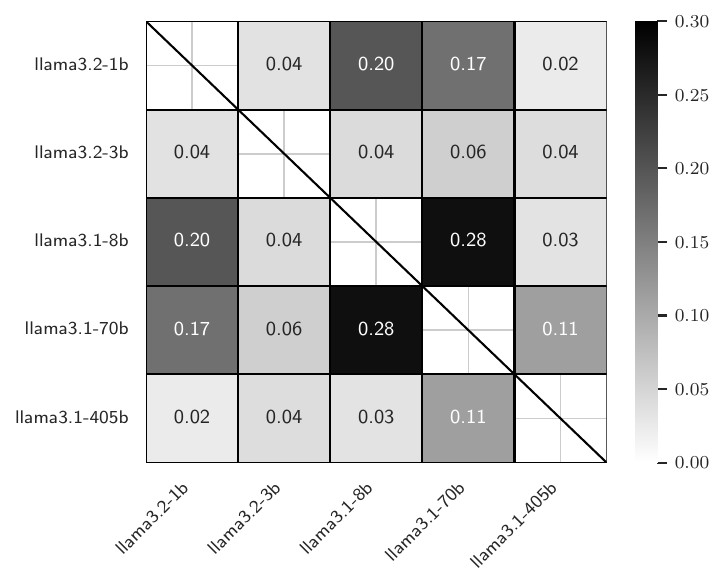}
        \caption{All}
        \label{fig:xsum_llama_all}
    \end{subfigure}

    \caption{XSum: Kendall's $\tau$ rank correlations of Llama3 models ordered by size.}
    \label{fig:xsum_llama_correlations}
\end{figure}

\begin{figure}[t]
    \centering
    
    \begin{subfigure}[b]{0.3\textwidth}
        \centering
        \includegraphics[width=\textwidth]{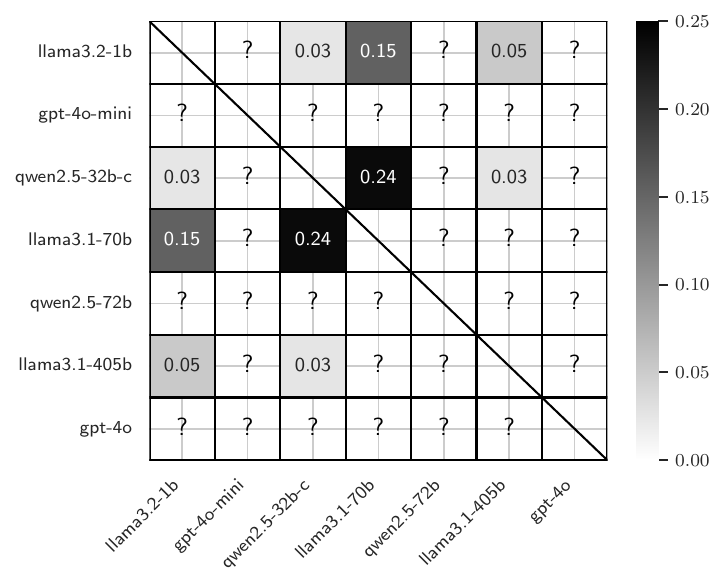}
        \caption{Incorrect Only}
        \label{fig:xsum_mixed_incorrect}
    \end{subfigure}
    \hfill
    \begin{subfigure}[b]{0.3\textwidth}
        \centering
        \includegraphics[width=\textwidth]{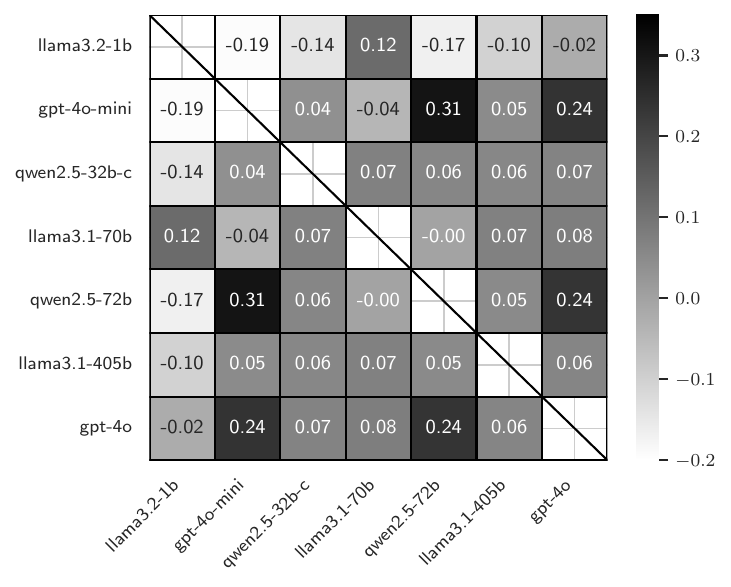}
        \caption{Correct Only}
        \label{fig:xsum_mixed_correct}
    \end{subfigure}
    \hfill
    \begin{subfigure}[b]{0.3\textwidth}
        \centering
        \includegraphics[width=\textwidth]{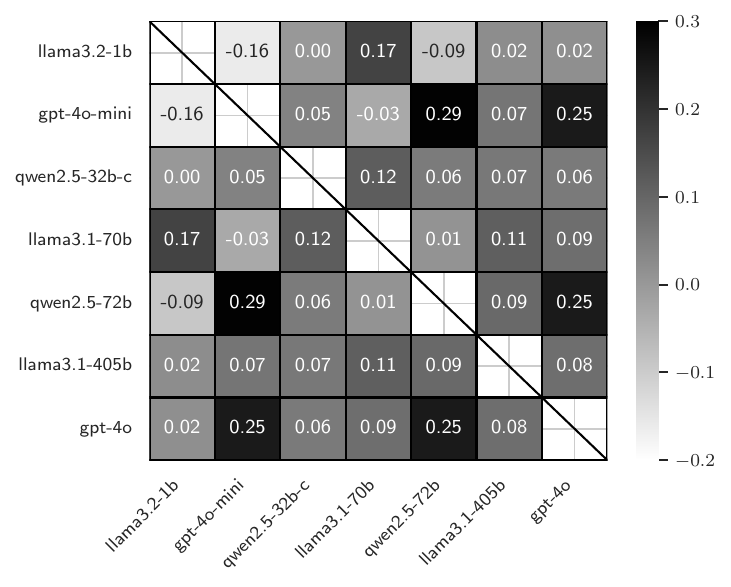}
        \caption{All}
        \label{fig:xsum_mixed_all}
    \end{subfigure}

    \caption{XSum: Kendall's $\tau$ rank correlations of Llama3, GPT-4o, and Qwen2.5 models ordered by size.}
    \label{fig:xsum_mixed_correlations}
\end{figure}

\begin{figure}[t]
    \centering
    
    \begin{subfigure}[b]{0.3\textwidth}
        \centering
        \includegraphics[width=\textwidth]{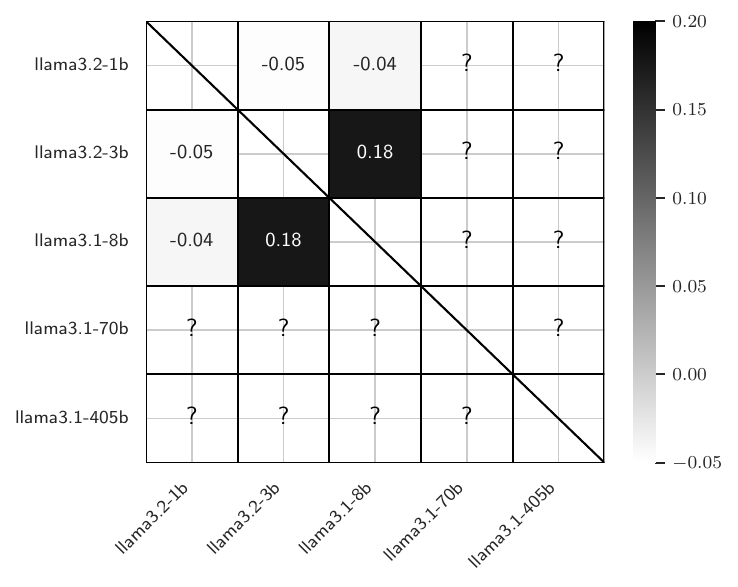}
        \caption{Incorrect Only}
        \label{fig:gsm8k_llama_incorrect}
    \end{subfigure}
    \hfill
    \begin{subfigure}[b]{0.3\textwidth}
        \centering
        \includegraphics[width=\textwidth]{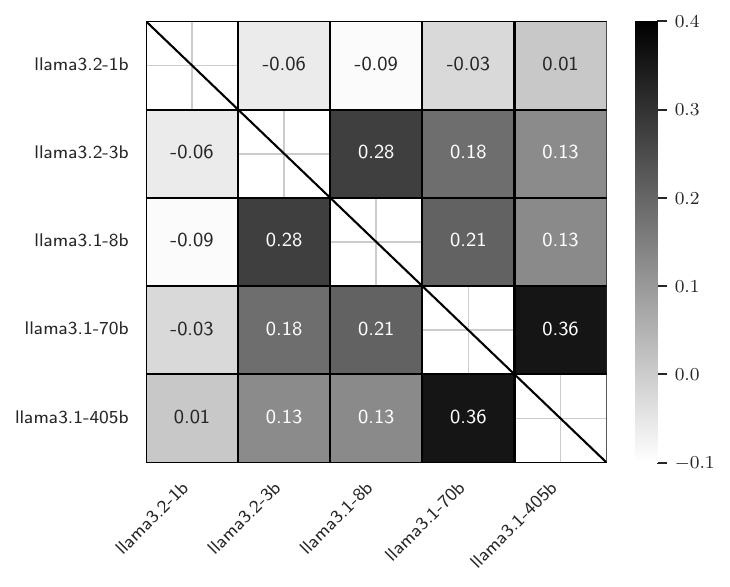}
        \caption{Correct Only}
        \label{fig:gsm8k_llama_correct}
    \end{subfigure}
    \hfill
    \begin{subfigure}[b]{0.3\textwidth}
        \centering
        \includegraphics[width=\textwidth]{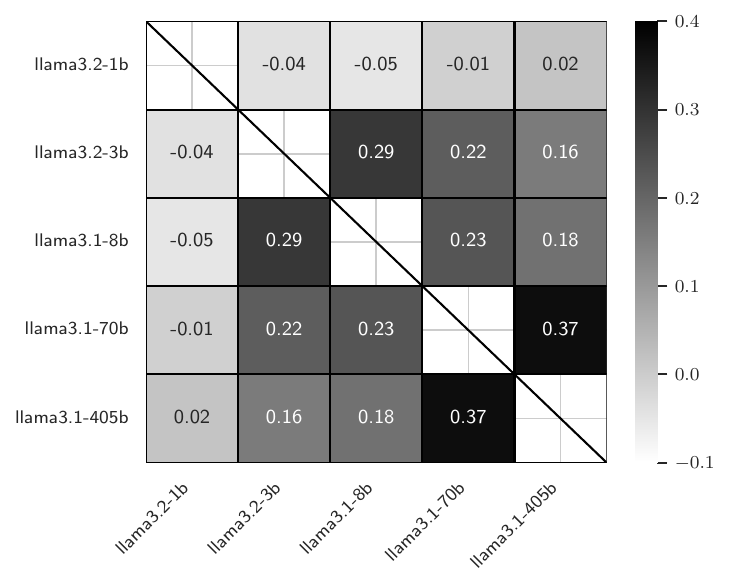}
        \caption{All}
        \label{fig:gsm8k_llama_all}
    \end{subfigure}

    \caption{GSM8K: Kendall's $\tau$ rank correlations of Llama3 models ordered by size.}
    \label{fig:gsm8k_llama_correlations}
\end{figure}

\begin{figure}[t]
    \centering
    
    \begin{subfigure}[b]{0.3\textwidth}
        \centering
        \includegraphics[width=\textwidth]{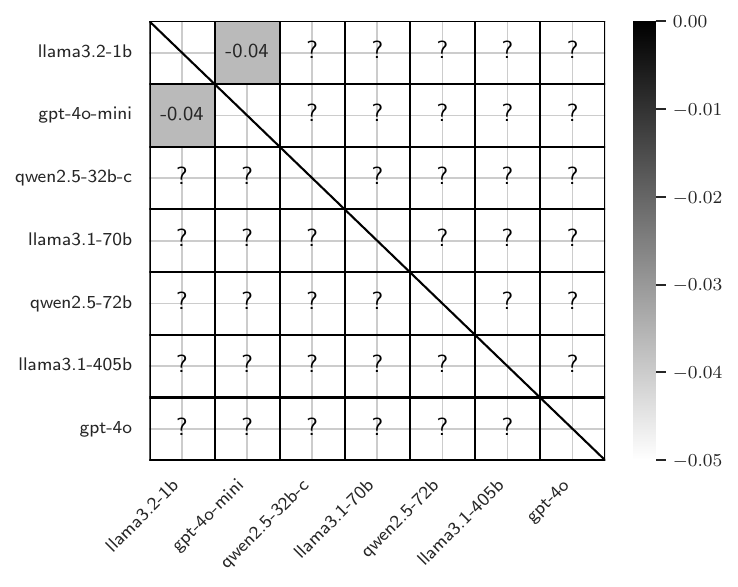}
        \caption{Incorrect Only}
        \label{fig:gsm8k_mixed_incorrect}
    \end{subfigure}
    \hfill
    \begin{subfigure}[b]{0.3\textwidth}
        \centering
        \includegraphics[width=\textwidth]{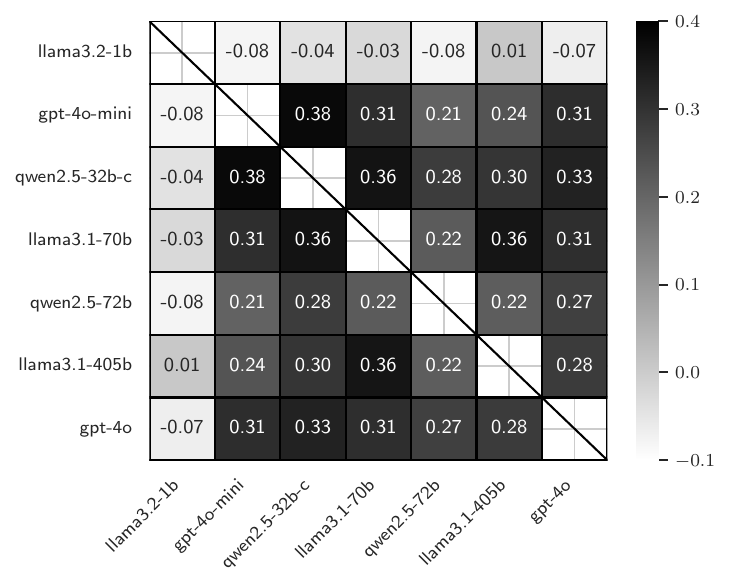}
        \caption{Correct Only}
        \label{fig:gsm8k_mixed_correct}
    \end{subfigure}
    \hfill
    \begin{subfigure}[b]{0.3\textwidth}
        \centering
        \includegraphics[width=\textwidth]{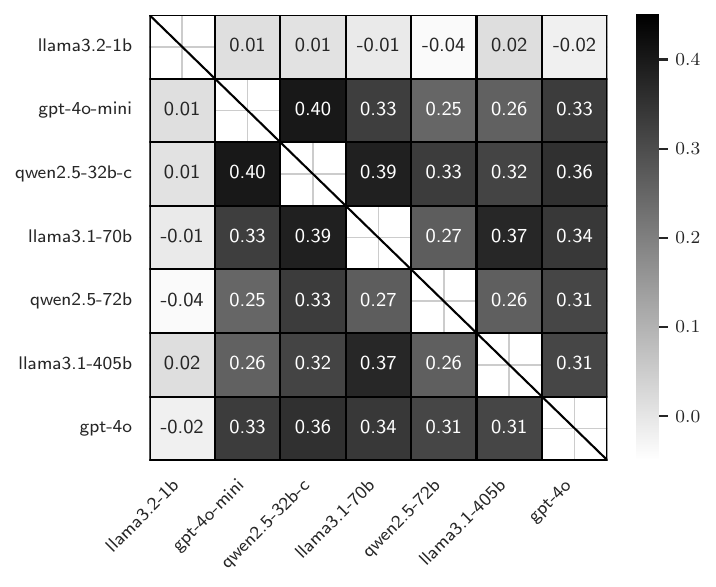}
        \caption{All}
        \label{fig:gsm8k_mixed_all}
    \end{subfigure}

    \caption{GSM8K: Kendall's $\tau$ rank correlations of Llama3, GPT-4o, and Qwen2.5 models ordered by size.}
    \label{fig:gsm8k_mixed_correlations}
\end{figure}

\begin{figure}[t]
    \centering
    
    \begin{subfigure}[b]{0.3\textwidth}
        \centering
        \includegraphics[width=\textwidth]{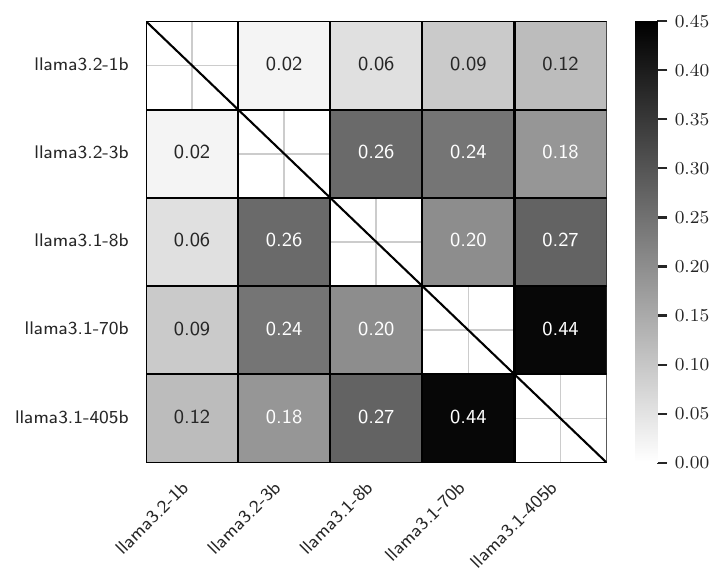}
        \caption{Incorrect Only}
        \label{fig:truthfulqa_llama_incorrect}
    \end{subfigure}
    \hfill
    \begin{subfigure}[b]{0.3\textwidth}
        \centering
        \includegraphics[width=\textwidth]{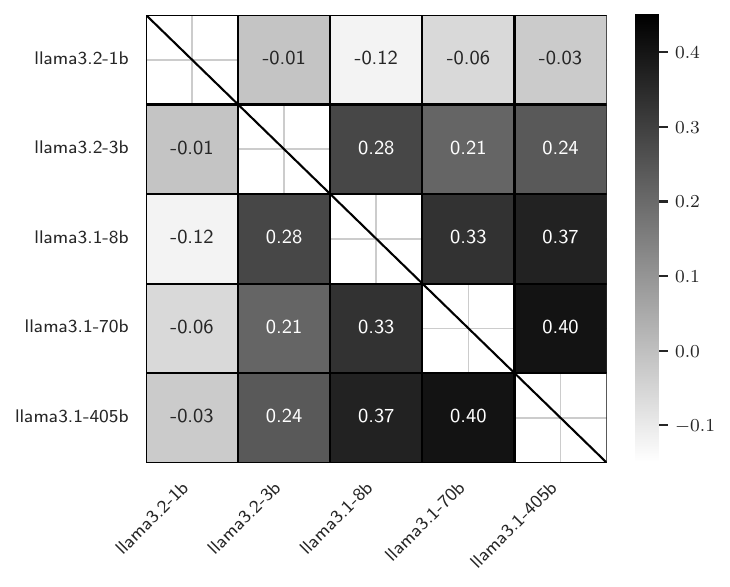}
        \caption{Correct Only}
        \label{fig:truthfulqa_llama_correct}
    \end{subfigure}
    \hfill
    \begin{subfigure}[b]{0.3\textwidth}
        \centering
        \includegraphics[width=\textwidth]{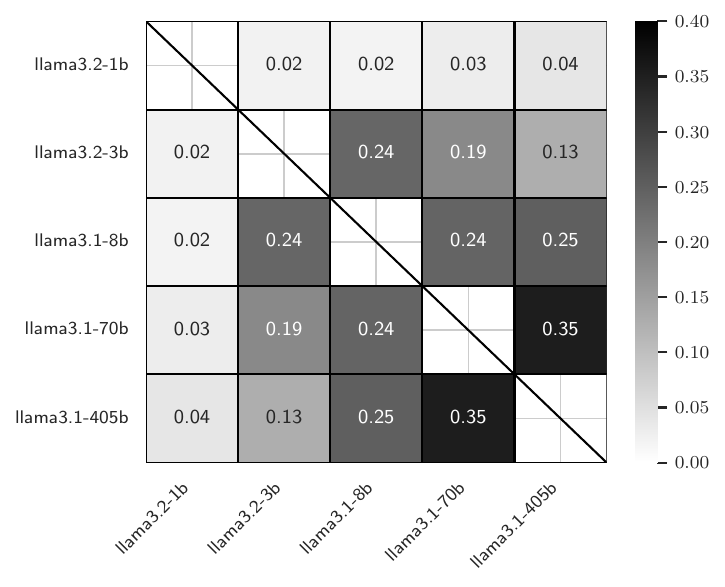}
        \caption{All}
        \label{fig:truthfulqa_llama_all}
    \end{subfigure}

    \caption{TruthfulQA: Kendall's $\tau$ rank correlations of Llama3 models ordered by size.}
    \label{fig:truthfulqa_llama_correlations}
\end{figure}

\begin{figure}[t]
    \centering
    
    \begin{subfigure}[b]{0.3\textwidth}
        \centering
        \includegraphics[width=\textwidth]{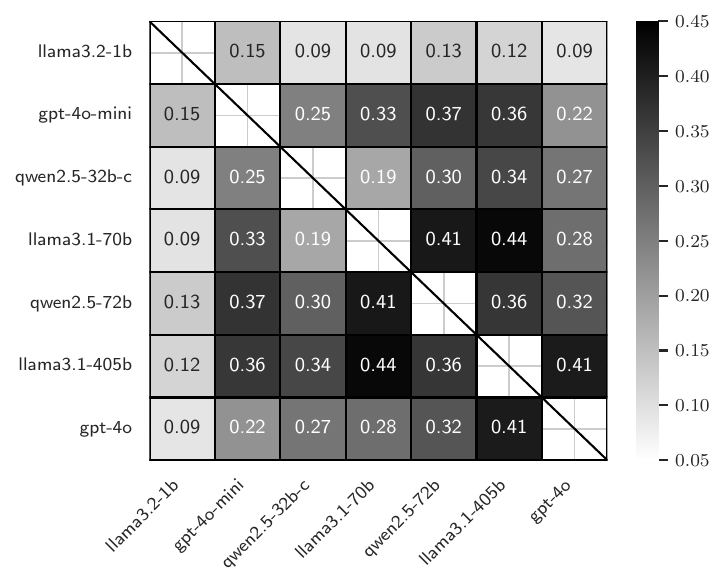}
        \caption{Incorrect Only}
        \label{fig:truthfulqa_mixed_incorrect}
    \end{subfigure}
    \hfill
    \begin{subfigure}[b]{0.3\textwidth}
        \centering
        \includegraphics[width=\textwidth]{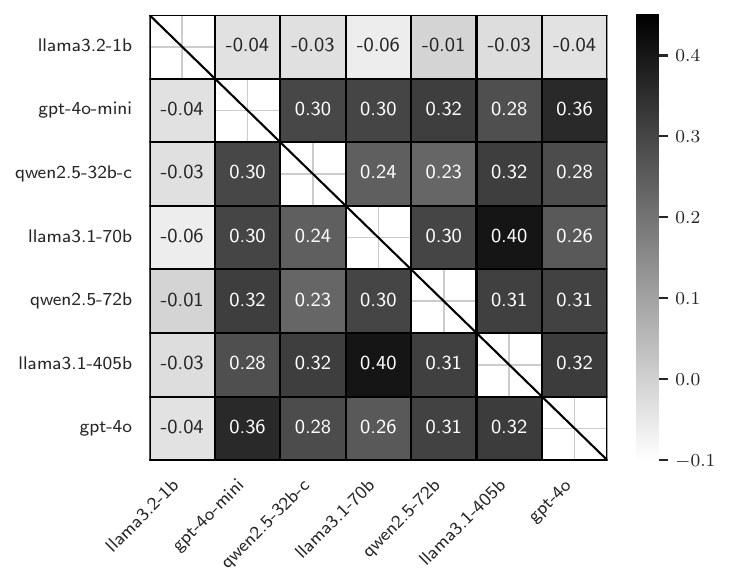}
        \caption{Correct Only}
        \label{fig:truthfulqa_mixed_correct}
    \end{subfigure}
    \hfill
    \begin{subfigure}[b]{0.3\textwidth}
        \centering
        \includegraphics[width=\textwidth]{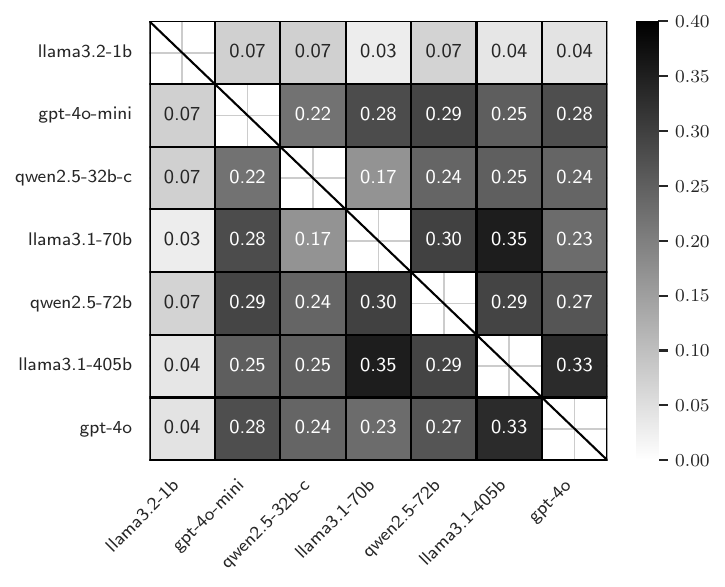}
        \caption{All}
        \label{fig:truthfulqa_mixed_all}
    \end{subfigure}

    \caption{TruthfulQA: Kendall's $\tau$ rank correlations of Llama3, GPT-4o, and Qwen2.5 models ordered by size.}
    \label{fig:truthfulqa_mixed_correlations}
\end{figure}

In this section, we verify the rank correlations between the confidence scores of different LLMs by recomputing them conditioned on both models answering correctly or incorrectly.

Figures \ref{fig:mmlu_llama_correlations}-\ref{fig:truthfulqa_mixed_correlations} extend Figure \ref{fig:kendalls_markov} by separately re-computing the rank correlation patterns on correctly and incorrectly answered queries. In addition, Table \ref{tbl:tau_recomputed} below shows the average rank correlations computed separately on the \textit{correct}, \textit{incorrect}, and \textit{all answers}, for each benchmark. We compute $\tau_{\text{inc}}$, $\tau_{\text{corr}}$, $\tau_{\text{all}}$ for each pair of models, as well as the rank correlations between these measurements across model pairs: $\tau_{\text{inc, corr}}$, $\tau_{\text{inc, all}}$, $\tau_{\text{corr, all}}$.

Note that since error rates are low for some models and benchmarks (see Table \ref{tab:overall_performance}), conditioning on incorrectly answered queries leaves only few observations for some model pairs. In Figures \ref{fig:mmlu_llama_correlations}-\ref{fig:truthfulqa_mixed_correlations}, we print ``?'' for rank correlations with sample size less than 50; we use the $n=50$ cut-off since it reduces the standard error for Kendall's $\tau$ to around $\sigma_\tau \leq 0.1$, based on a normal approximation.

\begin{table}[t]
\centering
\caption{
Rank correlations of calibrated confidences between Llama3 1B, 3B, 8B, 70B, and 405B models, computed separately for \textit{incorrectly} answered queries (``inc''), \textit{correctly} answered queries (``corr''), and \textit{all} queries (``all''): $\overline\tau$ is the average rank correlation across the 10 model pairs; $\tau_{a,b}$ is the rank correlation between data subsets (inc, corr, all) across model pairs; and $p_{a,b}$ is the $p$ value for $\tau_{a,b}$.
}
\label{tab:kendall-tau-stats}
\begin{tabular}{lccccccccc}
\toprule
\textbf{Benchmark} 
& \(\boldsymbol{\overline{\tau}_{\text{inc}}}\)
& \(\boldsymbol{\overline{\tau}_{\text{corr}}}\)
& \(\boldsymbol{\overline{\tau}_{\text{all}}}\)
& \(\boldsymbol{\tau_{\text{inc,corr}}}\)
& \(\boldsymbol{p_{\text{inc,corr}}}\)
& \(\boldsymbol{\tau_{\text{inc,all}}}\)
& \(\boldsymbol{p_{\text{inc,all}}}\)
& \(\boldsymbol{\tau_{\text{corr,all}}}\)
& \(\boldsymbol{p_{\text{corr,all}}}\) \\
\midrule
\textbf{MMLU} 
  & 0.199 & 0.396 & 0.385 
  & 0.505 & $<0.0001$ 
  & 0.717 & $<0.0001$
  & 0.686 & $<0.0001$ \\
\textbf{MedMCQA}
  & 0.174 & 0.295 & 0.271 
  & 0.324 & 0.0055 
  & 0.486 & $<0.0001$
  & 0.730 & $<0.0001$ \\
\textbf{TriviaQA}
  & 0.110 & 0.274 & 0.298 
  & 0.371 & 0.0014 
  & 0.413 & 0.0004
  & 0.819 & $<0.0001$ \\
\textbf{XSum}
  & 0.138 & 0.049 & 0.065 
  & 0.180 & 0.1284
  & 0.187 & 0.1148
  & 0.721 & $<0.0001$ \\
\textbf{GSM8K}
  & 0.199 & 0.169 & 0.201 
  & 0.425 & 0.0003
  & 0.467 & $<0.0001$
  & 0.921 & $<0.0001$ \\
\textbf{TruthfulQA}
  & 0.222 & 0.196 & 0.178 
  & 0.667 & $<0.0001$
  & 0.860 & $<0.0001$
  & 0.756 & $<0.0001$ \\
\bottomrule
\end{tabular}
\label{tbl:tau_recomputed}
\end{table}

\end{document}